\documentclass{article}

\usepackage{microtype}
\usepackage{graphicx}
\usepackage{subcaption}
\usepackage{booktabs}
\usepackage[table,x11names,dvipsnames]{xcolor}
\usepackage{enumitem}
\usepackage{float}
\usepackage{placeins}

\usepackage{multirow}

\usepackage{hyperref}

\usepackage[accepted]{icml2026}

\usepackage{amsmath}
\usepackage{amssymb}
\usepackage{mathtools}
\usepackage{amsthm}

\usepackage{color-edits}
\usepackage[capitalize,noabbrev]{cleveref}
\crefformat{equation}{(#2#1#3)}
\crefrangeformat{equation}{(#3#1#4) to (#5#2#6)}
\crefmultiformat{equation}{(#2#1#3)}{ and (#2#1#3)}{, (#2#1#3)}{ and (#2#1#3)}
\crefname{appendix}{Appendix}{Appendices}
\Crefname{appendix}{Appendix}{Appendices}

\usepackage{amsmath,amsfonts,bm}
\addauthor{ms}{magenta}

\newcommand{\pddpm}{p^{\textsc{ddpm}}}

\def\eqref#1{equation~\ref{#1}}

\def\1{\bm{1}}

\DeclareMathAlphabet{\mathsfit}{\encodingdefault}{\sfdefault}{m}{sl}
\SetMathAlphabet{\mathsfit}{bold}{\encodingdefault}{\sfdefault}{bx}{n}

\newcommand{\pdata}{p_{\rm{data}}}

\newcommand{\R}{\mathbb{R}}

\newcommand{\dd}{\mathrm{d}}

\theoremstyle{plain}
\newtheorem{theorem}{Theorem}[section]
\newtheorem{proposition}[theorem]{Proposition}

\theoremstyle{definition}

\theoremstyle{remark}
\newtheorem{remark}[theorem]{Remark}

\usepackage[textsize=tiny]{todonotes}

\icmltitlerunning{Diamond Maps: Stochastic Flow Maps}

\usepackage[framemethod=tikz]{mdframed}

\colorlet{mahoganytransp}{Salmon!15} 
\newmdenv[backgroundcolor=mahoganytransp, roundcorner=2pt, skipabove=3pt,
skipbelow=3pt,
linewidth=0pt, innertopmargin=6pt]{myframe}

\begin{document}

\twocolumn[
	\icmltitle{Diamond Maps: Efficient Reward Alignment via Stochastic Flow Maps}



	\icmlsetsymbol{equal}{*}
	\icmlsetsymbol{equaladvise}{\dag}


	\begin{icmlauthorlist}
		\icmlauthor{Peter Holderrieth}{mit,equal}
		\icmlauthor{Douglas Chen}{cmu,equal}
		\icmlauthor{Luca Eyring}{munich,equal}
		\icmlauthor{Ishin Shah}{cmu}
		\icmlauthor{Giri Anantharaman}{cmu}
		\icmlauthor{Yutong He}{cmu}
		\icmlauthor{Zeynep Akata}{munich}
		\icmlauthor{Tommi Jaakkola}{equaladvise,mit}
		\icmlauthor{Nicholas Matthew Boffi}{equaladvise,cmu}
        \icmlauthor{Max Simchowitz}{equaladvise,cmu}
	\end{icmlauthorlist}

    \icmlaffiliation{mit}{MIT CSAIL}
	\icmlaffiliation{cmu}{Carnegie Mellon University}
	\icmlaffiliation{munich}{TU Munich, Helmholtz Munich, MCML}
\icmlcorrespondingauthor{Peter Holderrieth}{phold@mit.edu}

	\icmlkeywords{Machine Learning, ICML}

	\vskip 0.3in
]



\printAffiliationsAndNotice{\icmlEqualContribution $\dag$ Equal advising}


\begin{abstract}
	Flow and diffusion models produce high-quality samples, but adapting them to user preferences or constraints post-training remains costly and brittle, a challenge commonly called reward alignment. We argue that efficient reward alignment should be a property of the generative model itself, not an afterthought, and redesign the model for adaptability. We propose \emph{Diamond Maps},  stochastic flow map models that enable efficient and accurate alignment to arbitrary rewards at inference time. Diamond Maps amortize many simulation steps into a single-step sampler, like flow maps, while preserving the stochasticity required for optimal reward alignment. This design makes search, Sequential Monte Carlo, and guidance scalable by enabling efficient and consistent estimation of the value function. Our experiments show that Diamond Maps can be learned efficiently via distillation from GLASS Flows, achieve stronger reward alignment performance, and scale better than existing  methods. Our results point toward a practical route to generative models that can be rapidly adapted to arbitrary preferences and constraints at inference time.
\end{abstract}
\section{Introduction}

Diffusion models \citep{song2020score,ho2020denoising} and flow matching \citep{lipman2022flow, albergo2023stochastic, liu2022flow} are the state-of-the-art generative models for image, video, audio, and molecules. These models are trained to return samples from a \emph{data distribution} $\pdata(z)$ over $z\in\mathbb{R}^d$. In many applications, however, we want more than faithful samples: we want to steer them to better satisfy a task objective, e.g. improved text–image alignment. Given a reward function $r(z)$, we seek samples that achieve high reward and are likely under the data distribution $\pdata$. This is commonly referred to as \textbf{reward alignment} \citep{uehara2024fine, uehara2025inferencetimealignmentdiffusionmodels}.

Existing reward alignment methods fall into two broad categories: (1) \emph{Reward fine-tuning} adds an extra training stage to specialize a generator to $r(z)$ \citep{fan2023dpok,wallace2024diffusion, uehara2024fine, domingo2024adjoint}. (2) \emph{Inference-time reward alignment} or \emph{guidance} modifies the sampling procedure while leaving the pretrained model unchanged \citep{uehara2025inferencetimealignmentdiffusionmodels, chung2022diffusion, he2023manifold, skreta2025feynman, singhal2025general}. Both approaches have drawbacks: reward fine-tuning is complex and costly, and must be repeated for each new reward, limiting flexibility. Guidance avoids retraining, but typically relies on approximations that are inaccurate and often increases sampling cost substantially. It remains an open question how to combine the merits of both approaches.

In this work, we choose a different approach, proposing instead to re-design models towards adaptability to arbitrary rewards.
The central question we address is:
\vspace{-0.5em}
\begin{center}
	\textit{How can we build a generative model that enables fast and accurate reward alignment at inference time?}
\end{center}
\vspace{-0.5em}

It is known that exact guidance relies on estimation of the \textbf{value function} associated with a reward \citep{uehara2025inferencetimealignmentdiffusionmodels}. Intuitively, the value function measures how much reward a given state will return in the future on average.
However, estimation of the value function is thought to be intractable with flow and diffusion models, as it requires many simulation steps with an SDE or ODE model \citep{song2020score, holderrieth2025glass}. We challenge this assumption in this work.

To this end, we propose \textbf{Diamond Maps}. Diamond Maps leverages   accelerated sampling methods \citep{song2023consistency,frans2024one}, namely flow maps  \citep{boffi2024flow,boffi2025buildconsistencymodellearning, geng2025mean}, which allow us to amortize many inference-steps of flow and diffusion models into a single neural network evaluation. In this work, we investigate how to \textbf{leverage flow maps to accurately and efficiently estimate the value function} and thereby enable exact guidance.

\begin{figure*}[ht]
	\centering
	\includegraphics[width=\textwidth]{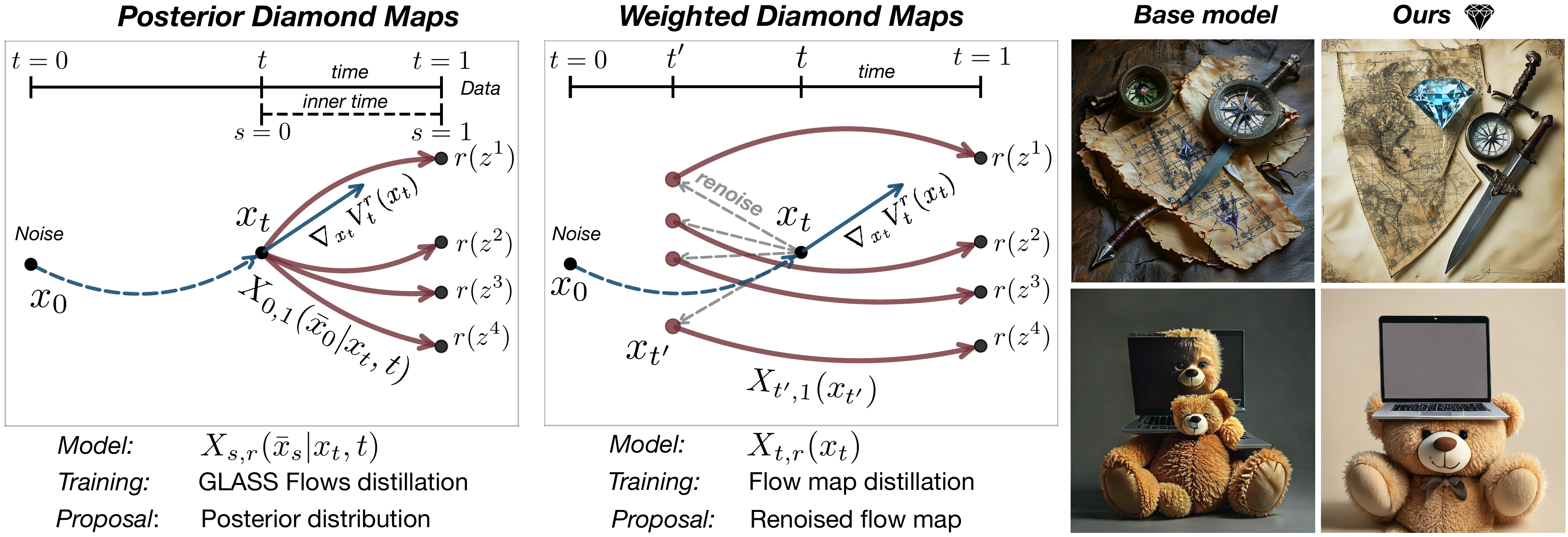}
	\caption{\label{fig:diamond_maps_overview}
\textbf{Overview of Diamond Maps.} Diamond Maps are stochastic flow maps that allow to perform one-step ``look-aheads'' of a flow trajectory (blue) to potential endpoints at time $1$ to evaluate a reward $r$. This allows for efficient exploration, search, and guidance. We propose 2 Diamond Map Designs. \emph{Posterior Diamond Maps} (left) distill GLASS Flows into a flow map $X_{s,s'}(\bar{x}|x_t,t)$ designed to sample exact samples from the posterior. \emph{Weighted Diamond Maps} (middle) allow to use standard flow maps by making them stochastic via a simple renoising procedure. Right: Improved image alignment with Diamond Maps. Prompts: "A diamond, a folded treasure map, a compass, and a dagger". "A laptop on top of a teddy bear".}
\vspace{-1em}
\end{figure*}

Our key technical observation is that this can be achieved via \textbf{\emph{stochastic} flow maps}. 
%
While standard flow maps are deterministic, recently \textbf{GLASS Flows} \citep{holderrieth2025glass} demonstrated that one can easily obtain stochastic transitions from a pre-trained flow model. This opens up the possibility of \textit{flow map distillation}, which we leverage in this work. In addition, we also present a method to make standard flow maps stochastic at inference time, thereby enabling efficient value function estimation.

We make the following contributions (see \cref{fig:diamond_maps_overview}):
\begin{enumerate}[itemsep=5pt, topsep=2pt, parsep=0pt, partopsep=0pt]
	\item We introduce \textbf{Diamond Maps},  stochastic flow maps for scalable reward alignment. We propose two designs.
	\item First, we present \textbf{Posterior Diamond Maps}, one-step posterior samplers distilled from GLASS Flows. These offer a simple and effective value function estimation.
	\item Second, we present \textbf{Weighted Diamond Maps}, an algorithm to turn standard flow maps into consistent value function estimators. This method allows to use off-the-shelf distilled models.
	\item We show that Posterior Diamond Maps  - despite only trained to sample from  posteriors - are also \textbf{one-step samplers of the time-reversal SDE} \citep{song2020score} enabling scalable proposals for search/SMC.
	\item We empirically demonstrate that Diamond Maps provide fast and high-quality reward alignment.
\end{enumerate}

\section{Background}
%
\subsection{Flow Matching}
\label{subsec:flow_diffusion_background}

We follow the flow matching framework \citep{lipman2022flow, albergo2023stochastic, liu2022flow}, though we remark that everything applies similarly to score-based diffusion models \citep{song2019generative, song2021sde}. We denote data points with vectors $z\in \R^d$ and the \emph{data distribution} with $\pdata$. Here, $t=0$ corresponds to noise ($\mathcal{N}(0,I_d)$) and $t=1$ corresponds to data $(\pdata)$. To noise data $z\in \R^d$, we use a \emph{Gaussian conditional probability path} $p_t(x_t|z)=\mathcal{N}(x_t;\alpha_tz,\sigma_t^2 I_d)$:
\begin{align}
\label{eq:gaussian_prob_path}
	x_t                             & =\alpha_tz+\sigma_t \epsilon, \quad \epsilon\sim\mathcal{N}(0,I_d), \\
\Leftrightarrow\quad p_t(x_t|z) & =\mathcal{N}(x_t;\alpha_tz,\sigma_t^2 I_d),
\end{align}
where $\alpha_t,\sigma_t\geq 0$ are \emph{schedulers}  with $\alpha_0=\sigma_1=0$ and $\alpha_1=\sigma_0=1$ and $\alpha_t$ (resp. $\sigma_t$) strictly monotonically increasing (resp. decreasing). Making $z\sim \pdata$ random, this induces a corresponding \emph{marginal probability path} $p_t(x_t)=\mathbb{E}_{z\sim \pdata}[p_t(x_t|z)]$ which interpolates Gaussian noise $p_0=\mathcal{N}(0,I_d)$ and data $p_1=\pdata$. The \textbf{flow matching posterior} is defined as
\begin{align}
\label{eq:fm_posterior}
p_{1|t}
(z|x_t)=\frac{p_t(x_t|z)\pdata(z)}{p_t(x_t)}
\end{align}
and describes the distribution of clean data $z$ given noisy points $x_t$. Flow matching models learn the \emph{marginal vector field} $u_t(x_t)= \mathbb{E}_{z\sim p_{1|t}(\cdot|x)}[u_t(x|z)]$ where $u_t(x_t|z)$ is the conditional vector field (see \cref{subsec:details_flow_matching} for formula). Simulating an ODE with the marginal vector field from initial Gaussian noise leads to a trajectory whose marginals are $p_t$:
\begin{align}
	\label{eq:ode_sampling}
	x_0\sim p_0,\quad \frac{\dd}{\dd t}x_t=u_t(x_t)\quad \Rightarrow \quad x_t\sim p_t
\end{align}
In particular, $X_1\sim \pdata$ returns a sample from the desired distribution. In this work, we also use the alternative parameterization of the vector field $u_t(x)$ given by the \emph{denoiser} $D_t$, i.e. the expectation of the posterior:
\begin{align}
\label{eq:denoiser}
D_t(x) = \int z p_{1|t}(z|x) \dd z = \frac{\sigma_t u_t(x_t)-\dot{\sigma}_tx_t}{\dot{\alpha}_t\sigma_t-\alpha_t\dot{\sigma}_t}
\end{align}

\subsection{Flow Maps}

Flow matching requires the step-wise simulation of an ODE, a computationally expensive procedure. To address this, flow maps \citep{boffi2024flow,boffi2025buildconsistencymodellearning, geng2025mean} can be used that aim to amortize many simulation steps into a single neural network evaluation.  Specifically, the \textbf{flow map} $X_{t,t'}(x_t)$ $(x_t\in \R^d, 0\leq t\leq t' \leq 1)$ is defined as the function that maps from time $t$ to time $t'$ along ODE trajectories, i.e. if $(x_t)_{0\leq t\leq 1}$ is an ODE trajectory in \cref{eq:ode_sampling}, then $X_{t,t'}(x_t)=x_{t'}$.
Hence, learning $X_{t,t'}$ directly can significantly save inference-time compute. One can then sample from a flow map in a single step by drawing $x_0\sim\mathcal{N}(0,I_d)$ and setting $x_1=X_{0,1}(x_0)$, which by construction satisfies $x_1 \sim p_{\text{data}}$.

A diversity of methods have been proposed to learn flow maps either by distillation from existing flow and diffusion models or from scratch via \emph{self-distillation} \citep{song2023consistency,boffi2024flow, boffi2025buildconsistencymodellearning, geng2025mean}.
We note that \textbf{our method can be used for any distillation method}. As an example, we use the \emph{Lagrangian} loss \citep{boffi2024flow,boffi2025buildconsistencymodellearning, zhou2025terminal, tong2025flow} given by:
\begin{align}
\label{eq:lag_fmap}
\mathcal{L}_{\text{Lag}}(\theta)&=\mathbb{E}\left[\|\partial_{t'}X_{t,t'}^\theta(x_t)-\text{sg}(u_{t'}(X_{t,t'}(x_t)))\|^2\right]
\end{align}
where the expectation is over $z\sim \pdata,x_t\sim p_t(\cdot|z)$ and ``$\text{sg}$'' denotes the stop-gradient operator.

\section{Value function estimation via Stochastic Flow Maps}
\label{subsec:reward_adaptation_background}

In many applications, the distribution $\pdata$ that a flow matching model aims to sample from serves only as a \emph{prior} distribution. Often, we are given an objective function $r:\R^d\to\R$ called the \textbf{reward function} that we aim to maximize. This could be human preferences, constraints, inpainting, and many other tasks. In this setting, the pre-trained generative model allows us to ``regularize'' our search space. While some works keep the goal vague as ``maximizing $r$ while staying on the data manifold'', it is commonly formalized as sampling from the \textbf{reward-tilted distribution}
\begin{align}
	\label{eq:reward_tilted_distribution}
	p^r(z)=\frac{1}{Z}\pdata(z)\exp(r(z)).
\end{align}
A central object for inference-time reward alignment is the \textbf{value function} \citep{uehara2024fine,uehara2025inferencetimealignmentdiffusionmodels} given by
\begin{align}
\label{eq:value_function}
	V_t(x_t) = \log \mathbb{E}_{z\sim p_{1|t}(\cdot|x_t)}[\exp(r(z))].
\end{align}
Intuitively, the value function allows to \emph{evaluate} states $x_t$ based on the expected reward in the future. The form \(\log \mathbb{E}[\exp(\cdot)]\), is not arbitrary but is inherent to the construction of flow and diffusion models as denoising a Gaussian kernel $p_t(x|z)$: the value function satisfies (see \cref{appendix:guided_marginal_vector_field})
\begin{align}
\label{eq:value_function_log_ll_ratio}
\exp(V_t(x_t))&\propto \frac{p_t^r(x_t)}{p_t(x_t)}
\end{align}
where $p_t^r(x_t)=\mathbb{E}_{z\sim p^r}[p_t(x_t|z)]$ is the noised reward-tilted distribution. This expression measures how much more likely a \emph{noisy} state $x_t$ is under $p^r$ compared to $\pdata$ after noising. The value function $V_t(x_t)$ is the core object that needs to be estimated for most reward alignment methods that target to sample from $p^r$, as we  illustrate next.

\textbf{Guidance.} A natural approach to sample from $p^r(z)$ is to estimate the corresponding flow matching model $u_t^r(x)$\footnote{replace $\pdata$ with $p^r$ in equ. \cref{eq:fm_posterior} and the definition of the marginal vector field}. This is called  \textbf{(gradient) guidance}. One can show that $u_t^r$ has the following formula (see \cref{appendix:guided_marginal_vector_field} for a proof):
\begin{align}
\label{eq:guided_marginal_vector_field}
u_t^r(x)&=u_t(x)+b_t\nabla_{x_t}V_t(x_t),
\end{align}
where $u_t(x)$ is the marginal vector field of the original distribution (non-tilted), and where $b_t=\sigma_t^2\frac{\dot{\alpha}_t}{\alpha_t}-\dot{\sigma}_t\sigma_t$. As we know $u_t(x)$, accurate guidance is equivalent to accurate estimation of the gradient $\nabla_{x_t}V_t(x_t)$ of the value function.

\textbf{Sequential Monte Carlo (SMC) and search} methods evolve a population of particles $x_t^1,\cdots, x_t^K$ from $t=0$ to $t=1$ and assign each of them a weight $w_t^1,\cdots, w_t^K$. If $x_t^i\sim p_t$, then by equation \cref{eq:value_function_log_ll_ratio} setting $w_t^i=\exp(V_t(x_t^i))$ converts a population of particles from $p_t$ to a population of particles from $p_t^r$. Therefore, accurate and efficient estimation of the value function $V_t(x_t)$ enables optimal filtering of particles, facilitating search over the support of the prior.

\textbf{Denoiser approximation.} For standard flow and diffusion models the value function is unfortunately \emph{intractable}. The reason for this is that sampling from the posterior $p_{1|t}$ requires simulating a trajectory of an SDE \citep{song2020score, ho2020denoising} or ODE \citep{holderrieth2025glass}, which is too expensive. For this reason, most methods revert to a simple approximation of the posterior $p_{1|t}$ via the denoiser
\begin{align}
\label{eq:denoiser_approximation}
	V_t(x_t) \approx r(D_t(x_t))
\end{align}
However, it is well-known that this approximation is highly biased, e.g. leading to the so-called \emph{Jensen gap} in the context of linear inverse problems \citep{chung2022diffusion}. This makes many inference-time adaptation methods inaccurate, and limits applicability to simple rewards.

\textbf{Our approach.} In this work, we revisit the value function estimation problem. Instead of reverting to approximations, our goal is to find both (1) consistent (i.e. statistically accurate) and (2) efficient estimators using flow maps. Current sampling with flow maps is \emph{deterministic}, the next point $x_{t'}=X_{t,t'}(x_t)$ is completely determined by $x_t$. However, to truly estimate the value function, we need to take into account \emph{all} possible futures. In other words, we \textbf{require stochasticity for exploration}. This motivates our approach:
\begin{myframe}
	\textbf{Goal 1.}
	Estimate the value function $V_t(x_t)$ and its gradient $\nabla_{x_t}V_t(x_t)$ efficiently at inference-time for arbitrary rewards by constructing stochastic flow maps.
\end{myframe}
Further, we also construct stochastic flow maps for another reason: SMC and search methods construct  search trees by stochastic transitions allowing to explore different branches. We want to construct search trees more efficiently:
\begin{myframe}
	\textbf{Goal 2.}
	  Construct stochastic flow maps that enable fast sampling of stochastic Markov transitions for efficient search and SMC.
\end{myframe}
Next, we present 2 stochastic flow map designs: (1) \emph{Posterior Diamond Maps} and (2) \emph{Weighted Diamond Maps}.

\section{Posterior Diamond Maps}
In this section, we present \emph{Posterior Diamond Maps}, a stochastic flow map model designed to be efficiently adaptable to a wide range of rewards at inference time. A \textbf{Posterior Diamond Map} model $X_{s,s'}(\bar{x}_s|x_t,t)$ is a flow map designed to sample from the posterior distribution $p_{1|t}(\cdot|x_t)$ given $x_t$ (\cref{eq:fm_posterior}). Note that $x_t$ is fixed for this flow map. Instead, the times $0\leq s\leq s'\leq 1$ refer to an ``inner time axis'' that describes the flow evolving from initial Gaussian noise to a sample from $p_{1|t}(\cdot|x_t)$. We sample from this model via 
\begin{align}
\bar{x}_0\sim\mathcal{N}(0,I_d)\quad \Rightarrow \quad X_{0,1}(\bar{x}_0|x_t,t)\sim p_{1|t}(\cdot|x_t)
\end{align}
This enables efficient value function estimation:
\begin{myframe}
\begin{proposition}
\label{prop:posterior_diamond_map_value_function_estimation}
For $\bar{x}_0^k\sim \mathcal{N}(0,I_d)$ and $z^k=X_{0,1}(\bar{x}_0^k|x_t,t)$, the following are consistent estimators of the value function and its gradient:
\begin{align}
\label{eq:posterior_diamond_map_consistent_estimator}
	V_t(x_t)&\approx \log \frac{1}{N}\sum\limits_{k=1}^{K}\exp(r(z^k))\\
\label{eq:posterior_diamond_map_grad_consistent_estimator}
\nabla_{x_t}V_t(x_t)&\approx\sum\limits_{k=1}^{K}\frac{\exp(r(z^k))}{\sum\limits_{j=1}^{K}\exp(r(z^j))}\nabla_{x_t}r(z^k)
\end{align}
\end{proposition}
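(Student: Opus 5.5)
The plan is to combine the strong law of large numbers with the continuous mapping theorem, taking $N=K$ in the normalization. Fix $x_t$ and $t$. By construction of the Posterior Diamond Map, $\bar{x}_0^k\sim\mathcal{N}(0,I_d)$ i.i.d.\ implies $z^k=X_{0,1}(\bar{x}_0^k|x_t,t)$ are i.i.d.\ samples from $p_{1|t}(\cdot|x_t)$. Assume $\mathbb{E}_{z\sim p_{1|t}(\cdot|x_t)}[\exp(r(z))]<\infty$. The strong law of large numbers then gives
\begin{align*}
\frac{1}{K}\sum_{k=1}^K \exp(r(z^k)) \to \mathbb{E}_{z\sim p_{1|t}(\cdot|x_t)}[\exp(r(z))] = \exp(V_t(x_t))
\end{align*}
almost surely, using \cref{eq:value_function}. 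The limit is strictly positive and $\log$ is continuous on $(0,\infty)$, so the continuous mapping theorem yields almost-sure convergence of \cref{eq:posterior_diamond_map_consistent_estimator} to $V_t(x_t)$. The estimator is biased for finite $K$ (by Jensen's inequality) but consistent, which is all that is claimed.

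For the gradient, the key point is to read $\nabla_{x_t} r(z^k)$ as the pathwise derivative through the map: $\nabla_{x_t}[r(X_{0,1}(\bar{x}_0^k|x_t,t))]$ with $\bar{x}_0^k$ held fixed. This is a reparameterization argument. Write $F(x_t)=\mathbb{E}_{\bar{x}_0\sim\mathcal{N}(0,I_d)}[\exp(r(X_{0,1}(\bar{x}_0|x_t,t)))]=\exp(V_t(x_t))$. I would assume $X_{0,1}$ and $r$ are $C^1$ and that the gradient of the integrand admits an integrable dominating function locally uniformly in $x_t$. Dominated convergence then lets us differentiate under the integral:
\begin{align*}
\nabla_{x_t}F(x_t)=\mathbb{E}_{\bar{x}_0}\big[\exp(r(z))\,\nabla_{x_t} r(z)\big], \qquad z=X_{0,1}(\bar{x}_0|x_t,t).
\end{align*}
Since $\nabla V_t=\nabla F/F$, the proposed estimator is the ratio of two sample means. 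Its numerator $\frac{1}{K}\sum_k \exp(r(z^k))\nabla_{x_t}r(z^k)$ converges almost surely to $\nabla F(x_t)$, and its denominator $\frac{1}{K}\sum_j\exp(r(z^j))$ converges to $F(x_t)>0$. The factors of $1/K$ cancel in the ratio, and a second application of the continuous mapping theorem gives almost-sure convergence to $\nabla_{x_t}V_t(x_t)$. This is the usual self-normalized importance-sampling consistency argument.

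The main obstacle is justifying the interchange of gradient and expectation. It needs regularity and integrability of $\exp(r)\nabla r$ along the map, together with the exactness assumption that the Diamond Map samples $p_{1|t}(\cdot|x_t)$ for all $x_t$ in a neighbourhood; exactness at a single point is not enough to differentiate. I would state these as standing assumptions rather than derive them. I would also remark that with an imperfectly trained map, consistency holds for the value function of the map's induced posterior, not of the true one.
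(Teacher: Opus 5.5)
Your argument is correct. For the value function it is the paper's argument made explicit. The paper only asserts that $\log\frac{1}{K}\sum_k\exp(r(z^k))$ over exact posterior samples is consistent. Your strong law plus continuity of $\log$ at the strictly positive limit is the justification it omits.

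For the gradient you reverse the paper's order of operations. The paper \emph{estimates, then differentiates}. It takes the finite-$K$ log-mean-exp estimator, in which each $z^k=X_{0,1}(\bar{x}_0^k|x_t,t)$ depends on $x_t$ through the reparameterized map, and differentiates it termwise to get the softmax-weighted sum. This shows the formula is exactly what backpropagation through the value estimator returns. It does not by itself prove consistency, because almost-sure convergence of $\hat V_K(x_t)$ at each $x_t$ implies nothing about convergence of $\nabla_{x_t}\hat V_K$.

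You instead \emph{differentiate, then estimate}. You write $\exp(V_t(x_t))$ as an expectation over $\bar{x}_0$ and pass $\nabla_{x_t}$ inside by dominated convergence. You then apply the strong law separately to numerator and denominator, which is the standard self-normalized importance sampling argument. You land on the same estimator but actually establish the consistency claim, at the price of the $C^1$ and domination assumptions you list. Your remark that the map must sample $p_{1|t}(\cdot|x_t)$ exactly on a neighborhood of $x_t$, not only at the point, is precisely the hypothesis the paper's gradient claim needs but leaves unstated.
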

\end{myframe}
The proof is simple and follows  and the reparameterization trick \citep{kingma2013auto} (see \cref{proof:posterior_diamond_map_value_function_estimation}). In \cref{alg:guidance_with_diamond_maps}, we present how to perform guidance with Posterior Diamond Maps and in \cref{alg:diamond_flow_smc} we present SMC. In the remainder of this section, we describe training of Posterior Diamond Maps in \cref{subsec:training_posterior_diamond_maps} and sampling in \cref{subsec:sampling_posterior_diamond_maps}.

\begin{algorithm}[tb]
\caption{Guidance with Posterior Diamond Maps}
\label{alg:guidance_with_diamond_maps}
\begin{algorithmic}[1]
		\STATE \textbf{Require:} Posterior Diamond Map  $X_{s,s'}^\theta(\bar{x}_s|x_t,t)$, pre-trained flow model $u_t(x)$, $N$ simulation steps, $K$ Monte Carlo samples, differentiable reward $r$,
		\STATE \textbf{Init: }$x_0\sim \mathcal{N}(0,I_d)$
		\STATE \textbf{Set }$h\leftarrow 1/N, t\leftarrow 0$
		\FOR{$n = 0$ to $N-1$}
		\FOR{$k = 1$ to $K$}
		\STATE $z^i\leftarrow X_{0,1}^\theta(\bar{x}_0^k|x_t,t)$ \quad ($\bar{x}_0^k\sim \mathcal{N}(0,I_d)$)
		\STATE $\nabla_{x_t}r(z^k)\leftarrow r(z^i).\text{backward}(x_t)$
		\ENDFOR
		\STATE $\nabla_{x_t}V_t(x_t)\leftarrow \sum_{k}\text{softmax}(r(z^{j})_{1\leq j\leq K})[k]\nabla_{x_t}r(z^k)$
		\STATE $u_t^r\leftarrow u_t(x)+b_t\nabla_{x_t}V_t(x_t)$
		\STATE $X_{t+h}\leftarrow X_{t}+hu_t^r,\quad  t\leftarrow t+h$
		\ENDFOR
		\STATE \textbf{Return: }$X_1$
	\end{algorithmic}
\end{algorithm}

\subsection{Training Posterior Diamond Maps via Distillation}
\label{subsec:training_posterior_diamond_maps}
We next discuss training of Posterior Diamond Maps. Flow maps are  distilled ODE trajectories. Therefore, we first express sampling from $p_{1|t}$ via an ODE leveraging the recently introduced GLASS Flows  \citep{holderrieth2025glass}. GLASS Flows samples from $p_{1|t}(\cdot|x_t)$ with conditional flow matching model where the condition is given by $x_t,t$ (i.e. it is fixed) and a new ``inner'' state $\bar{x}_s$ evolves from noise $\bar{x}_0$ to a sample $\bar{x}_{1}\sim p_{1|t}(\cdot|x_t)$ with inner time $s$ ($0\leq s\leq 1$). In our notation, we denote inner states with $\bar{x}$ while keeping $x$ for outer states. The \textbf{GLASS velocity field} is given by
\begin{align}
\label{eq:glass_posterior_flow}
\bar{u}_s(\bar{x}_s|x_t,t)\quad (\bar{x}_s,x_t\in\R^d, 0\leq s,t\leq 1)
\end{align}
for which the sampling procedure is
\begin{align}
	\label{eq:glass_ode}
	X_0             & \sim \mathcal{N}(0,I_d),\quad \frac{\dd}{\dd s}X_s=\bar{u}_s(\bar{x}_s|x_t,t) \\
	\Rightarrow X_1 & \sim p_{1|t}(\cdot|X_{t}=x_t),
\end{align}
which generates samples from the posterior $p_{1|t}$. \citet{holderrieth2025glass} show how to obtain $\bar{u}_s(\bar{x}_s|x_t,t)$ from a pre-trained flow-matching model $u_t(x_t)$ via simple linear transformations of inputs and outputs.  Specifically, define the \textbf{sufficient statistic} as the weighted average
\begin{align}
	\label{eq:sufficient_statistic}
	S_{s,t}(\bar{x}_s,x_t)=\frac{\alpha_{s}\sigma_{t}^2\bar{x}_s+\alpha_t\sigma_{s}^2x_t}{\sigma_{t}^2\alpha_s^2+\alpha_t^2\sigma_s^2},
\end{align}
and the \textbf{time reparameterization} as:
\begin{align}
\label{eq:time_reparameterization}
	t^*(s,t)=g^{-1}\left(\frac{\sigma_{t}^2\sigma_{s}^2}{\sigma_{t}^2\alpha_s^2+\alpha_t^2\sigma_s^2}\right),\quad g(t)=\frac{\sigma_t^2}{\alpha_t^2}
\end{align}
(see \cref{appendix:g_inverse_formulas} for analytical formula for $g^{-1}$). Then the GLASS velocity field is given by
\begin{align}
	\label{eq:glass_flow_form}
	\bar{u}_s(\bar{x}_s|x_t,t)
	=a_{s,t}\bar{x}_s + b_{s,t}D_{t^*}(\alpha_{t^*}S_{s,t}(\bar{x}_s,x_t))
\end{align}
where $D_{t}$ is the denoiser (see \cref{eq:denoiser}) and $a_{s,t},b_{s,t}\in \R$ are weight coefficients  (see \cref{app:derivation_glass} for derivations and explicit formulae). Therefore, the above vector field is a linear reparameterization of a pre-trained model in the denoiser parameterization. The intuition for the above formula is as follows: the sufficient statistics ``summarizes'' two states $\bar{x}_s,x_t$ into one state by taking their average. This ``summary'' is effectively a new state at a different time $t^*$ that we can denoise with the ``old'' denoiser \citep{holderrieth2025glass}.

\textbf{Training.} To train a Posterior Diamond Flow Map model $X_{s,s'}^\theta(\bar{x}_s|x_t,t)$ with parameters $\theta$, one can now take an existing flow model $u_t(x)$, reparameterize to a GLASS velocity field $\bar{u}_s(\bar{x}_s|x_t,t)$, and then distill it with a distillation method of choice. For example, Lagrangian distillation would minimize the following loss (see equation \cref{eq:lag_fmap}):
\begin{align*}
\mathbb{E}\left[\|\partial_{s'}X_{s,s'}^\theta(\bar{x}_s|x_t,t)-\text{sg}(u_{s'}(X_{s,s'}(\bar{x}_s|x_t,t)|x_t,t))\|^2\right]
\end{align*}
where the expectation is over $z\sim \pdata$, the times $s,s',t$ are sampled uniformly such that $s\leq s'$, and $x_t\sim p_t(\cdot|z), \bar{x}_s\sim p_s(\cdot|z)$.

\begin{remark}[Training from scratch] Posterior Diamond Maps can also be trained from scratch via standard flow matching (see \cref{appendix:training_from_scratch_posterior_diamond_map}). However, as distillation is much more efficient, e.g. can be done for large-scale models in several hours with 8 NVIDIA A100 GPUs \citep{sabour2025align}, we focus on distillation in this work.
\end{remark}

\subsection{One-Step DDPM samples with  Diamond Maps}
\label{subsec:sampling_posterior_diamond_maps}

Next, we discuss how one can \emph{iteratively} sample from Posterior Diamond Maps. So far, we only discussed how to obtain terminal samples at time $1$ via sampling from the posterior. However, to perform guidance, search, or SMC, we perform \emph{iterative} step-wise sampling, i.e. we need a way to sample a state $x_{t'}$ given $x_{t}$ for $t'<1$ that is not a terminal state.

A naive way of sampling from Posterior Diamond Maps would be via \textbf{iterative denoising and noising}, i.e. to sample from the posterior and then noise again
\begin{align*}
x_1& \leftarrow X_{0,1}^\theta(\bar{x}_0|x_t,t)\quad (\bar{x}_0\sim\mathcal{N}(0,I_d))&&(\text{denoise})\\
x_{t'}&\leftarrow\alpha_{t'}x_{1}+\beta_{t}\epsilon\quad (\epsilon\sim\mathcal{N}(0,I_d))&&(\text{noise})
\end{align*}
For a perfectly trained Posterior Diamond Map, this would be exact, i.e. $x_{t'}\sim p_{t'}$. However, in practice, flow maps have an error that increases with the simulation steps that are distilled in a flow map. Therefore, it seems suboptimal that the above sampling procedure goes from $t\to 1$ and back to $t'$ (e.g. set $t=0.2, t'=0.25$). Such an iterative denoising and noising scheme has already been shown to lead to error accumulation for flow maps and consistency models \citep{sabour2025align}. As we demonstrate, a significantly better sampling procedure can be derived.

\textbf{Intuition.} The fundamental idea to improve this scheme relies on the following intuition: the Posterior Diamond Map has an inner time $s$ and an outer time $t$. By definition, these are two \emph{separate} time axis. However, intuitively $s=0$ should correspond to outer time $t$ and $s=1$ to outer time $1$. But recall that our goal is to go to a time $t'$ in between $t$ and $1$ ($t<t'<1$)? Our approach is that we can find an inner time $s$ that corresponds to this $t'$  and thereby find a direct one step transition from $x_t$ to $x_{t'}$ in this way by mapping inner states $\bar{x}_s$ to outer states $x_t$ via the sufficient statistic. 

\begin{figure}[h]
  \centering
\includegraphics[width=0.8\columnwidth]{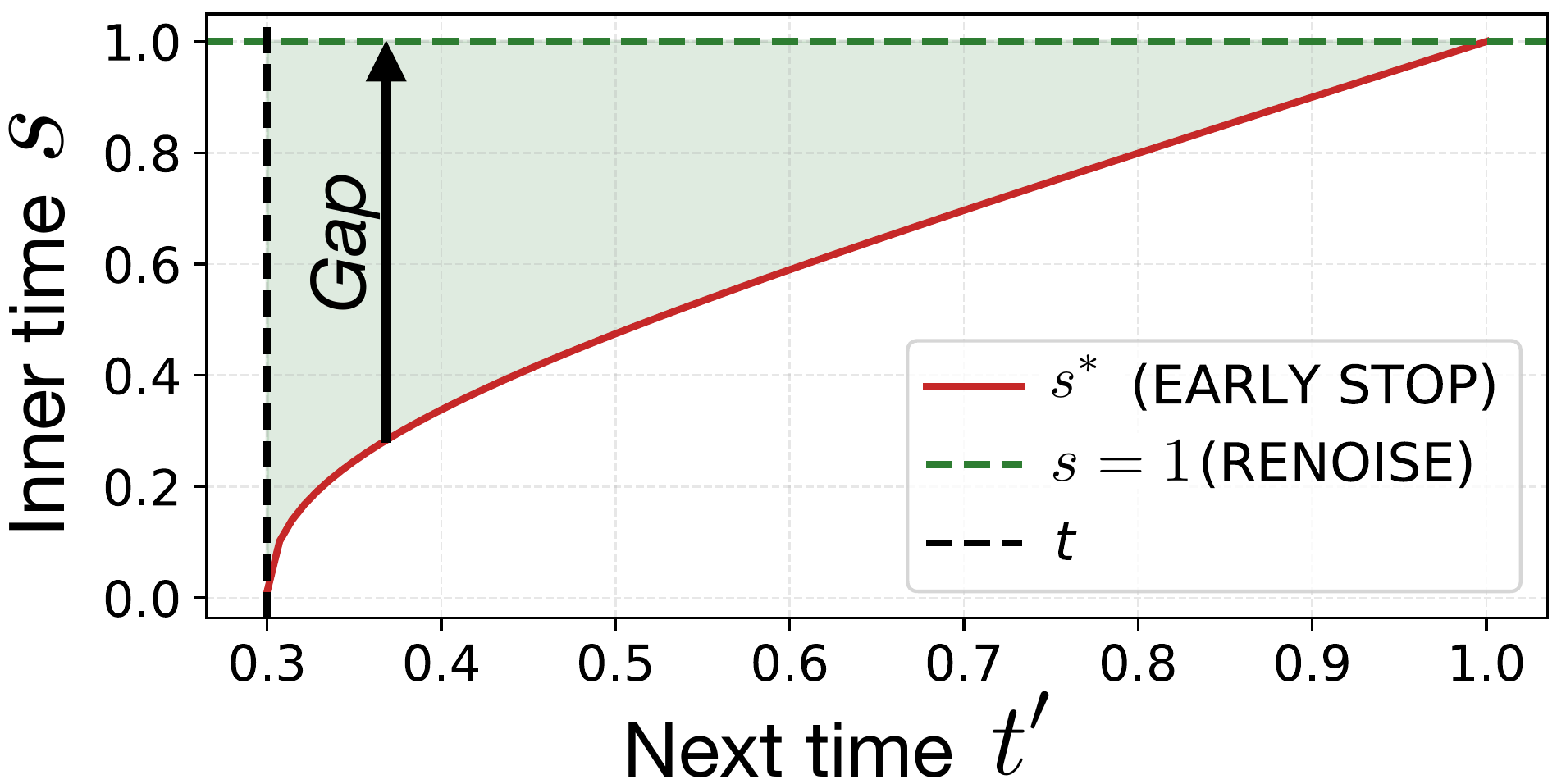}
\caption{\label{prop:diamond_ddpm}Effective time $0\to s^*$ amortized in the flow map is significantly smaller for Diamond Early Stop DDPM sampling (``Early stop'') than for iterative denoising and noising (``Renoise'') leading to reduced error accumulation (see \cref{fig:diamond_maps_sampling}). Here, we plot $s^*$ (red) as a function of $t,t'$ (see \cref{appendix:s_star_formula}).}
\end{figure}

To make this precise, we define \textbf{Diamond Early Stop DDPM sampling} as the sample that we obtain by stopping the Diamond flow ``early'' and re-transforming with the sufficient statistic $S_{s,t}$, i.e. for $\bar{x}_0\sim\mathcal{N}(0,I_d)$ set
\begin{align*}
\bar{x}_{s^*}&=X_{0,s^*}^\theta(\bar{x}_0|x_t,t)&&(\text{flow stopped early})\\
	x_{t'}                 & =\alpha_{t'}S_{s^*,t}(\bar{x}_{s^*},x_t)&&(\text{map inner to outer state})
\end{align*}
where we set $s^*=g^{-1}(g(t)g(t')/(g(t)-g(t')))$ with $g$ as in \cref{eq:time_reparameterization}. The choice of $s^*$ is such that $t^*(s^*,t)=t'$ (see \cref{appendix:g_inverse_formulas}). In the following, we call the \textbf{DDPM transitions} to be the transition probabilities $\pddpm_{t'|t}(x_{t'}|x_{t})$ from the time-reversal SDE \citep{ho2020denoising,song2020denoising} (see \cref{app:ddpm_transition_via_posterior_flow_map}). The DDPM transitions are a standard choice for reward alignment, in particular as a \emph{proposal distribution} for SMC and search \citep{li2025dynamic, skreta2025feynman, singhal2025general}. We present:
\begin{myframe}
	\begin{proposition}[Diamond DDPM sampling]
\label{prop:ddpm_transition_via_posterior_flow_map}
		Let a time $t'>t$ and $\pddpm_{t'|t}$ be the DDPM transition kernel. Then for $x_{t'}$ obtained via Diamond DDPM sampling:
		\begin{align*}
x_{t'}\sim \pddpm_{t'|t}(\cdot|x_t)
		\end{align*}
	\end{proposition}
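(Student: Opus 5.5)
We have to show that the law of $x_{t'}=\alpha_{t'}S_{s^*,t}(\bar{x}_{s^*},x_t)$, with $\bar{x}_{s^*}=X_{0,s^*}(\bar{x}_0|x_t,t)$ for an exact Posterior Diamond Map, equals the DDPM kernel. We assume the Diamond Map is exact, so $\bar{x}_{s^*}$ has the law of the GLASS ODE at inner time $s^*$.

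\textbf{Step 1: identify both laws as mixtures over the clean sample $z\sim p_{1|t}(\cdot|x_t)$.} The DDPM transition is the time-reversal kernel of the forward noising process. Since $p_{t|t'}(x_t|x_{t'})$ is Gaussian (equivalently, since the process is Markov given $z$), it can be written as
\begin{align*}
\pddpm_{t'|t}(x_{t'}|x_t)=\int p_{t'|t,1}(x_{t'}|x_t,z)\,p_{1|t}(z|x_t)\,\dd z .
\end{align*}
Here $p_{t'|t,1}$ is the Gaussian bridge of the forward process: the law of $x_{t'}$ given $x_t$ and $z$, obtained from the joint Gaussian $(x_t,x_{t'})$ given $z$. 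The construction in \cref{app:derivation_glass} gives the analogous structure for GLASS. The inner flow is a flow matching model for the conditional path $\bar{x}_s=\alpha_s z+\sigma_s\bar\epsilon$, with $z\sim p_{1|t}(\cdot|x_t)$ and $\bar\epsilon$ independent of the noise in $x_t$. Its marginal at inner time $s$ is therefore
\begin{align*}
\int\mathcal{N}(\bar{x}_s;\alpha_s z,\sigma_s^2 I_d)\,p_{1|t}(z|x_t)\,\dd z .
\end{align*}
The ODE preserves these marginals by \cref{eq:ode_sampling}. So, conditionally on $z$, the intermediate $\bar{x}_{s^*}$ is distributed as $\mathcal{N}(\alpha_{s^*}z,\sigma_{s^*}^2 I_d)$, although it is not generated that way. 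This marginal-level statement is all we need, because $x_{t'}$ is a deterministic function of $\bar{x}_{s^*}$ and $x_t$.

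\textbf{Step 2: compute the conditional law of $\alpha_{t'}S_{s^*,t}$ given $(x_t,z)$.} Write $\bar{x}_{s^*}=\alpha_{s^*}z+\sigma_{s^*}\bar\epsilon$ with $\bar\epsilon$ independent of $x_t$. Then $S_{s^*,t}$ is affine in $\bar\epsilon$, so $x_{t'}$ given $(x_t,z)$ is Gaussian.

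Its mean is
\begin{align*}
\alpha_{t'}\frac{\alpha_{s^*}^2\sigma_t^2 z+\alpha_t\sigma_{s^*}^2 x_t}{\sigma_t^2\alpha_{s^*}^2+\alpha_t^2\sigma_{s^*}^2}.
\end{align*}
Its variance is
\begin{align*}
\alpha_{t'}^2\frac{\alpha_{s^*}^2\sigma_t^4\sigma_{s^*}^2}{(\sigma_t^2\alpha_{s^*}^2+\alpha_t^2\sigma_{s^*}^2)^2}.
\end{align*}
Dividing numerator and denominator by $\alpha_{s^*}^2\alpha_t^2$ expresses everything in terms of $g(s^*)$ and $g(t)$. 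With $g(t')=g(t)g(s^*)/(g(t)+g(s^*))$, which is exactly the defining relation $t^*(s^*,t)=t'$ and is equivalent to the stated formula for $s^*$, we have:
\begin{itemize}
\item the coefficient of $z$ becomes $\alpha_{t'}\bigl(1-g(t')/g(t)\bigr)$;
\item the coefficient of $x_t$ becomes $\frac{\alpha_{t'}}{\alpha_t}\frac{g(t')}{g(t)}$;
\item the variance becomes $\alpha_{t'}^2 g(t')\bigl(1-g(t')/g(t)\bigr)$.
\end{itemize}

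\textbf{Step 3: compare with the Gaussian bridge.} Compute $p_{t'|t,1}$ directly. Given $z$, represent the forward path as a time-changed Brownian motion in the variable $g$: set $x_t/\alpha_t=z+W_{g(t)}$, so that noise decreases as $t$ increases. Conditioning the Brownian motion at the smaller value $g(t')$ on its value at the larger $g(t)$ gives a bridge with mean $z+\frac{g(t')}{g(t)}\bigl(x_t/\alpha_t-z\bigr)$ and variance $g(t')\bigl(1-g(t')/g(t)\bigr)$. Multiplying by $\alpha_{t'}$ reproduces exactly the mean and variance from Step 2. The conditionals therefore agree, and integrating against $p_{1|t}(z|x_t)$ gives the claim.

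The main obstacle is Step 1: justifying that the conditional-on-$z$ representation of $\bar{x}_{s^*}$ is valid given that the inner flow is deterministic. I would argue this by noting that only the marginal of $\bar{x}_{s^*}$ given $x_t$ matters, and that this marginal equals the mixture. The remaining risk is algebraic sign and normalization conventions in Step 2, which I would check with the $g$-substitution.
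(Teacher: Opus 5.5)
Your proof is correct and follows essentially the same route as the paper. You condition on the clean sample $z=X_1$, use that under the GLASS path $\bar{x}_{s^*}\mid z\sim\mathcal{N}(\alpha_{s^*}z,\sigma_{s^*}^2 I_d)$ independently of $x_t$ (the same fact the paper uses when it noises $X_t$ and $X_{s^*}$ independently), match the Gaussian parameters via $g(t')=g(t)g(s^*)/(g(t)+g(s^*))$, and integrate $z$ out. The difference is only bookkeeping: you compare the bridge law of $x_{t'}$ given $(x_t,z)$ and mix over $p_{1|t}(\cdot|x_t)$, whereas the paper compares the joint law of $(X_t,X_{t'})$ given $X_1$ through its covariance and then passes from joints to conditionals; your version also checks the means explicitly, which the paper leaves implicit.
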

\end{myframe}
A proof can be found in \cref{app:ddpm_transition_via_posterior_flow_map}. The above proposition is remarkable, as it shows that the DDPM flow map is contained in the Posterior Diamond Map, although the model was not trained for it. Therefore, Posterior Diamond Map not only accelerates value function estimation but also unlocks more efficient proposal distribution for SMC and search via Diamond DDPM Sampling. As we will demonstrate in \cref{subsec:posterior_diamond_maps_exps}, Diamond Early Stop DDPM sampling leads to significantly improved performance compared to iterative denoising and noising.

\begin{figure}[h]
  \centering
\includegraphics[width=\columnwidth]{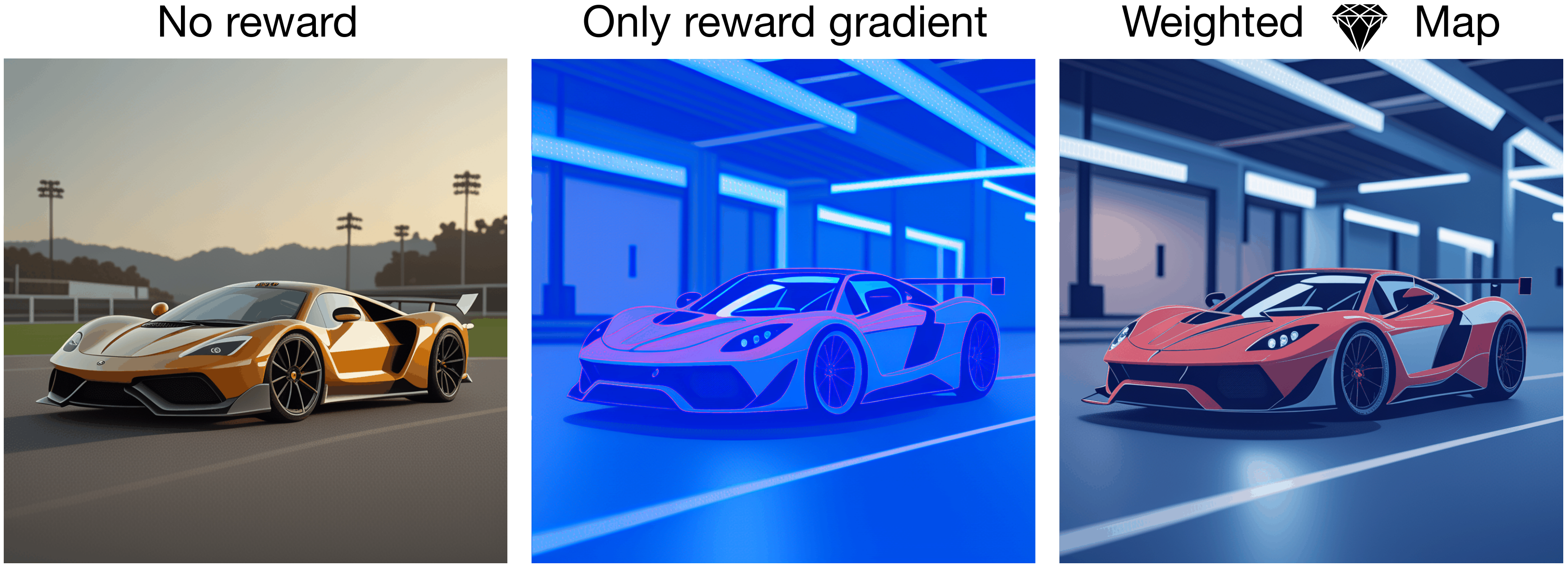}
  \caption{\label{prop:blueness_example} Illustration of \cref{prop:weighted_diamond_flow_estimator} with a blueness reward (i.e. reward is maximized for full blue image). Left: Sampling from SANA-Sprint model with no reward. Middle: Naive re-noising and reward gradient with no weighting (\cref{eq:renoise}). The entire image becomes blue close to collapse. Right: Corrected gradient value function via \cref{prop:weighted_diamond_flow_estimator}:  image remains more realistic due to the added regularization preventing drift off the data manifold.}
  \label{fig:my_figure}
\end{figure}
\begin{figure*}[ht]
	\centering
	\includegraphics[width=\textwidth]{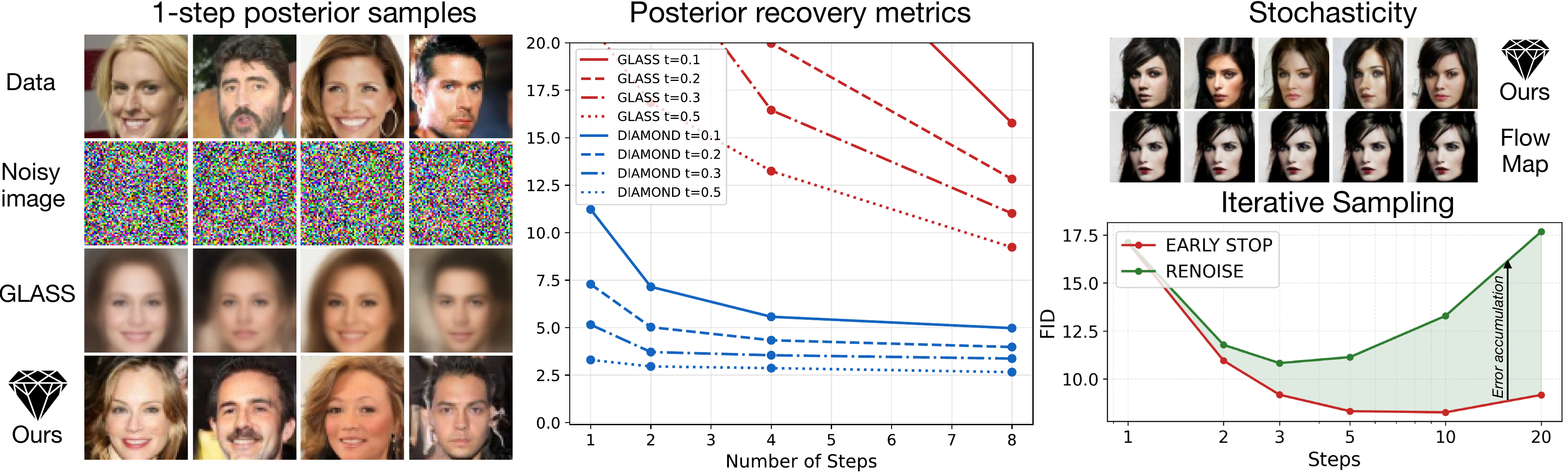}
	\caption{\label{fig:diamond_maps_sampling}
		\textbf{Training and Sampling Posterior Diamond Maps.} Left: Example of one-step posterior samples from Posterior Diamond Maps. The one-step samplers are faithful and of high quality. Middle: Quantitative results for posterior sampling for various times $t$. Posterior Diamond Maps outperforms GLASS Flows and therefore have successively distilled them. Top right: In contrast to flow maps that always return the same sample, Posterior Diamond Maps are stochastic and allow to explore future possibilities (sample from posterior). Bottom right: Iterative sampling leads to error accumulation via a iterative denoising and noising scheme (\cref{subsec:sampling_posterior_diamond_maps}), while improves for Diamond Early Stop DDPM sampling. All results are for CelebA-64 dataset.}
\end{figure*}

\section{Weighted Diamond Maps}
\label{sec:weighted_diamond_maps}

In this section, we propose a different approach to value function estimation via flow maps. We ask: can we estimate the value function by using an existing ``standard'' flow map $X_{t,t'}$? As we show in this section, this is indeed possible.

Let $X_{t,t'}$ be a normal flow map. For a given $x_{t}\in \mathbb{R}^d$, the standard flow map $X_{t,t'}(x_{t})$ is deterministic. However, a natural idea is to \textbf{renoise} $x_{t}$ back to an earlier time $x_{t'}$ for $t'<t$ and use the flow map to map it to time $1$. Specifically, define the \textbf{renoising map} for $t'<t$ as
\begin{align}
	x_{t'}(x_t,\epsilon)   & =\frac{\alpha_{t'}}{\alpha_{t}}x_{t}+\left(\sqrt{\sigma_{t'}^2-\frac{\alpha_{t'}^2}{\alpha_{t}^2}\sigma_{t}^2}\right)\epsilon
\end{align}
The renoising map is constructed such that if $\epsilon\sim\mathcal{N}(0,I_d)$, then $x_{t'}(x_t,\epsilon)$ corresponds to simulation of the diffusion forward process from $t$ to $t'$ \citep{song2020score, sohl2015deep}. We now use this renoising map to make the flow map stochastic. For $\epsilon\sim\mathcal{N}(0,I_d)$, we set
\begin{align}
x_{t'} & =x_{t'}(x_t,\epsilon)\quad &&(\text{renoise})  \\
\label{eq:renoise}
z(x_t,\epsilon)      & =X_{t',1}(x_{t'})\quad &&(\text{map to data})
\end{align}
This results in a  distribution $z(x_t,\epsilon)\sim q_{1|t}(\cdot|x_t)$ over ``clean'' data that intuitively should sample ``locally'' around $x_t$. However, the resulting distribution is \emph{not} the posterior distribution, i.e. $q_{1|t}(\cdot|x_t)\neq p_{1|t}(\cdot|x_t)$.  Therefore, unlike the direct distillation approach in the previous section, samples from this distribution would \emph{not} lead to a correct estimator of the value function. This is illustrated in \cref{prop:blueness_example}.

However, our fundamental idea is that one correct for it by a \emph{recovery reward} defined via:
\begin{align}
\label{eq:local_reward}
r_{\mathrm{recov}}(x_t,\epsilon)  :=r(z(x_t,\epsilon))-\frac{\|x_t-\alpha_tz(x_t,\epsilon)\|^2}{2\sigma_{t}^2}
\end{align}
The second summand is a  recovery reward that is high if $x_t$ is a plausible noisy version of the suggested clean data point $z(x_t,\epsilon_i)$. It equals the NLL $-\log p_t(x|z)$ of the probability path. Refining this intuition, we establish:
\begin{myframe}
	\begin{proposition}[Weighted Diamond Map]
		\label{prop:weighted_diamond_flow_estimator}
		For every $k=1,\cdots,K$ let $\epsilon_k\sim \mathcal{N}(0,I_d)$.
		Then the following \textbf{weighted Diamond guidance estimator} is a consistent estimator of the gradient of the value function
		\begin{align*}
		\nabla_{x_t}V_t(x_t)
			\approx                          & \sum\limits_{i=k}^{K}w_k\left[\nabla_{x_t}r_{\mathrm{recov}}(x_t,\epsilon_k)+\delta_{\mathrm{score}}^k\right], 
            \end{align*}
            where $
\delta_{\mathrm{score}}^k    =\nabla\log p_{t'}(x_{t'}^k)\frac{\alpha_{t'}}{\alpha_{t}}-\nabla\log p_t(x_t)$ is defined as a score-based correction and the weight coefficients $w_i$ are given by $w_k=\mathrm{softmax}(v_1,\cdots,v_N)$ and
\begin{align*}
			v_k      & =r_{\mathrm{recov}}(x_t,\epsilon_k)+\gamma_k+\frac{1}{2}\|\epsilon_k\|^2 \\
			\gamma_k & =(x_{t'}^k-x_{t})^T\int \limits_{0}^{1}\nabla\log p_{t'}(x_{t}+u(x_{t'}^k-x_{t}))\dd u
		\end{align*}
	\end{proposition}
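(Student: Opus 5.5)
The plan is to write $\exp(V_t(x_t))$ as an expectation over the renoising noise $\epsilon\sim\mathcal{N}(0,I_d)$ by a change of variables. The flow map then serves as an exact transport from $p_{t'}$ to $\pdata$. Once this is done, the statement reduces to self-normalized importance sampling together with the reparameterization trick, as in \cref{prop:posterior_diamond_map_value_function_estimation}.

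\textbf{Step 1 (rewrite the value function).} Combining \cref{eq:value_function} with Bayes' rule \cref{eq:fm_posterior} gives
\begin{align*}
\exp(V_t(x_t))=\frac{1}{p_t(x_t)}\int \exp(r(z))\,p_t(x_t|z)\,\pdata(z)\,\dd z .
\end{align*}
Since $X_{t',1}$ is the flow map of the ODE \cref{eq:ode_sampling}, it is a diffeomorphism pushing $p_{t'}$ forward to $\pdata$. Substituting $z=X_{t',1}(y)$ therefore turns $\pdata(z)\,\dd z$ into $p_{t'}(y)\,\dd y$.

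Next I substitute $y=x_{t'}(x_t,\epsilon)=\frac{\alpha_{t'}}{\alpha_t}x_t+c\,\epsilon$, where $c=\sqrt{\sigma_{t'}^2-\alpha_{t'}^2\sigma_t^2/\alpha_t^2}$. This has constant Jacobian $c^d$. Finally I multiply and divide by the standard Gaussian density $(2\pi)^{-d/2}e^{-\|\epsilon\|^2/2}$. Using $\log p_t(x_t|z)=-\|x_t-\alpha_t z\|^2/(2\sigma_t^2)+\text{const}$, this yields, with a constant $C$ independent of $x_t$ and $\epsilon$,
\begin{align*}
\exp(V_t(x_t))=C\,\mathbb{E}_{\epsilon}\Big[\exp\big(r_{\mathrm{recov}}(x_t,\epsilon)+\tfrac12\|\epsilon\|^2+\log p_{t'}(x_{t'}(x_t,\epsilon))-\log p_t(x_t)\big)\Big].
\end{align*}

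\textbf{Step 2 (identify the weights).} By the fundamental theorem of calculus along the segment from $x_t$ to $x_{t'}^k$,
\begin{align*}
\log p_{t'}(x_{t'}^k)=\log p_{t'}(x_t)+\gamma_k .
\end{align*}
Hence the exponent equals $v_k+\big[\log p_{t'}(x_t)-\log p_t(x_t)\big]$. The bracketed term does not depend on $\epsilon_k$, so it cancels in the softmax. The normalized weights of the Monte Carlo version of the integral are therefore exactly $w_k=\mathrm{softmax}(v)_k$.

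\textbf{Step 3 (gradient).} The law of $\epsilon$ does not depend on $x_t$, so I differentiate under the expectation (reparameterization trick). This gives
\begin{align*}
\nabla_{x_t}V_t=\frac{\mathbb{E}[e^{\Phi}\nabla_{x_t}\Phi]}{\mathbb{E}[e^{\Phi}]},
\end{align*}
where $\Phi$ is the full exponent from Step 1. Differentiating $\Phi$ term by term:
\begin{itemize}
\item the $r_{\mathrm{recov}}$ term contributes $\nabla_{x_t}r_{\mathrm{recov}}(x_t,\epsilon)$, which is differentiated through $z(x_t,\epsilon)$;
\item the term $\tfrac12\|\epsilon\|^2$ contributes nothing;
\item by the chain rule, $\log p_{t'}(x_{t'}(x_t,\epsilon))-\log p_t(x_t)$ contributes $\nabla\log p_{t'}(x_{t'})\frac{\alpha_{t'}}{\alpha_t}-\nabla\log p_t(x_t)=\delta_{\mathrm{score}}$.
\end{itemize}
Replacing the numerator and denominator by empirical means over $\epsilon_1,\dots,\epsilon_K$ gives the stated self-normalized estimator. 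The strong law of large numbers applied to both means, together with the continuous mapping theorem for their ratio, gives almost-sure convergence as $K\to\infty$, i.e.\ consistency.

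\textbf{Main obstacle.} The algebra is routine; the delicate part is the justification. I expect the main obstacle to be verifying that $X_{t',1}$ is a $C^1$ diffeomorphism with the exact pushforward property, and that the integrands are integrable enough to (i) differentiate under the integral and (ii) apply the law of large numbers to both numerator and denominator.

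The factor $e^{\|\epsilon\|^2/2}$ makes integrability non-obvious. I would handle it by never forming $e^{\|\epsilon\|^2/2}$ in isolation: it combines with the Gaussian density to give $p_{t'}(x_{t'})\,\dd x_{t'}$, so the integrals are finite whenever $\mathbb{E}_{\pdata}\big[e^{r}\,p_t(x_t|\cdot)\big]<\infty$. I would then assume standard regularity, namely a bounded reward with bounded gradient and a smooth, Lipschitz flow map, to dominate the derivatives.
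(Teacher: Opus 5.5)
Your proposal is correct and follows essentially the same route as the paper: push $p_{t'}$ forward to $\pdata$ through the exact flow map, and reweight against the renoising kernel $q_{t'|t}(\cdot|x_t)$ (your Jacobian-plus-Gaussian-density step is the paper's identity $-\log q_{t'|t}(x_{t'}(x_t,\epsilon)|x_t)=\tfrac12\|\epsilon\|^2+C$). As in the paper, you then use the line-integral identity for $\gamma_k$ to drop the $\epsilon$-independent term $\log p_{t'}(x_t)-\log p_t(x_t)$ from the softmax, and differentiate by reparameterization to obtain $\nabla_{x_t}r_{\mathrm{recov}}+\delta_{\mathrm{score}}$.

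Your law-of-large-numbers and continuous-mapping argument supplies the consistency step that the paper leaves implicit. Step 1 needs only the pushforward property ($X_{t',1}(y)\sim\pdata$ for $y\sim p_{t'}$, which is what the paper uses), not a diffeomorphism. Differentiability of the flow map matters only for the gradient in Step 3.
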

\end{myframe}
We present a proof in \cref{eq:proof_weighted_diamond} and the method in  \cref{alg:guidance_with_weighted_diamond_maps}. We show in \cref{prop:blueness_example} that the additional terms are crucial for exact guidance estimation. The estimator is tractable as we can compute $\delta_{\text{score}}$ because the score function $\nabla\log p_t(x)$ is a reparameterization of the velocity field $u_t(x)$. As $t'$ is close to $t$, the $\gamma$-coefficients can be approximated efficiently with a trapezoidal quadrature
\begin{align*}
	\gamma_k\approx \frac{1}{2}\left[\nabla\log p_{t'}(x_{t})+\nabla\log p_{t'}(x_{t'}^k)\right]^T(x_{t'}^k-x_{t})
\end{align*}
The above estimator can be computed in $\mathcal{O}(2KN)$ function evaluations where $K$ is the number of Monte Carlo samples and $N$ the number of simulation steps of the ODE (one call of the flow map $X_{t',1}$ for $r_{\text{recov}}$ and one call of the flow model to obtain $\delta_{\text{score}}$ per Monte Carlo sample). To reduce it further, we can use that $\nabla\log p_{t'}(x_{t'})=(\alpha_{t'}D_{t'}(x_{t'})-x_{t'})/\sigma_{t'}^2$ by Tweedie's formula \citep{efron2011tweedie}. We can approximate this via the flow map by setting the denoiser equal to the flow map $D_t(x_t)\approx X_{t,1}(x_t)$ and use
\begin{align}
\label{eq:tweedies_flowmap}
    \nabla\log p_{t'}(x_{t'})\approx \frac{\alpha_{t'}X_{t',1}(x_{t'})-x_{t'}}{\sigma_{t'}^2}
\end{align}
This reduces the number of function evaluations to $\mathcal{O}(KN)$ (as only the flow map is called per Monte Carlo sample). Further, it is well-known that the interval where diffusion models generate the semantically meaningful part happens in a narrow \emph{critical window} \citep{luo2023diffusion, park2023understanding} We leverage this fact and only perform guidance in a \text{critical guidance window} $0\leq t_{\text{guid-min}}\leq t_{\text{guid-max}}\leq 1$. Therefore, the total number of function evaluations  amounts to $\mathcal{O}(K*N_{\text{guidance}}+N_{\text{no-guidance}})$ where $N_{\text{no-guidance}}$ are the steps where no guidance is performed. We summarize the method in \cref{alg:guidance_with_weighted_diamond_maps}. Finally, we note that the Weighted Diamond Map can also be used to estimate the value function itself (and not only its gradient), see \cref{appendix:value_function_estimation_with_weighted_diamond_map}.


\textbf{Posterior Diamond Map vs. Weighted Diamond Map.} We finally discuss the advantages and disadvantages of the Posterior vs. Weighted Diamond Map for both training and inference cost. For training, the Posterior Diamond Map requires distillation of GLASS Flows, while the Weighted Diamond Map requires only to distill a standard flow map. Practically speaking, this means that for a practitioner many models can be used off-the-shelf for Weighted Diamond Maps already. For inference cost, it is instructive to think of both Posterior Diamond Maps and Weighted Diamond Maps as effectively estimating the value function $V_t(x_t)$ via importance sampling (i.e. via reweighting a proposal distribution). The accuracy of such estimators depend on  the \emph{effective sample size} \citep{casella2024statistical}. Note the ESS is highest if the importance weights are constant. For the Posterior Diamond Maps, the importance sampling weights  are $\exp(r(z))$, while for the Weighted Diamond Maps it consists of several terms (see \cref{prop:weighted_diamond_flow_estimator}). We anticipate that for \emph{perfect training} Posterior Diamond Maps offers more efficiency at inference time. It is an open theoretical question to study the merits of each method.



\section{Related Work}

We discuss most closely related work in this section and refer to \cref{appendix:extended_related_work} for an extended
discussion. There has been recent interest in using flow maps and distilled diffusion models for reward alignment.
\citet{eyring2024reno} leverage a one-step model to optimize the initial noise sample $x_0$ with gradient descent methods \citep{ben2024d,wang2025sourceguidedflowmatching}. \citet{sabour2025test} use flow maps to obtain efficient one-step look-aheads and replace the denoiser approximation (see \cref{eq:denoiser_approximation}) with a flow map approximation that can be corrected for via importance sampling. Our focus here is on estimating the value function, an arguably harder estimation task. Concurrently to our work, \citet{potaptchik2026meta} also study stochastic flow maps to sample from the posterior distribution $p_{1|t}$ for reward alignment, i.e. here called \emph{Posterior Diamond Maps}. We additionally propose \emph{Weighted Diamond Maps} (see \cref{sec:weighted_diamond_maps}) reusing existing flow maps and Diamond DDPM sampling.

Other works have explored learning approximations of the posterior $p_{1|t}$ beyond just learning the mean via the denoiser  \citep{elata2024nested,de2025distributional,chen2025gaussianmixtureflowmatching}. We note that our method is not approximate but exact.

Many recent works have explored training methods of flow maps and consistency models \citep{boffi2024flow, geng2025mean, song2023consistency,song2023improved}. While training methods may differ, all of these approaches rely on distillation of the same ODE trajectory of the marginal vector field, also called probability flow ODE in the diffusion literature. Here, we show how one can construct a \emph{stochastic} flow map, i.e. one that it not fully determined by the current state $x_t$. 

\section{Experiments}
\subsection{Posterior Diamond Maps}
\label{subsec:posterior_diamond_maps_exps}
\begin{figure}[h]
	\centering
	\includegraphics[width=0.8\columnwidth]{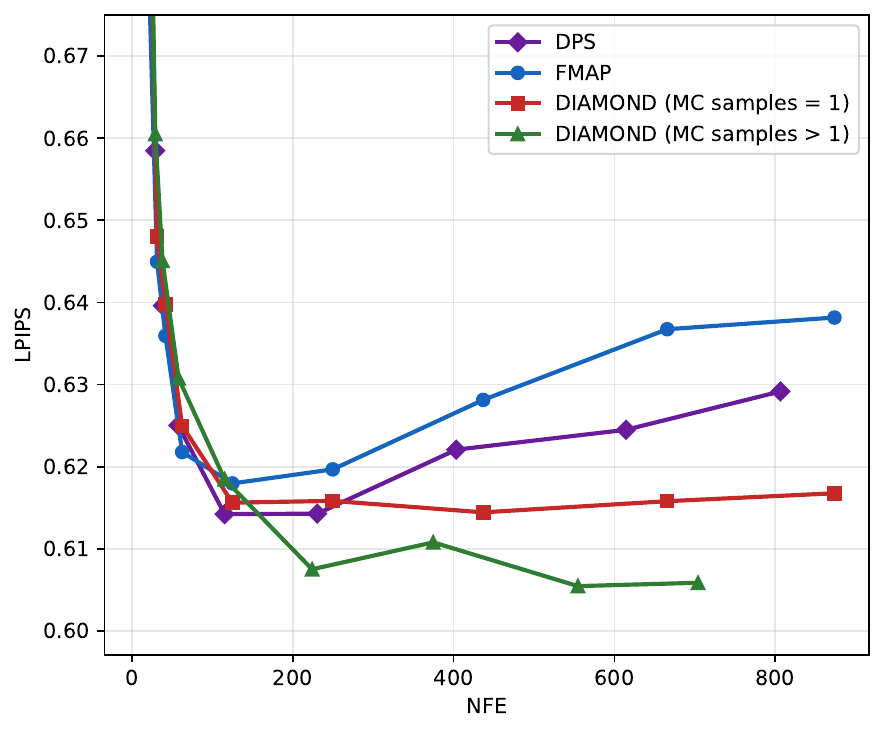}
	\caption{\label{fig:imagenet_lpips}
		\textbf{Posterior Diamond Maps for Super Resolution on ImageNet1k.} Quantitative results for super resolution 32x using the setup at \cref{subsec:guidance}. All results are chosen as the best out of ablations. For $\text{MC samples} > 1$, we ablate over various front loaded MC sample schedules. See qualitative examples are in \cref{fig:imagenet_inverse_problem_visual}.}
\end{figure}
We present experiments for Posterior Diamond Maps. We distill Posterior Diamond Flow models from pre-trained models on CIFAR10 \cite{krizhevsky2009learning} and CelebA \cite{liu2015faceattributes} (unconditional) and ImageNet1k 256x256 \cite{deng2009imagenet} (class-conditional). To test the ability to sample from the posterior $p_{1|t}$, we follow a similar setup as in \citep{holderrieth2025glass}. We present key qualitative and quantitative results in \cref{fig:diamond_maps_sampling}, \cref{fig:imagenet_lpips}, \cref{fig:qualitative_examples_imagenet_reward_guidance_small}. For comprehensive experimental results, reference \cref{appendix:posterior_diamond_map_exps}.

As one can see in \cref{fig:diamond_maps_sampling}, Diamond Maps allows us to sample from the posterior in one step, distilling GLASS Flows effectively into a one-step sampler. We also test Diamond DDPM sampling (see \cref{prop:ddpm_transition_via_posterior_flow_map}) vs. standard iterative noising and renoising (see \cref{subsec:sampling_posterior_diamond_maps}). Diamond Early Stop DDPM sampling outperforms iterative denoising and noising significantly, making it the method of choice for iterative sampling (see \cref{fig:diamond_maps_sampling} and \cref{tab:renoise_results}). We present more quantitative results for few step sampling and posterior sampling in \cref{tab:benchmark_results} and \cref{tab:posterior_results}. Overall, these show that a Posterior Diamond Maps model can be successfully trained via standard flow map distillation.

We also apply guidance with Posterior Diamond Maps to common noisy linear inverse problems and prompt alignment (see \cref{fig:imagenet_inverse_problem_visual} and \cref{fig:qualitative_examples_imagenet_reward_guidance} for examples). As a baseline, we use standard guidance via the denoiser approximation (see \cref{eq:denoiser_approximation}) commonly called \emph{Diffusion Posterior Sampling (DPS)} \citep{chung2022diffusion}. As an additional baseline, we use a flow map approximation as $V_t(x_t)\approx r(X_{t,1}(x_t))$, e.g. used in \citep{sabour2025align}. This can be considered a version of Weighted Diamond Maps for $t'=t$. For inverse problems, we observe that the Posterior Diamond map has a \textbf{better Pareto frontier} over various reward scales and NFEs (\cref{fig:imagenet_lpips}). In the case of CelebA-64, we find Posterior Diamond Maps is more robust towards high reward scales, caused by stochasticity, compared to guidance with a naive flow map approximation (see \cref{fig:gaussian_deblurring}). For ImageNet, we find that scaling base drift steps quickly reaches diminishing returns and allocating these NFEs to Monte Carlo samples help continue scaling (see \cref{fig:imagenet_lpips}). We also present experiments making ImageNet/CelebA models text-conditioned using text-image alignment reward models \citep{clip,xu2023imagereward}using SMC (see \cref{fig:smc_figure}) and reward guidance (see \cref{fig:qualitative_examples_imagenet_reward_guidance}).  As one can in \cref{fig:qualitative_examples_imagenet_reward_guidance_small}, Posterior Diamond Maps have a strong ability to guide the model  to out-of-distribution images that would be virtually impossible to be sampled without guidance (i.e. just by a Best-of-N scheme). This effectively realizes the promise of stochastic exploration enabled by stochastic flow maps.
\begin{figure}[h]
  \centering
\includegraphics[width=\columnwidth]{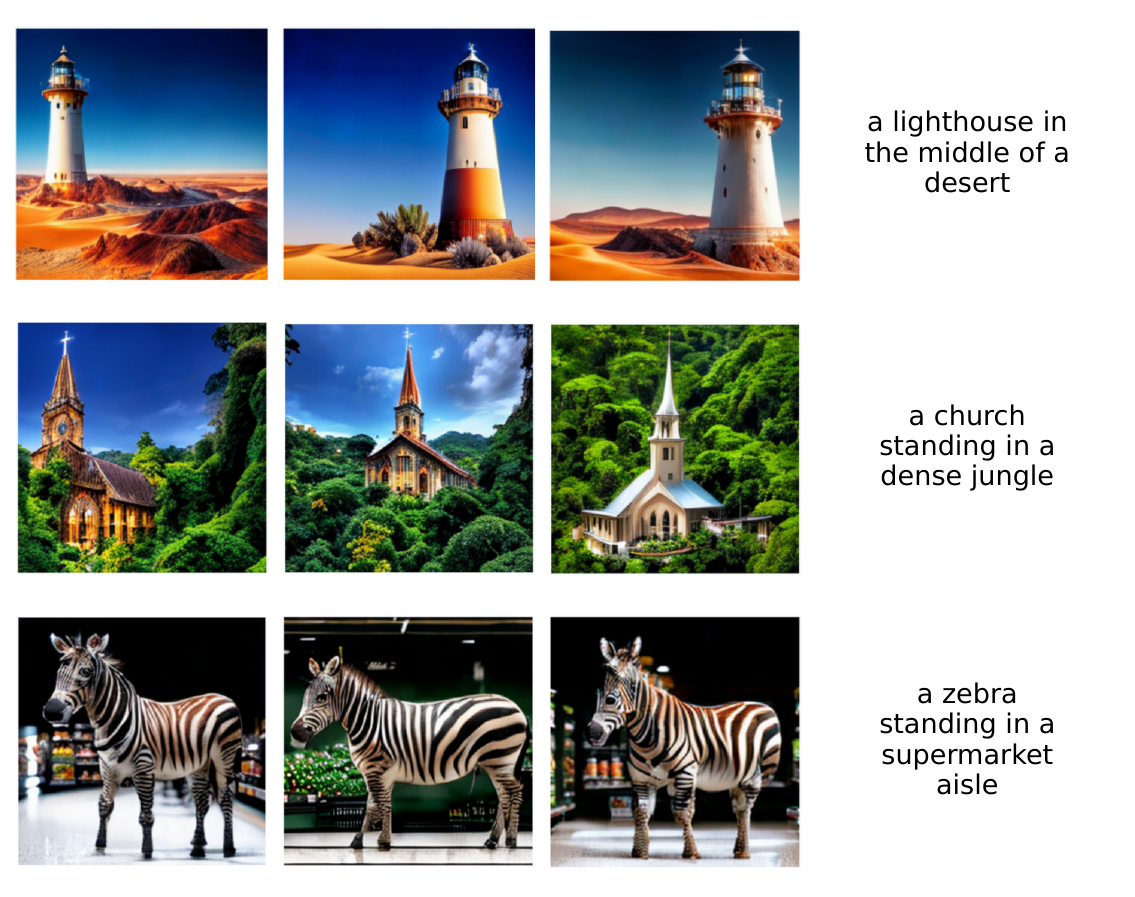}
\caption{\label{fig:qualitative_examples_imagenet_reward_guidance_small} Making a class-conditional ImageNet model text-conditioned using Posterior Diamond Maps. Qualitative examples of ImageReward-reward alignment with guidance with Posterior Diamond Maps. Model trained on ImageNet1k (class-conditional). We extend class-conditioning to conditioning with simple prompts. Model generates true out-of-distribution images following the prompts. More examples in \cref{fig:qualitative_examples_imagenet_reward_guidance}.}
\end{figure}

\begin{figure*}[t]
  \centering
  \includegraphics[width=\textwidth]{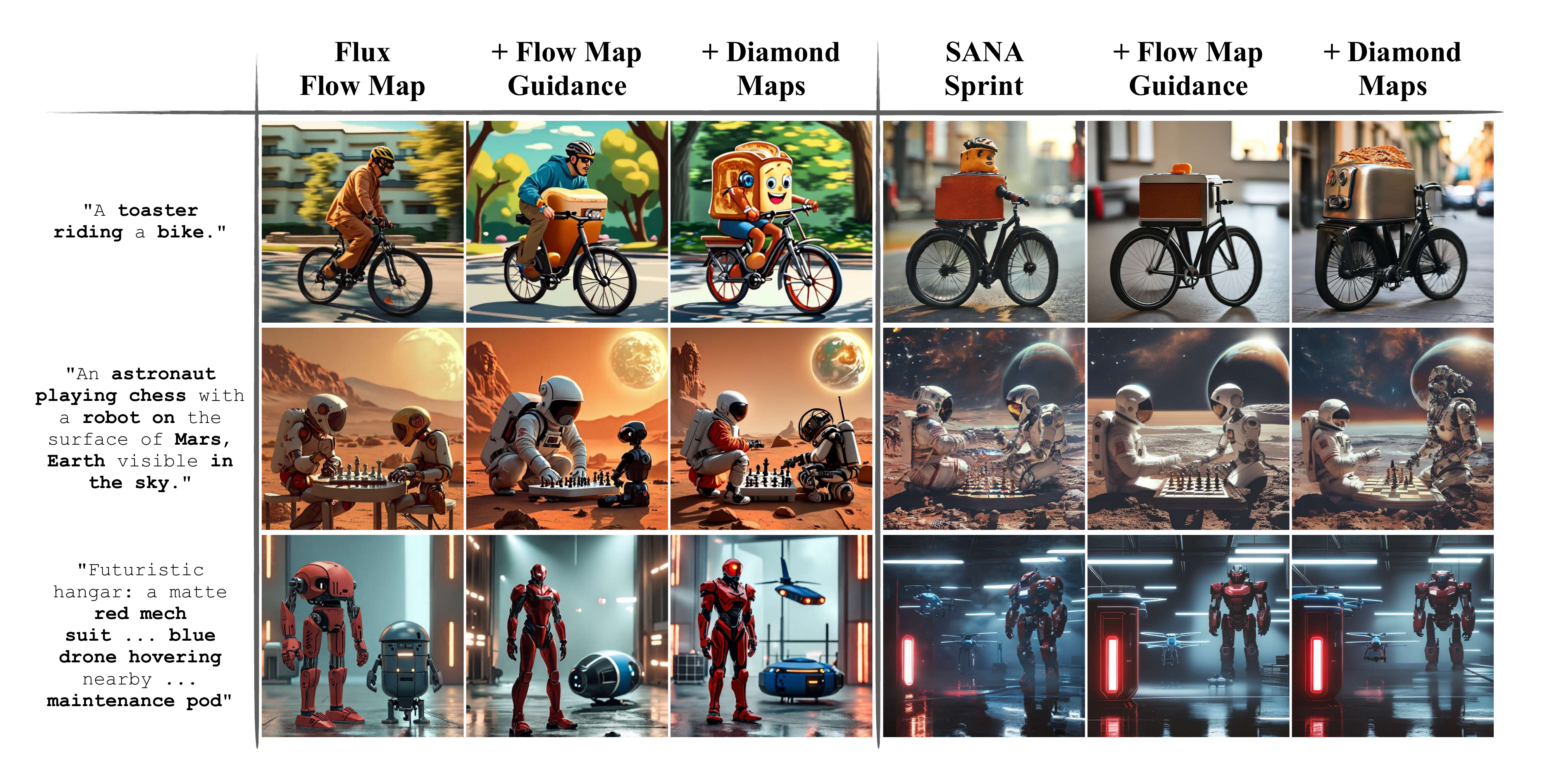}
  \caption{\label{fig:t2i_qualitative}Qualitative comparison on compositional prompts for FLUX Flow Map and SANA-Sprint. All methods share the same initial noise seed per row, isolating the effect of guidance. Diamond Maps improve prompt adherence over both the base models and Flow Map Guidance (FMTT), capturing attributes and spatial relationships that earlier stages miss.}
\end{figure*}

\vspace{-1em}
\subsection{Weighted Diamond Maps}

\textbf{High-resolution text-to-image generation.}
We next evaluate the effectiveness of \emph{Weighted Diamond Maps} applied to a pre-trained flow map \emph{without} any fine-tuning, in the setting of text-to-image generation with human-preference reward alignment. We consider two base models operating at high resolution: SANA-Sprint 0.6B~\citep{sanasprint} and a flow-map distillation of FLUX.1-dev\footnote{\url{https://huggingface.co/gabeguofanclub/flux-1-dev-flowmap-lsd}}

\textbf{Rewards.} We follow \citet{eyring2024reno} and leverage a linear combination of four pre-trained human-preference reward models, ImageReward~\citep{xu2023imagereward}, HPSv2~\citep{wu2023human}, PickScore~\citep{kirstain2023pick}, and CLIP as our total reward. As a disentangled evaluation from these rewards, we primarily benchmark improvements on  GenEval~\citep{ghosh2023geneval} and UniGenBench++~\citep{UniGenBench++} to ensure no reward hacking.

\paragraph{Implementation Details.} We simplify the importance weights from \cref{prop:weighted_diamond_flow_estimator} to use only the reward, setting $w_k = \mathrm{softmax}(r(z^1),\dots,r(z^K))[k]$. The full weights additionally involve the recovery likelihood and a score correction, whose gradient norms both scale with the ambient dimension~$d$ of the latent space and therefore dominate the reward gradient at high resolutions, making the composite weights difficult to balance numerically. All three gradient terms are however retained in the gradient estimate: we normalize each term individually to unit norm and rescale by a shared coefficient~$\lambda$, placing them on a common scale regardless of dimension and controlling the overall guidance strength with a single hyperparameter. For SANA-Sprint, which directly predicts~$x_1$, we recover the score and velocity via Tweedie's formula~\cref{eq:tweedies_flowmap}. We return the highest-reward sample among all flow-map rollouts; we apply the same selection rule to all baselines for a fair comparison. Full details are provided in \cref{app:sana-sprint-details}.

\begin{figure}[t]
  \centering
  \includegraphics[width=\columnwidth]{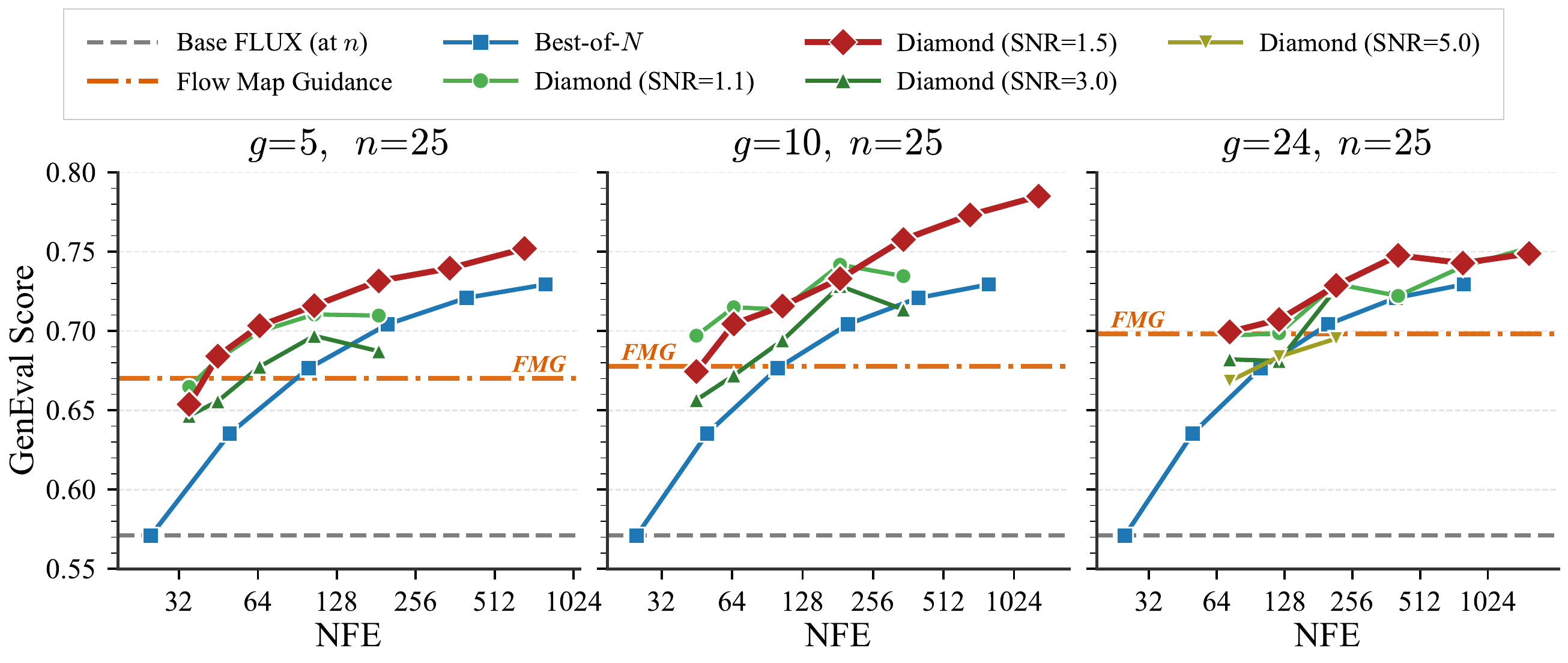}
  \caption{\label{fig:flux_ablation}Ablation of hyperparameters on the FLUX Flow Map model ($n{=}25$). GenEval scores are reported across guidance steps~$g$, perturbation noise level (SNR), and number of Monte Carlo samples MC. $\mathrm{SNR}{=}1.5$ and $g{=}10$ yield the best trade-off.}
  \vspace{-1em}
\end{figure}

\begin{figure*}[t]
  \centering
  \begin{subfigure}[t]{0.33\textwidth}
    \centering
    \includegraphics[width=\linewidth]{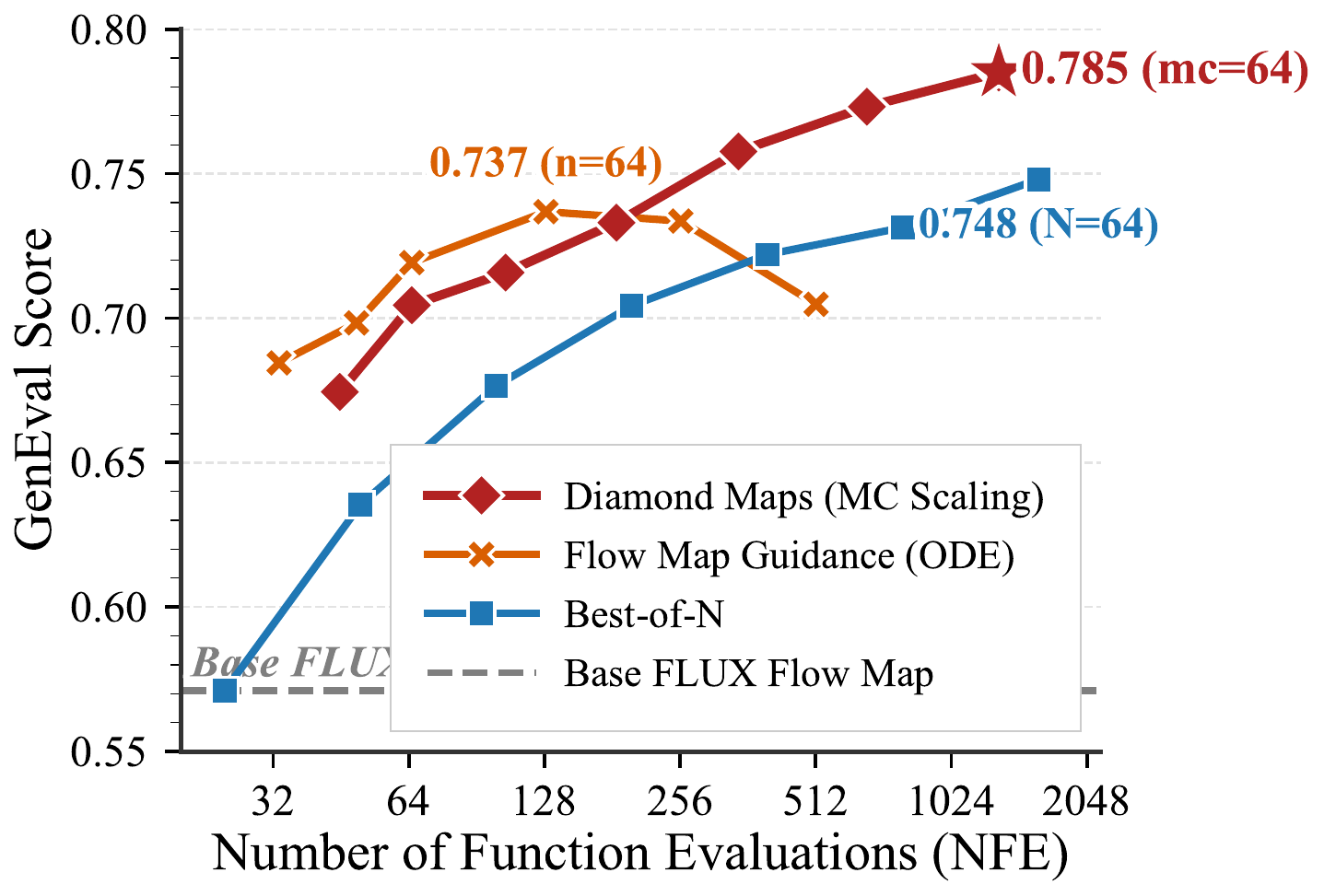}
    \caption{FLUX -- GenEval}
    \label{fig:flux_scaling_geneval}
  \end{subfigure}%
  \hfill
  \begin{subfigure}[t]{0.33\textwidth}
    \centering
    \includegraphics[width=\linewidth]{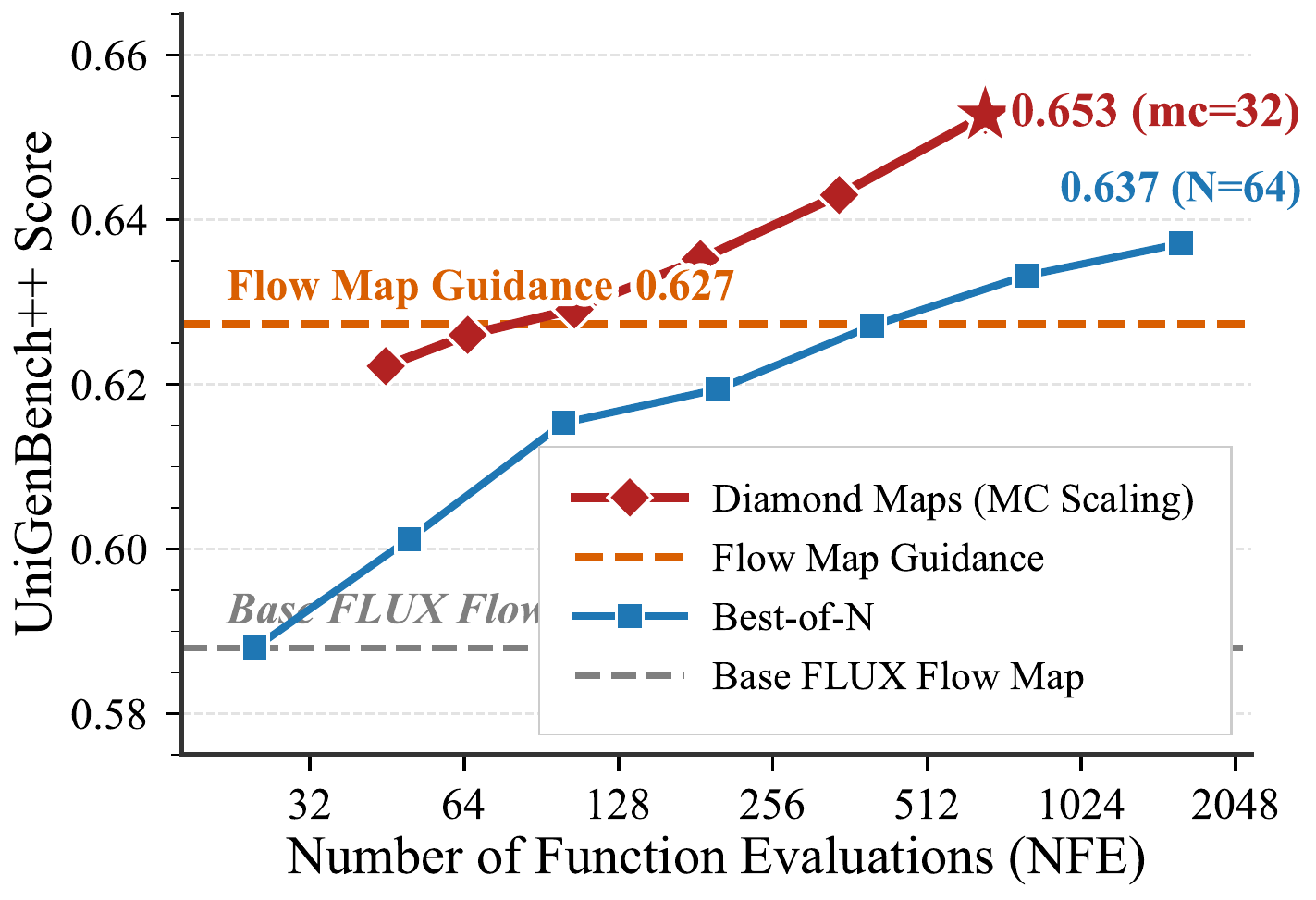}
    \caption{FLUX -- UniGenBench++}
    \label{fig:flux_scaling_unigenbench}
  \end{subfigure}%
  \hfill
  \begin{subfigure}[t]{0.33\textwidth}
    \centering
    \includegraphics[width=\linewidth]{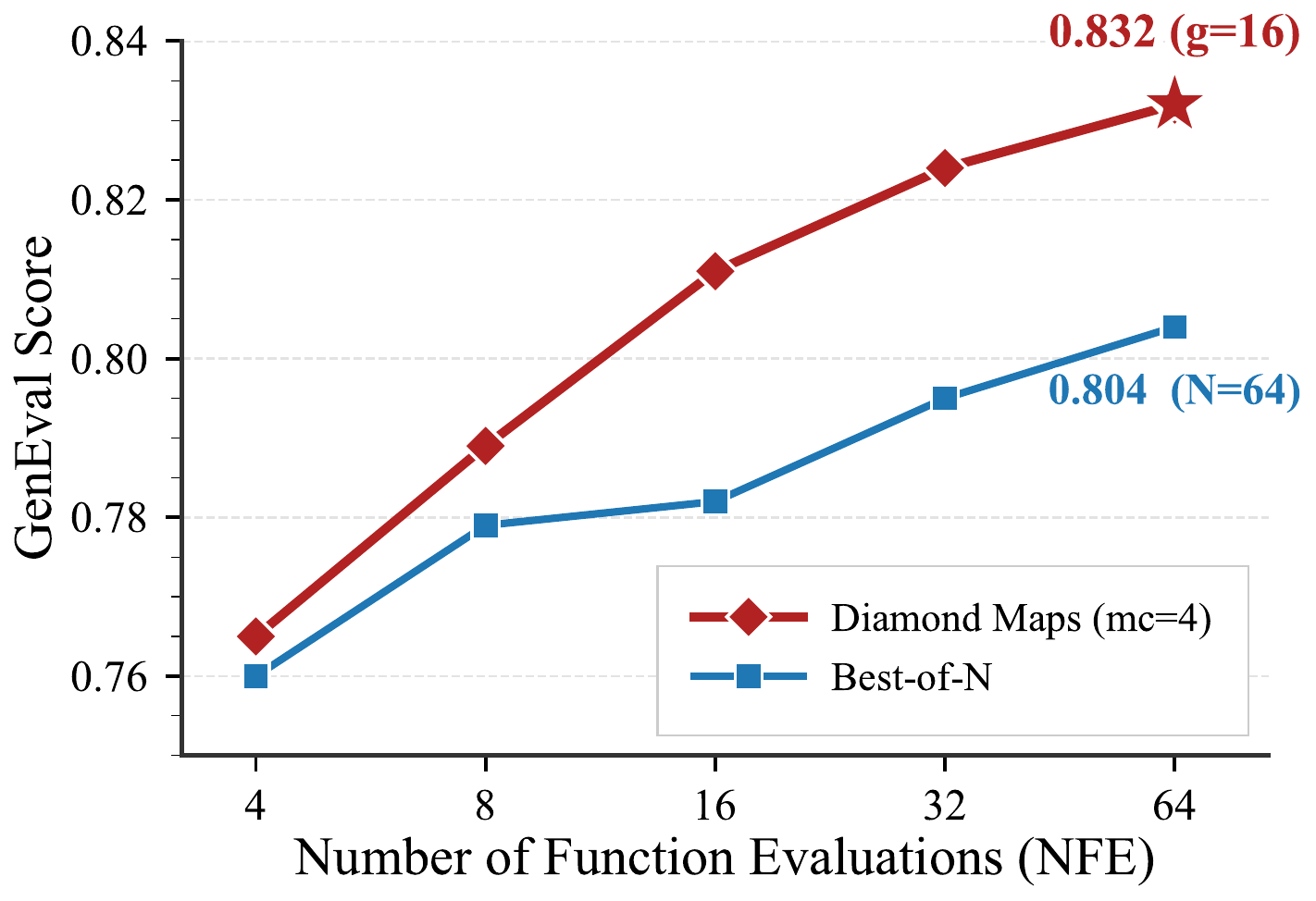}
    \caption{SANA-Sprint -- GenEval}
    \label{fig:pareto_frontier}
  \end{subfigure}
  \caption{\label{fig:scaling_pareto}Compute scaling (NFEs) for Diamond Maps, Best-of-$N$, and Flow Map Guidance across models and benchmarks. Diamond Maps (MC scaling) consistently achieve a steeper Pareto frontier than Best-of-$N$ (independent samples) and Flow Map Guidance (additional ODE steps), confirming more favorable compute--quality trade-offs across both velocity-parameterized (FLUX) and direct-prediction (SANA-Sprint) flow maps.}
\end{figure*}

\paragraph{Analysis on FLUX.} We begin with the FLUX Flow Map model ($n{=}25$ ODE steps), and compare Diamond Maps against two primary baselines: Best-of-$N$~\citep{imageselect,ma2025inferencetimescalingdiffusionmodels}, which generates $N$ independent samples and returns the highest-reward one, and Flow Map Guidance~\citep{sabour2025align}. We ablate three key hyperparameters: the number of guidance steps~$g$, the perturbation noise level (parameterized by its signal-to-noise ratio, SNR;~\cref{app:sana-sprint-details}), and the number of Monte Carlo samples \texttt{mc}. \cref{fig:flux_ablation} reports GenEval scores across guidance and noise configurations. Diamond Maps consistently improve over the base model, Best-of-$N$, and Flow Map Guidance~\citep{sabour2025align} in all settings. The noise level plays a critical role: $\mathrm{SNR}{=}1.5$ yields the strongest results, while insufficient noise ($\mathrm{SNR}{=}1.1$) limits the estimator's exploration and excessive noise ($\mathrm{SNR}{\geq}3.0$) degrades quality---particularly when guidance is applied at many steps ($g{=}24$), where perturbation errors accumulate along the ODE trajectory. Most notably, increasing the number of Monte Carlo samples~\texttt{mc} yields consistent gains. A moderate guidance frequency ($g{=}10$) offers the best trade-off between alignment strength and trajectory coherence; we fix this configuration for all subsequent FLUX experiments.

\textbf{Compute scaling on FLUX.} Diamond Maps admit a scaling axis fundamentally different from existing methods: rather than drawing independent samples (Best-of-$N$) or refining the ODE discretization (Flow Map Guidance), we increase the number of Monte Carlo particles~\texttt{mc} used to approximate the reward-tilted velocity at each guidance step. The total cost is $n + 2*g*\texttt{mc}$ NFEs for Diamond Maps, $n*N$ for Best-of-$N$, and $n + g$ for Flow Map Guidance, where we scale its number of inference-plus-guidance steps as this is its most natural compute axis (scaling other quantities such as particles would equally benefit Diamond Maps, confounding the comparison). \cref{fig:flux_scaling_geneval,fig:flux_scaling_unigenbench} compare all three strategies as a function of total NFEs on GenEval and UniGenBench++, both of which are disentangled from the reward models used during guidance. On GenEval, Diamond Maps reach $0.785$ at $\texttt{mc}{=}64$, outperforming both Flow Map Guidance at its peak ($0.737$, after which performance \emph{declines} with additional steps) and Best-of-$N$ ($0.748$ at $N{=}64$). On UniGenBench++, the picture is consistent: We employ the best-performing parameters from GenEval with which Diamond Maps achieves $0.653$ versus $0.637$ for Best-of-$N$, while Flow Map Guidance achieves $0.627$ at $g=24,n=25$. Crucially, Diamond Maps exhibit qualitatively different scaling behavior: while Best-of-$N$ faces diminishing returns from selecting among independent samples and Flow Map Guidance saturates or even degrades with additional steps, Diamond Maps steadily improve as \texttt{mc} grows by tightening the Monte Carlo approximation to the reward-tilted velocity.

\begin{figure}[t]
  \vspace{-1em}
  \centering
  \includegraphics[width=\columnwidth]{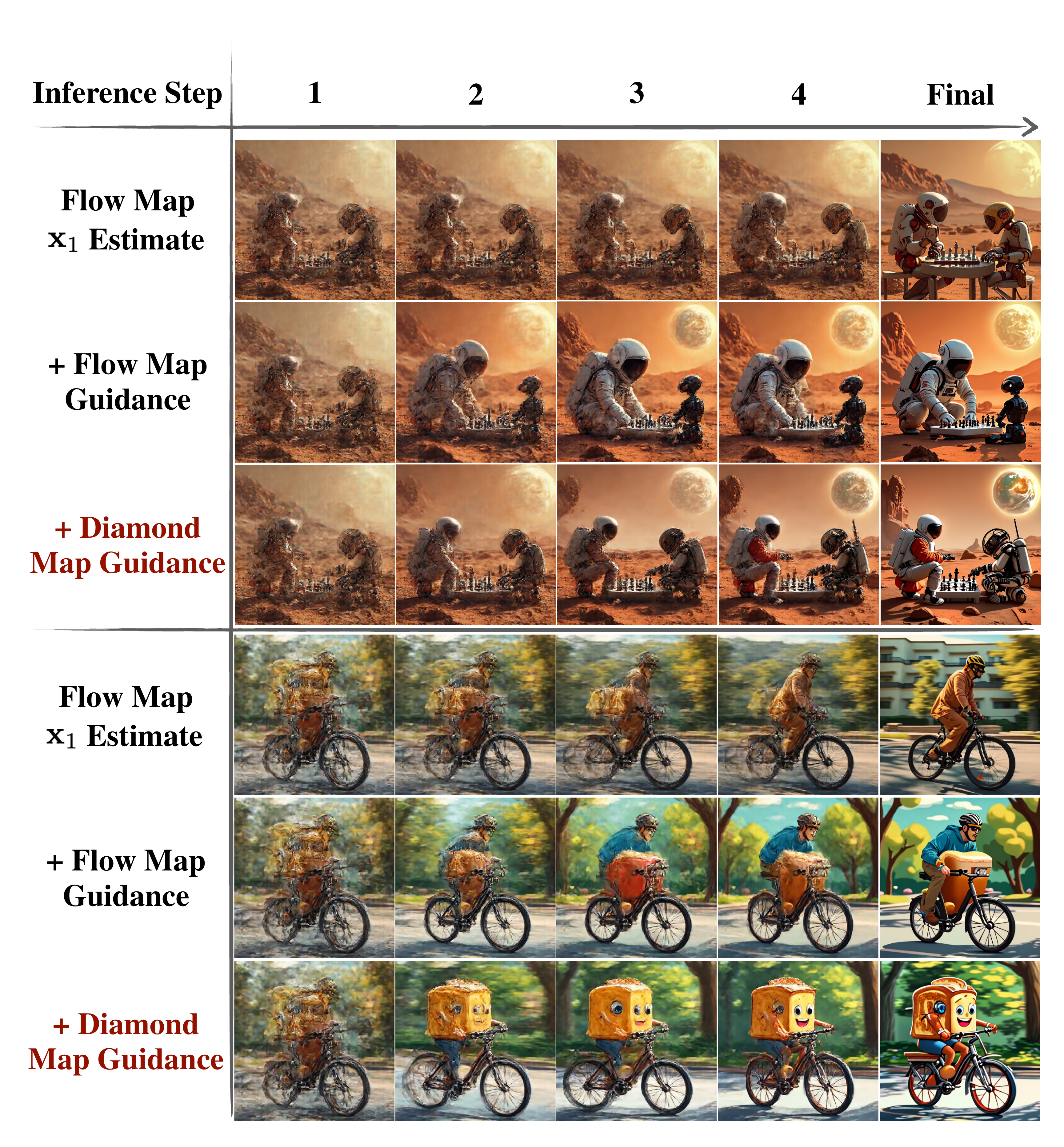}
  \caption{\label{fig:diamond_progression}Progressive refinement across inference steps for FLUX Flow Map, comparing the flow map estimate~$\hat{x}_1$, Flow Map Guidance, and Diamond Map Guidance. All rows share the same initial noise seed. Diamond Map Guidance yields sharper, more prompt-adherent results earlier in the trajectory.}
  \vspace{-1em}
\end{figure}

\begin{table*}[t]
\centering
\caption{\textbf{Results on GenEval}. Comparison of existing inference-time reward-alignment methods with Weighted Diamond Maps. Weighted Diamond Maps outperform baseline methods, while being more efficient. Results with $\dagger$ taken from \citet{eyring2025noise}.}
\label{tab:sana_reference}
\resizebox{\textwidth}{!}{
\begin{tabular}{lcc|ccccccc}
\toprule
\textbf{Model} & \textbf{Time (s) $\downarrow$} & \textbf{Mean $\uparrow$} & \textbf{Single$\uparrow$} & \textbf{Two$\uparrow$} & \textbf{Counting$\uparrow$} & \textbf{Colors$\uparrow$} & \textbf{Position$\uparrow$} & \textbf{Attribution$\uparrow$} \\
\midrule
Flux-dev & 23.0 & 0.68 & 0.99 & 0.85 & 0.74 & 0.79 & 0.21 & 0.48 \\
SD3-Medium~\citep{sd3}& 4.4 & 0.70 & 1.00 & 0.90 & 0.72 & 0.87 & 0.31 & 0.66 \\
Qwen-Image~\citep{wu2025qwenimagetechnicalreport} & 35.0 & 0.87 & 0.99 & 0.92 & 0.89 & 0.88 & 0.76 & 0.77 \\
\midrule
SANA-Sprint 0.6B~\cite{sanasprint} & 0.2 & 0.70 & 1.00 & 0.80 & 0.64 & 0.86 & 0.41 & 0.51 \\
{+ HyperNoise~\citep{eyring2025noise}$^\dagger$}  & {$0.3$} & {0.75} & {1.00} & {0.88} & {0.71} & \textcolor{gray}{0.85} & {0.51} & {0.55} \\
{+ Prompt Optimization~\citep{mañas2024improvingtexttoimageconsistencyautomatic}$^\dagger$} & {$95.0$} & {0.75} & {0.99} &{0.91} & {0.82} & {0.89} & {0.36} & {0.56} \\
{+ Best-of-N~\citep{imageselect}$^\dagger$} & {15.0} & {0.79} & {0.99} & {0.92} & {0.72} & {0.91} & {0.53} & {0.65} \\
{+ ReNO~\citep{eyring2024reno}$^\dagger$} & {30.0} & {0.81} & {0.99} & {0.93} & {0.74} & {0.92} & {0.60} & \textbf{0.67} \\
\rowcolor[HTML]{F5FFFA}{+ Weighted Diamond Maps} & {\underline{$10.0$}} & \textbf{{0.83}} & \textbf{1.00} &\textbf{0.95} & \textbf{0.84} & \textbf{0.94} & \textbf{0.61} & {0.65} \\
\midrule
FLUX Flow Map (\texttt{gabeguofanclub/flux-1-dev-flowmap-lsd)}) & {11.0} & 0.57 & \textbf{1.00} & {0.66} & {0.65} & {0.66} & {0.15} & {0.31} \\
{+ Flow Map Guidance~\citep{sabour2025align}} & {\underline{46.0}} & {0.74} & {0.99} & {0.89} & {0.80} & {0.91} & {0.21} & {0.62} \\
{+ Best-of-N~\citep{imageselect}} & {703} & {0.75} & {0.99} & {0.91} & \textbf{0.85} & {0.87} & \textbf{0.30} & {0.61} \\
\rowcolor[HTML]{F5FFFA}{+ Weighted Diamond Maps} & {483} & \textbf{0.79} & \textbf{1.00} & \textbf{0.93} & {0.84} & \textbf{0.94} & {0.25} & \textbf{0.76} \\
\bottomrule
\end{tabular}
}
\end{table*}

\textbf{Results on SANA-Sprint.} SANA-Sprint presents a fundamentally different setting from FLUX: as a consistency-distilled model, it directly predicts the clean image~$x_1$ at each step rather than a velocity field, yielding strong single-step generations but lacking an explicit velocity parameterization. We recover the required score and velocity via Tweedie's formula (see~\cref{app:sana-sprint-details}), demonstrating that Weighted Diamond Maps apply broadly across flow-map architectures.  We find high SNR to be beneficial in this different model classwhile \texttt{mc} scaling seems less powerful~\cref{appendix:posterior_diamond_map_exps}. We find the compute-optimla tradeoff at \text{SNR}=$20$, $\texttt{mc=4}$.\cref{tab:sana_reference} compares Diamond Maps against Best-of-$N$~\citep{imageselect,ma2025inferencetimescalingdiffusionmodels}, Reward-based Noise Optimization~\citep{eyring2024reno}, and Prompt Optimization~\citep{mañas2024improvingtexttoimageconsistencyautomatic}: Weighted Diamond Maps outperform all competing approaches while using fewer NFEs, directly fulfilling the promise of efficient test-time reward alignment. \cref{fig:pareto_frontier} further traces the Pareto frontier against Best-of-$N$ as compute grows: Diamond Maps scale significantly more favorably across the low-NFE regime. Together with the FLUX results, these experiments confirm that Diamond Maps provide a unified, compute-efficient mechanism for reward-guided generation---applicable to both velocity-parameterized and direct-prediction flow maps, and consistently delivering steeper scaling than existing inference-time alternatives.

\section{Discussion}

We introduced \emph{Diamond Maps}, a framework to enable efficient and accurate  alignment of flow and diffusion models at inference-time using flow maps.  We present two methods: \emph{Posterior Diamond Maps} demonstrate that one can distill GLASS Flows into one-step samplers for the posterior itself, while \emph{Weighted Diamond Maps} show that even standard off-the-shelf flow maps can be turned into consistent value-function estimators through a simple renoising procedure.

We briefly discuss limitations. First, although the estimators proposed in this work are asymptotically accurate (consistent) and efficient, they should not be expected to solve arbitrary reward-alignment problems. In particular, if the support of the data distribution $\pdata$ is far from the support of the reward-tilted distribution $p^r(z)$, then the variance of the estimators can become very large. For example, if $\pdata(z)=\text{const}$ is a uniform over a compact domain, inference-time reward alignment becomes closer to th problem of sampling from an unnormalized density $q(z)\propto \exp(r(z))$, which is known to be hard in high dimensions. This limitation is not specific to Diamond Maps, but reflects a fundamental difficulty of inference-time alignment when the prior has weak coverage of high-reward regions.

Second, the practical accuracy of all estimators in this work depends on the quality of the underlying flow maps. While, in principle, sufficiently accurate flow maps should preserve one-step sampling performance, current distilled models still exhibit a gap relative to their teacher models due to optimization and distillation challenges \citep{sabour2025align}. As a result, some of the remaining error in reward alignment is inherited from imperfections in the distilled sampler rather than from the Diamond Map estimators themselves. We expect that continued progress in training flow maps and distilled models will directly improve the performance of the methods introduced here.

While 
our experiments focus on images, the framework applies directly to any modality 
where flow or diffusion models are used, including molecules, robotics, video, and audio, 
where reward alignment is particularly important. We view Diamond Maps as a significant step toward generative models that behave like 
general-purpose inference engines, capable of optimizing for arbitrary user 
preferences, constraints, and objectives on demand.

\section*{Acknowledgements}
Peter Holderrieth thanks Brian Karrer, Yaron Lipman, Ricky T.Q. Chen, Uriel Singer, Karsten Kreis, Brian Trippe, and Julius Berner for helpful discussions and helpful feedback on early drafts of the work. Peter Holderrieth 
acknowledges support from the Machine Learning for Pharmaceutical Discovery and Synthesis (MLPDS) consortium and the NSF Expeditions grant (award 1918839) Understanding the World Through Code. Max Simchowitz acknowledges funding from a TRI U2.0 grant.
Zeynep Akata acknowledges funding by the ERC (853489 - DEXIM) and the Alfried Krupp von Bohlen und Halbach Foundation. Luca Eyring thanks the European Laboratory for Learning and Intelligent Systems (ELLIS) PhD program for support and is supported by the Google PhD Fellowship in Machine Learning.

\section*{Impact Statement}
This paper presents work whose goal is to advance the field of
Machine Learning. There are many potential societal consequences
of our work, none which we feel must be specifically highlighted here.

\bibliography{references}
\bibliographystyle{icml2026}

\newpage
\appendix
\onecolumn
\section{Mathematical Details and Proofs}
\subsection{Details for background on flow matching}
\label{subsec:details_flow_matching}
The conditional vector field is given by \citep{lipman2022flow, lipman2024flow}:
\begin{align}
\label{eq:cond_vf}
u_t(x|z)=\left(\dot{\alpha}-\frac{\dot{\beta}_t}{\beta_t}\alpha_t\right)z+\frac{\dot{\beta}_t}{\beta_t}x
\end{align}

\subsection{Derivation of Value Function and Proof of \cref{eq:guided_marginal_vector_field}}
\label{appendix:guided_marginal_vector_field}
\begin{proof}
We can derive:
\begin{align}
		p_t^r(x)                 & =\frac{1}{Z}\int p_t(x|z)\exp(r(z))\pdata(z)\dd z                      \\
		\Rightarrow \quad \frac{p_t^r(x)}{p_t(x)} & =\frac{1}{Z}\int \exp(r(z))\underbrace{\frac{p_t(x|z)\pdata(z)}{p_t(x)}}_{\text{posterior}}\dd z=\frac{1}{Z}\mathbb{E}[\exp(r(z))|x_t=x]=\frac{1}{Z}\exp(V_t(x))
\end{align}
This shows \cref{eq:value_function_log_ll_ratio}. We can express the conditional and marginal vector field via the conditional and marginal score function \citep{lipman2024flow}:
	\begin{align*}
		u_t(x|z)              & = a_t x + b_t\nabla\log p_t(x|z)                                                                      \\
		u_t(x)                & = a_t x + b_t\nabla\log p_t(x)                                                                        \\
		\text{where}\quad a_t & = \frac{\dot{\alpha}_t}{\alpha_t}, b_t= \beta_t^2\frac{\dot{\alpha}_t}{\alpha_t}-\dot{\beta}_t\beta_t
	\end{align*}
	As this holds for any data distribution, it also holds for the reward-tilted distribution. Therefore, the tilted marginal vector field $u_t^r$ is given by
	\begin{align}
		\label{eq:tilted_marginal_vector_field_with_ratio}
		u_t^r(x)=a_tx+b_t\nabla\log p_t^r(x) = u_t(x)+b_t\nabla\log \frac{p_t^r(x)}{p_t(x)}
	\end{align}
	Taking the gradient of the $\log$ and inserting it into \cref{eq:tilted_marginal_vector_field_with_ratio}, we obtain:
	\begin{align*}
		u_t^r(x) = u_t(x) + b_t\nabla_{x_t}V_t(x_t)
	\end{align*}
	This finishes the proof.
\end{proof}

\subsection{Proof of \cref{prop:posterior_diamond_map_value_function_estimation}}
\label{proof:posterior_diamond_map_value_function_estimation}
\begin{proof}
Recall that the value function is given by
\begin{align}
V_t(x_t) =& \log \mathbb{E}_{z\sim p_{1|t}(\cdot|x_t)}[\exp(r(z))]
\end{align}
Therefore, for $z^1,\cdots, z^N\sim p_{1|t}(\cdot|x_t)$:
\begin{align}
V_t(x_t)\approx & \log \frac{1}{N}\sum\limits_{i=1}^{N}\exp(r(z^i))
\end{align}
is a consistent estimator of the value function. Setting $z^i=X_{0,1}(\bar{x}_0|x_t,t)$ proves that \cref{eq:posterior_diamond_map_consistent_estimator} is a consistent estimator.

Further, by differentiating \cref{eq:posterior_diamond_map_consistent_estimator} by $x_t$ we obtain:
\begin{align}
\nabla_{x_t}V_t(x_t)&\approx \frac{1}{\frac{1}{N}\sum\limits_{i=1}^{N}\exp(r(z^i))}\nabla_{x_t}\frac{1}{N}\sum\limits_{i=1}^{N}\exp(r(z^i))\\
&=\frac{1}{\sum\limits_{i=1}^{N}\exp(r(z^i))}\sum\limits_{i=1}^{N}\nabla_{x_t}\exp(r(z^i))\\
&=\frac{1}{\sum\limits_{i=1}^{N}\exp(r(z^i))}\sum\limits_{i=1}^{N}\exp(r(z^i))\nabla_{x_t}r(z^i)\\
&=\sum\limits_{i=1}^{N}\frac{\exp(r(z^i))}{\sum\limits_{i=1}^{N}\exp(r(z^i))}\nabla_{x_t}r(z^i)
\end{align}
This shows \cref{eq:posterior_diamond_map_grad_consistent_estimator}.
\end{proof}

\subsection{GLASS velocity field for posterior}
\label{app:derivation_glass}
The derivation follows \citep{holderrieth2025glass} for the special case of transition being equal to the posterior $p_{1|t}$. First, we set the following parameters in \citep[Theorem 1]{holderrieth2025glass}
\begin{align*}
	t'=1, \rho=0, \bar{\alpha}_s=\alpha_s,\bar{\sigma}_s=\sigma_s
\end{align*}
and obtain - using their notation
\begin{align*}
	\bar{\gamma}=0, \mu(s) = \begin{pmatrix}
		                         \alpha_t \\
		                         \bar{\alpha}_s
	                         \end{pmatrix},\Sigma(s)=\begin{pmatrix}
		                                                 \sigma_t^2 & 0                \\
		                                                 0          & \bar{\sigma}_s^2
	                                                 \end{pmatrix} \\
	w_1(s)=\dot{\sigma}_s/\sigma_s, w_2(s)=\dot{\alpha}_s-\alpha_s\dot{\sigma}_s/\sigma_s, w_3(s)=0
\end{align*}
The sufficient statistics $S_{s,t}(\bar{x}_s,x_t)$ is then given by 
\begin{align}
	&S_{s,t}(\bar{x}_s,x_t)\\
&=\frac{\mu^T\Sigma^{-1}[x_t,\bar{x}_s]}{\mu^T\Sigma^{-1}\mu}\\
&=\frac{\frac{\alpha_{s}}{\sigma_{s}^2}\bar{x}_s+\frac{\alpha_{t}}{\sigma_{t}^2}x_t}{\frac{\alpha_{s}^2}{\sigma_{s}^2}+\frac{\alpha_{t}^2}{\sigma_{t}^2}}\\
&=\frac{\alpha_{s}\sigma_{t}^2\bar{x}_s+\alpha_t\sigma_{s}^2x_t}{\sigma_{t}^2\alpha_s^2+\alpha_t^2\sigma_s^2},
\end{align}
and the \textbf{time reparameterization} as:
\begin{align}
	t^*(s,t)=g^{-1}\left(\frac{1}{\mu^T\Sigma^{-1}\mu}\right)=g^{-1}\left(\frac{1}{\frac{\alpha_s^2}{\sigma_s^2}+\frac{\alpha_t^2}{\sigma_t^2}}\right)=g^{-1}\left(\frac{\sigma_{t}^2\sigma_{s}^2}{\sigma_{t}^2\alpha_s^2+\alpha_t^2\sigma_s^2}\right),\quad g(t)=\frac{\sigma_t^2}{\alpha_t^2}
\end{align}
Therefore,
\begin{align*}
	D_{\mu(s),\Sigma(s)}(x_t,\bar{x}_s) & =D_{t^*}(\alpha_{t^*}S_{s,t}(\bar{x}_s,x_t))
\end{align*}
Hence, for these choices  \citep[Equation 14]{holderrieth2025glass} implies that
\begin{align*}
	\bar{u}_s(\bar{x}_s|x_t,t)
	 & =w_1(s)\bar{x}_s+w_2(s) D_{t^*}(\alpha_{t^*}S_{s,t}(\bar{x}_s,x_t))
\end{align*}
which coincides with \cref{eq:glass_flow_form} after setting $a_{s,t}=w_1(s)=\dot{\sigma}_s/\sigma_s$ and $b_{s,t}=w_2(s)=\dot{\alpha}_s-\alpha_s\dot{\sigma}_s/\sigma_s$.

\subsection{Training Posterior Diamond Maps from scratch}
\label{appendix:training_from_scratch_posterior_diamond_map}
In the same way as for distillation from GLASS Flows, any self-distillation loss could be used for training Posterior Diamond Maps from scratch. To illustrate this, let $X_{s,s'}^\theta(\bar{x}_s|x_t,t)=\bar{x}_s+(s'-s)v_{s,s'}^\theta(x|x_t,t)$ be a flow map parameterized by an average velocity field $v_{s,s'}^\theta(x)$. Then we can train it via Lagrangian or Eulerian self-distillation \citep{boffi2025buildconsistencymodellearning}
\begin{align}
L_{\text{Lag}}(\theta)&=\mathbb{E}\left[\|v_{s,s}^\theta(\bar{x}_s|x_t,t)-u_s(\bar{x}_s|z)\|^2\right]+\mathbb{E}\left[\|\partial_{s'}X_{s,s'}^\theta(\bar{x}_s|x_t,t)-v_{s',s'}^\theta(X_{s,s'}^\theta(\bar{x}_s|x_t,t)|x_t,t)\|^2\right] \label{eq:lsd_objective} \\
L_{\text{Eul}}(\theta)&=\mathbb{E}\left[\|v_{s,s}^\theta(\bar{x}_s|x_t,t)-u_s(\bar{x}_s|z)\|^2\right]+\mathbb{E}\left[\|\partial_{s}X_{s,s'}^\theta(\bar{x}_s|x_t,t)+\nabla _{\bar{x}_s}X_{s,s'}^\theta(\bar{x}_s|x_t,t)v_{s,s'}(\bar{x}_s|x_t,t)\|^2\right]
\end{align}
where the expectation is over $z\sim \pdata$, the times $s,s',t$ are sampled uniformly such that $s\leq s'$, and $x_t\sim p_t(\cdot|z), \bar{x}_s\sim p_s(\cdot|z)$, and $u_s(\bar{x}_s|z)$ is the conditional vector field (see \cref{eq:cond_vf}).

\subsection{Analytical formulas for $g^{-1}$}
\label{appendix:g_inverse_formulas}

We derive specific formulas for $g, g^{-1}$ for various schedulers $\alpha_{t},\sigma_{t}$. This is stated here for completeness \citep{holderrieth2025glass}.

\paragraph{Linear schedule.} Let us set 
\begin{align*}
\alpha_{t}=t,\quad \sigma_{t}=1-t
\end{align*}
Then:
\begin{align*}
    g(t)&=\frac{(1-t)^2}{t^2}=\left(\frac{1}{t}-1\right)^2\\
    g^{-1}(y)&=\frac{1}{1+\sqrt{y}}\\
\end{align*}

\paragraph{Variance-preserving schedule.} Let us set:
\begin{align*}
    \sigma_{t}=\sqrt{1-t}, \quad \alpha_{t}=\sqrt{t}
\end{align*}
Then:
\begin{align*}
    g(t)&=\frac{1-t}{t}=\frac{1}{t}-1\\
    g^{-1}(y)&=\frac{1}{1+y}
\end{align*}

\paragraph{Variance-exploding schedule.} Let us set:
\begin{align*}
    \sigma_{t}=\sqrt{t},\quad \alpha_{t}=1
\end{align*}
Then:
\begin{align*}
g(t)&=t\\
g^{-1}(y)&=\frac{1}{y}
\end{align*}

\subsection{Derivation of  $s^*$}
\label{appendix:s_star_formula}
Recall the definitions:
\begin{align*}
g(t)&=\frac{\sigma_{t}^2}{\alpha_t^2}\\
t^*(s,t)&=g^{-1}\left(\frac{\sigma_{t}^2\sigma_{s}^2}{\alpha_{s}^2\sigma_t^2+\alpha_{t}^2\sigma_{s}^2}\right)=g^{-1}\left(\frac{1}{\frac{1}{g(t)}+\frac{1}{g(s)}}\right)\\
    s^*(t,t')&=g^{-1}\left(\frac{g(t)g(t')}{g(t)-g(t')}\right)
\end{align*}
As described in the main text, $s^*$ was chosen such that $t^*(s^*,t)=t'$. To prove this, we simply plugin the equations:
\begin{align*}
t^*(s^*(t,t'),t)&=g^{-1}\left(\frac{1}{\frac{1}{g(t)}+\frac{g(t)-g(t')}{g(t)g(t')}}\right)\\
&=g^{-1}\left(\frac{1}{\frac{g(t)}{g(t)g(t')}}\right)\\
&=g^{-1}\left(g(t')\right)\\
&=t'
\end{align*}

\subsection{Formal definition of DDPM transitions and proof of \cref{prop:ddpm_transition_via_posterior_flow_map}}
\label{app:ddpm_transition_via_posterior_flow_map}
The forward process \citep{sohl2015deep,ho2020denoising, song2020score} is defined via a transition kernel $q_{t|t'}$ in reverse-time given by:
\begin{align*}
	q_{t|t'}(x_t|x_{t'}) & =\mathcal{N}\left(\frac{\alpha_{t}}{\alpha_{t'}}x_{t'};\left(\sigma_t^2-\frac{\alpha_t^2}{\alpha_{t'}^2}\sigma_{t'}^2\right)I\right),\quad (t<t')
\end{align*}
The time-reversal SDE or DDPM transitions \citep{song2020score, sohl2015deep, ho2020denoising} are then given by
\begin{align*}
	\pddpm_{t'|t}(x_{t'}|x_{t}) & =q_{t|t'}(x_{t}|x_{t'})\frac{p_{t'}(x_{t'})}{p_t(x_t)}
\end{align*}

\begin{proof}[Proof of \cref{prop:ddpm_transition_via_posterior_flow_map}]
For $\bar{x}_0\sim\mathcal{N}(0,I_d)$, we define the random variable
	\begin{align*}
		\tilde{X}_{t'} = \alpha_{t'}S_{s^*,t}(X_{0,s^*}(\bar{x}_0|x_t,t))
	\end{align*}
	Our goal is to show that
	\begin{align*}
		\tilde{p}(\tilde{X}_{t'}=x_{t'}|X_t=x_t)=\pddpm_{t'|t}(X_{t'}=x_{t'}|X_t=x_t)
	\end{align*}
	i.e. that the distribution of $\tilde{X}_{t'}$ given $X_t=x_t$ coincides with the transition kernel $\pddpm_{t'|t}$ of the DDPM transitions. To show this, it is sufficient to show that the joint distributions coincide
	\begin{align}
		\label{eq:joints}
		\tilde{p}(\tilde{X}_{t'}=x_{t'}, X_t=x_t)=p_{t',t}(X_{t'}=x_{t'},X_t=x_t)
	\end{align}
    for $x_t\sim p_{t}$.  As the DDPM transition of $q_{t'|t}$, we know that:
	\begin{align*}
		p_{t,t'}(x_t,x_{t'}) & =\int q_{t|t'}(x_t|x_{t'})p_{t'}(x_{t'}|x_1)\pdata(x_1)\dd x_1                                                                                                                               \\
		\text{where}\quad
		q_{t|t'}(x_t|x_{t'}) & =\mathcal{N}\left(\frac{\alpha_{t}}{\alpha_{t'}}x_{t'};(\sigma_t^2-\frac{\alpha_t^2}{\alpha_{t'}^2}\sigma_{t'}^2)I\right), p_{t'}(x_{t'}|x_{1})=\mathcal{N}(\alpha_{t'}x_{1};\sigma_{t'}^2I)
	\end{align*}
	In other words, we get a sample $(X_t,X_{t'})\sim p_{t,t'}(x_t,x_{t'})$ via
	\begin{align*}
		X_1\sim & \pdata,                                                                                                                                                        \\
		X_{t'}  & =\alpha_{t'}X_1+\sigma_{t'}\epsilon_1,                                                                                                                         \\
		X_{t}   & =\frac{\alpha_t}{\alpha_{t'}}X_{t'}+\sqrt{\sigma_t^2-\frac{\alpha_t^2}{\alpha_{t'}^2}\sigma_{t'}^2}\epsilon_2\quad \epsilon_1,\epsilon_2\sim\mathcal{N}(0,I_d)
	\end{align*}
	Next, to sample from $\tilde{p}(\tilde{x}_t,x_t)$, we know that we can equivalently do the following procedure: Sample $X_{1}\sim \pdata$ and then noise $X_{t}$ and $X_{s^*}$ independently and map $X_{s^*}$ to $X_{t'}$. In other words, we can use the following sampling procedure:
	\begin{align}
		X_1\sim   & \pdata,                                                                                                                                       \\
		X_{s^*} & = \bar{\alpha}_{s^*}X_1 + \bar{\sigma}_{s^*}\tilde{\epsilon}_1                                                                                      \\
		X_t       & = \alpha_tX_1 + \sigma_{t}\tilde{\epsilon}_2                                                                                                \\
		\label{eq:x_tprime}
		X_{t'}    & =\alpha_{t'}\frac{\bar{\alpha}_{s^*}\sigma_t^2X_{s^*}+\alpha_t\bar{\sigma}_{s^*}^2X_t}{\bar{\alpha}_{s^*}^2\sigma_t^2+\alpha_t^2\bar{\sigma}_{s^*}^2}
	\end{align}
    This is because of the definition of the GLASS velocity field \citep{holderrieth2025glass} and GLASS probability path \citep[Section 4.2.2]{holderrieth2025glass}.
	We now only have to show that $(X_t,X_{t'})$ sampled via this procedure has the same distribution as sampling them via the SDE/DDPM procedure. However, for that, it is sufficient to show that their distribution conditioned on $X_1$ is equal. Therefore, let's  fix $X_1=x_1$. Then for the DDPM/SDE case, it holds that
	\begin{align*}
		(X_t,X_{t'})|X_1=x_1 \sim \mathcal{N}\left(\begin{pmatrix}
				                                           \alpha_{t}x_1 \\
				                                           \alpha_{t'}x_1
			                                           \end{pmatrix},\begin{pmatrix}
				                                                         \sigma_{t}^2                              & \frac{\alpha_t}{\alpha_{t'}}\sigma_{t'}^2 \\
				                                                         \frac{\alpha_t}{\alpha_{t'}}\sigma_{t'}^2 & \sigma_{t'}^2
			                                                         \end{pmatrix}\right)
	\end{align*}
	Conversely, for the posterior flow case,
	we can derive the covariance terms conditioned on $X_1$ using \cref{eq:x_tprime} and the fact that $X_{t},X_{s^*}$ are independent given $X_1$:
	\begin{align*}
		\text{Cov}(X_t,X_{t'}|X_1) & =\alpha_{t'}\frac{\bar{\alpha}_{s^*}\sigma_t^2\text{Cov}(X_{s^*},X_{t}|X_1)+\alpha_t\bar{\sigma}_{s^*}^2\text{Cov}(X_t,X_t|X_1)}{\bar{\alpha}_{s^*}^2\sigma_t^2+\alpha_t^2\bar{\sigma}_{s^*}^2} \\
		                           & =\alpha_{t'}\frac{\alpha_t\bar{\sigma}_{s^*}^2\text{Cov}(X_t,X_t|X_1)}{\bar{\alpha}_{s^*}^2\sigma_t^2+\alpha_t^2\bar{\sigma}_{s^*}^2}                                                           \\
		                           & =\alpha_{t'}\alpha_t\frac{\bar{\sigma}_{s^*}^2\text{Var}(X_t|X_1)}{\bar{\alpha}_{s^*}^2\sigma_t^2+\alpha_t^2\bar{\sigma}_{s^*}^2}                                                               \\
		                           & =\alpha_{t'}\alpha_t\frac{\bar{\sigma}_{s^*}^2\sigma_t^2}{\bar{\alpha}_{s^*}^2\sigma_t^2+\alpha_t^2\bar{\sigma}_{s^*}^2}
	\end{align*}
	Finally, by construction
	\begin{align*}
	\sigma_{t'}^2=\alpha_{t'}^2g(t')=\alpha_{t'}^2g(t^*(s^*,t))=\alpha_{t'}^2\frac{\sigma_t^2\bar{\sigma}_{s^*}^2}{\bar{\alpha}_{s^*}^2\sigma_t^2+\alpha_t^2\bar{\sigma}_{s^*}^2}=\frac{\alpha_{t'}}{\alpha_{t}}\alpha_{t'}\alpha_t\frac{\bar{\sigma}_{s^*}^2\sigma_t^2}{\bar{\alpha}_{s^*}^2\sigma_t^2+\alpha_t^2\bar{\sigma}_{s^*}^2}=\frac{\alpha_{t'}}{\alpha_t}\text{Cov}(X_t,X_t'|X_1)
	\end{align*}
	Similarly, it holds that
	\begin{align*}
		\text{Var}(X_{t'}|X_1)
		=\alpha_{t'}^2\frac{\bar{\alpha}_{s^*}^2\sigma_t^4\bar{\sigma}_{s^*}^2+\alpha_t^2\bar{\sigma}_{s^*}^4\sigma_t^2}{(\bar{\alpha}_{s^*}^2\sigma_t^2+\alpha_t^2\bar{\sigma}_{s^*}^2)^2}
		=
		\alpha_{t'}^2\frac{\bar{\alpha}_{s^*}^2\sigma_t^2+\alpha_t^2\bar{\sigma}_{s^*}^2}{\bar{\alpha}_{s^*}^2\sigma_t^2+\alpha_t^2\bar{\sigma}_{s^*}^2}
		\frac{\sigma_t^2\bar{\sigma}_{s^*}^2}{\bar{\alpha}_{s^*}^2\sigma_t^2+\alpha_t^2\bar{\sigma}_{s^*}^2}
		= \alpha_{t'}^2
		\frac{\sigma_t^2\bar{\sigma}_{s^*}^2}{\bar{\alpha}_{s^*}^2\sigma_t^2+\alpha_t^2\bar{\sigma}_{s^*}^2} =\alpha_{t'}^2g(t')=\sigma_{t'}^2
	\end{align*}
	Therefore, it holds that
	\begin{align*}
		(X_t,X_{t'})|X_1=x_1 \sim \mathcal{N}\left(\begin{pmatrix}
				                                           \alpha_{t}x_1 \\
				                                           \alpha_{t'}x_1
			                                           \end{pmatrix},\begin{pmatrix}
				                                                         \sigma_{t}^2                              & \frac{\alpha_t}{\alpha_{t'}}\sigma_{t'}^2 \\
				                                                         \frac{\alpha_t}{\alpha_{t'}}\sigma_{t'}^2 & \sigma_{t'}^2
			                                                         \end{pmatrix}\right)
	\end{align*}
	which coincides with distribution of the memoryless SDE conditioned on $X_1$. Therefore, they also must coincide if we make $X_1$ random with $\pdata$. This proves that the joints coincide (see \cref{eq:joints}). And therefore, also the  distributions conditioned on $X_t=x_t$. This finishes the proof.
\end{proof}



\subsection{Proof of \cref{prop:weighted_diamond_flow_estimator}}
\label{eq:proof_weighted_diamond}
\begin{proof}
	Let $X_{t,t'}$ be a standard (i.e. deterministic) ground-truth flow map. Recall the definitions
	\begin{align*}
		q_{t'|t}(x_{t'}|x_{t}) & =\mathcal{N}\left(\frac{\alpha_{t'}}{\alpha_{t}}x_{t};(\sigma_{t'}^2-\frac{\alpha_{t'}^2}{\alpha_{t}^2}\sigma_{t}^2)I\right)   \\
		x_{t'}(x_t,\epsilon)   & =\frac{\alpha_{t'}}{\alpha_{t}}x_{t}+\left(\sqrt{\sigma_{t'}^2-\frac{\alpha_{t'}^2}{\alpha_{t}^2}\sigma_{t}^2}\right)\epsilon.
	\end{align*}
By the construction of the flow map, if $x_{t'}\sim p_{t'}$, then it also holds that $z=X_{t',1}(x_{t'})\sim \pdata$. Hence, we can derive:
	\begin{align*}
		V_t(x_t) & =\log \int \exp(r(z))\frac{p_{t}(x_t|z)\pdata(z)}{p_t(x_t)}\dd z                \\
    & =\log \int \exp(r(z))p_{t}(x_t|z)\pdata(z)\dd z -\log p_t(x)               \\
		         & =\log \mathbb{E}_{z\sim \pdata} \left[\exp(r(z))p_{t}(x_t|z)\right] -\log p_t(x_t)                                                                                                          \\
		         & =\log \mathbb{E}_{x_{t'}\sim p_{t'}} \left[\exp(r(X_{t',1}(x_{t'})))p_{t}(x_t|X_{t',1}(x_{t'}))\right]-\log p_t(x_t)                                                        \\
		         & =\log \mathbb{E}_{x_{t'}\sim q_{t'|t}(\cdot|x_t)} \left[\exp(r(X_{t',1}(x_{t'})))\frac{p_{t}(x_t|X_{t',1}(x_{t'}))p_{t'}(x_{t'})}{q_{t'|t}(x_{t'}|x_{t})}\right]-\log p_t(x_t)\\
                & =\log \mathbb{E}_{x_{t'}\sim q_{t'|t}(\cdot|x_t)} \left[\exp\left(\underbrace{r(X_{t',1}(x_{t'})+\log p_{t}(x_t|X_{t',1}(x_{t'}))+\log p_{t'}(x_{t'})-\log q_{t'|t}(x_{t'}|x_{t})}_{=:\tilde{v}_{t,t'}(x_t,x_{t'})}\right)\right]-\log p_t(x_t)\\
                & =\log \mathbb{E}_{\epsilon\sim\mathcal{N}(0,I_d)} \left[\exp\left(\tilde{v}_{t,t'}(x_t,\epsilon)\right)\right]-\log p_t(x_t)
	\end{align*}
    By taking the gradient with respect to $x_t$, we obtain
    \begin{align*}
    \nabla_{x_t}V_t(x_t)=&\nabla_{x_t}\log \mathbb{E}_{\epsilon\sim\mathcal{N}(0,I_d)} \left[\exp\left(\tilde{v}_{t,t'}(x_t,\epsilon)\right)\right]-\nabla_{x_t}\log p_t(x_t)\\
=&\frac{\mathbb{E}_{\epsilon\sim\mathcal{N}(0,I_d)} \left[\exp\left(\tilde{v}_{t,t'}(x_t,\epsilon)\right)(\nabla_{x_t}\tilde{v}_{t,t'}(x_t,\epsilon)-\nabla_{x_t}\log p_t(x_t))\right]}{\mathbb{E}_{\epsilon\sim\mathcal{N}(0,I_d)} \left[\exp\left(\tilde{v}_{t,t'}(x_t,\epsilon)\right)\right]}
    \end{align*}
As the score function $\nabla\log p_t(x_t)$ is tractable, it remains to show to obtain $\tilde{v}_{t,t'}(x_t,\epsilon)$ and $\nabla_{x_t}\tilde{v}_{t,t'}(x_t,\epsilon)$. 

The gradient can be computed directly by taking the sum of the gradient of each individual term:
	\begin{align*}
		 & \nabla_{x_t}\tilde{v}_{t,t'}(x_t,\epsilon)-\nabla_{x_t}\log p_t(x_t)\\  
		 & =~\nabla_{x_t}(r(X_{t',1}(x_{t'}(x_t,\epsilon_i)))+\log p_{t}(x_t|X_{t',1}(x_{t'}(x_t,\epsilon_i))))+\nabla_{x_{t}}\log p_{t'}(x_{t'}(x_t,\epsilon))-\nabla_{x_t}\log q_{t'|t}(x_{t'}(x_t,\epsilon_i)|x_{t})-\nabla_{x_t}\log p_t(x_t)\\
         & \overset{(i)}{=}~\nabla_{x_t}(r(X_{t',1}(x_{t'}(x_t,\epsilon_i)))+\log p_{t}(x_t|X_{t',1}(x_{t'}(x_t,\epsilon_i))))+\nabla_{x_{t'}}\log p_{t'}(x_{t'})\frac{\alpha_{t'}}{\alpha_t}-0-\nabla_{x_t}\log p_t(x_t)\\
         &=\nabla_{x_t}r_{\mathrm{recov}}(x_t,\epsilon)+\delta_{\mathrm{score}}(x_t,\epsilon)
	\end{align*}
	where in $(i)$ we used that $\log q_{t'|t}(x_{t'}(x_t,\epsilon)|x_{t})=C-\frac{1}{2}\|\epsilon\|^2$ for a constant $C$ independent of $x_t$.\\

    It remains to show how to obtain $\tilde{v}_{t,t'}(x_t,\epsilon)$.  First, note that $\tilde{v}_{t,t'}(x_t,\epsilon)$ in itself is intractable as we do not know $p_{t'}(x_{t'})$. The key insight here is that we do not need to know the exact value of $\tilde{v}_{t,t'}(x_t,\epsilon)$ but only up to a constant in $\epsilon$. Specifically, we can compute:
	\begin{align}
		\log p_{t'}(x_{t}') & =\log p_{t'}(x_{t})+\int \limits_{0}^{1}\frac{\dd}{\dd u}\log p_{t'}(x_{t}+u(x_{t'}-x_{t}))\dd u\nonumber                     \\
		                    & =\log p_{t'}(x_{t})+\left[\int \limits_{0}^{1}\nabla\log p_{t'}(x_{t}+u(x_{t'}-x_{t}))^T\dd u\right]^T(x_{t'}-x_{t})\nonumber \\
		                    & =:\log p_{t'}(x_{t})+\gamma_{t,t'}(x_{t},x_{t'})
		\label{eq:gamma_grad},
	\end{align}
	where above, $\gamma_{t,t'}(x_{t},x_{t'})$ is the second term on the second line. Hence, we can compute that 
    \begin{align*}
        \tilde{v}_{t,t'}(x_t,\epsilon)=\underbrace{r_{\mathrm{recov}}(x_t,\epsilon)+\log p_{t'}(x_{t})+\frac{1}{2}\|\epsilon\|^2+\gamma_{t,t'}(x_{t},x_{t'})}_{=:v_{t,t'}(x_t,\epsilon)}+\textcolor{ForestGreen}{\log p_{t'}(x_{t})+C}=v_{t,t'}(x_t,\epsilon)+\textcolor{ForestGreen}{\log p_{t'}(x_{t})+C}
    \end{align*}
While the green part is intractable, it can be dropped as it is a constant in $\epsilon$. Specifically, we obtain
\begin{align}
\nabla_{x_t}V_t(x_t)
=&\frac{\mathbb{E}_{\epsilon\sim\mathcal{N}(0,I_d)} \left[\exp\left(\tilde{v}_{t,t'}(x_t,\epsilon)\right)(\nabla_{x_t}\tilde{v}_{t,t'}(x_t,\epsilon)-\nabla_{x_t}\log p_t(x_t))\right]}{\mathbb{E}_{\epsilon\sim\mathcal{N}(0,I_d)} \left[\exp\left(\tilde{v}_{t,t'}(x_t,\epsilon)\right)\right]}\\
=&\frac{\exp(\textcolor{ForestGreen}{\log p_{t'}(x_{t})+C})}{\exp(\textcolor{ForestGreen}{\log p_{t'}(x_{t})+C})}\frac{\mathbb{E}_{\epsilon\sim\mathcal{N}(0,I_d)} \left[\exp\left(v_{t,t'}(x_t,\epsilon)\right)(\nabla_{x_t}\tilde{v}_{t,t'}(x_t,\epsilon)-\nabla_{x_t}\log p_t(x_t))\right]}{\mathbb{E}_{\epsilon\sim\mathcal{N}(0,I_d)} \left[\exp\left(v_{t,t'}(x_t,\epsilon)\right)\right]}\\
=&\frac{\mathbb{E}_{\epsilon\sim\mathcal{N}(0,I_d)} \left[\exp\left(v_{t,t'}(x_t,\epsilon)\right)(\nabla_{x_t}\tilde{v}_{t,t'}(x_t,\epsilon)-\nabla_{x_t}\log p_t(x_t))\right]}{\mathbb{E}_{\epsilon\sim\mathcal{N}(0,I_d)} \left[\exp\left(v_{t,t'}(x_t,\epsilon)\right)\right]}
\end{align}
By taking Monte Carlo samples, we obtain the theorem.
\end{proof}

\subsection{Value function estimation with Weighted Diamond Map}
\label{appendix:value_function_estimation_with_weighted_diamond_map}
Finally, we remark that one could also estimate the value function (instead of its gradient) via a weighted Diamond map. By the Continuity equation
\begin{align*}
	 & \log p_t(x_t)-\log p_{t'}(x_t)                                                  \\
	 & =\int\limits_{t'}^{t}\partial_{r}\log p_r(x_t)\dd r                             \\
	 & =\int\limits_{t'}^{t}-\nabla\log p_r(x_t)^Tu_r(x_t)-\nabla\cdot u_r(x_t)\dd r   \\
	 & =-\mathbb{E}_{r\sim \text{Unif}_{[t',t]},\epsilon\sim \mathcal{N}(0,I_d)}\left[
		\nabla\log p_r(x_t)^Tu_r(x_t)+\epsilon^TD_{x_{t}}u_r(x_t)\epsilon
		\right]
\end{align*}
Therefore, we know that:
\begin{align*}
	 & V_t(x_t)                                                                                                                                                                                                 \\
	 & =\log \mathbb{E}_{q_{t'|t}(\cdot|x_t)} \left[\exp(r(z)+\gamma_{t}(x_{t'}))\frac{p_{t}(x_t|z)}{q_{t'|t}(x_{t'}|x_{t})}\right]+\mathbb{E}_{r\sim \text{Unif}_{[t',t]},\epsilon\sim \mathcal{N}(0,I_d)}\left[
		\nabla\log p_r(x_t)^Tu_r(x_t)+\epsilon^TD_{x_{t}}u_r(x_t)\epsilon
		\right]
\end{align*}
The last term requires backpropagation and constitutes a Hutchinson trace estimator term. Hence, this might be less efficient, yet note that the Hutchinson's trace estimator is for fixed $x_t$ (does not depend on the sample $x_{t'}$ that is drawn) and therefore the second summand constitutes a fixed offset.

\section{Details on Diamond Maps}

\subsection{Search or Sequential Monte Carlo (SMC) with Posterior Diamond Maps}
\begin{algorithm}[tb]
	\caption{Sequential Monte Carlo with Diamond Maps}
	\label{alg:diamond_flow_smc}
	\begin{algorithmic}[1]
		\STATE \textbf{Require:} Diamond flow $X_{s,s'}^\theta(\bar{x}_s|x_t,t)$, $N$ transitions, $M$ num. of partic., Monte Carlo samples $K$, reward $r$
		\STATE \textbf{Init: }$x_0^m\sim \mathcal{N}(0,I_d),V_0^m=U_0^m=0$\quad ($m\leq M)$
			\STATE \textbf{Set }$h\leftarrow 1/N, t\leftarrow 0$
			\FOR{$n = 0$ to $N-1$}
			\FOR{$m = 1$ to $M$}
			\STATE $x_{t+h}^m\sim  \pddpm_{t+h|t}(\cdot|x_t^m)$ (via \cref{prop:ddpm_transition_via_posterior_flow_map})
			\STATE $V_{t+h}^m\leftarrow 0$
			\FOR{
				$k=1$ to $K$
			}
			\STATE $z^k\leftarrow X_{0,1}^\theta(\bar{x}_0^{k,m}|x_{t+h},t+h)\quad(\bar{x}_0^{k,m}\sim \mathcal{N}(0,I_d)$)
		\STATE $V_{t+h}^m\leftarrow V_{t+h}^m+\frac{1}{K}\exp(r(z^k))$
		\ENDFOR
		\STATE $V_{t+h}^m=\log V_{t+h}^m$
		\STATE $U_{t+h}^m\leftarrow U_{t}^m + V_{t+h}^m-V_{t}^m$
		\ENDFOR
        \STATE $t\leftarrow t+h$

        \IF{$\text{ESS}(U_t^1,\cdots, U_t^M)<\text{threshold}$}
		\STATE \textbf{Resample particles: } $i_k\sim \text{softmax}(U_t^1,\dots, U_t^M)$
        \STATE $U_{t}^i\leftarrow 0$
        \STATE $V_{t}^k=V_{t}^{i_k}$
        \STATE $x_{t}^k\leftarrow x_{t}^{i_k}, V_{t}^k\leftarrow V_{t}^{i_k}$
        \ENDIF
		\ENDFOR
		\STATE \textbf{Return: }$x_1^1,\cdots, x_1^M$
	\end{algorithmic}
\end{algorithm}
We discuss Sequential Monte Carlo (SMC) with Diamond Maps. SMC evolves $M$ particles $X_t^1,\cdots, X_t^M$ simultaneously. To evolve them in time, we use a proposal distribution given by the DDPM transitions either sampled via \cref{prop:ddpm_transition_via_posterior_flow_map} in one step. Further, resampling happens based on potentials given by $U_{t,t'}(x_t,x_{t'})$. Here, we choose  $U_{t,t'}(x_t,x_{t'})=V_{t'}^r(x_{t'})-V_t(x_t)$, i.e. the potentials favors particles promising to increase the value function most. We can estimate the value function $V_t(x_t)$ via
\begin{align}
	V_t(x_t)\approx \log \frac{1}{N}\sum\limits_{i=1}^{N}\exp(r(z^i))\quad z^i=X_{0,1}^\theta(\bar{x}_0^i|x_t,t)
\end{align}
where $\bar{x}_0^i\sim \mathcal{N}(0,I_d)$ for every $i=1,\cdots, N$.  The full method is summarized in \cref{alg:diamond_flow_smc}.

Same as for guidance, achieving the same estimators of the value function would be $\mathcal{O}(N)$ steps more expensive for a standard flow model. Here, this speed-up is explained by two factors: (1) Transitions $\pddpm_{t'|t}$ can be obtained in one step (2) Samples $p_{1|t}$ can be obtained in one step.

\textbf{Remark - Search.} We can apply Diamond Maps similarly to search and obtain the same speed-up. Search proceeds similar to SMC but unlike SMC it does not keep a population of particles but  collapses all particles into a single particle with maximum potential, i.e. the softmax is taken with respect to zero temperature. With this difference, \cref{alg:diamond_flow_smc} can be easily modified to apply Diamond Maps to search methods.


\subsection{General GLASS transitions}
\paragraph{General transitions.} We note that our framework would also work for general GLASS transitions (i.e. not limited to the DDPM transitions) and we could similarly distill these transitions into a flow map. However, we would need to condition the flow map on an additional input $t'$, which adds complexity and redundancy to the learning problem that we choose to avoid here by leveraging~\cref{prop:ddpm_transition_via_posterior_flow_map}.

\subsection{Guidance with Weighted Diamond Maps}
\label{appendix:guidance_with_weighted_diamond_maps}
We present an algorithm in 
\cref{alg:guidance_with_weighted_diamond_maps}.

\begin{algorithm}[tb]
\caption{Guidance with Weighted Diamond Maps}
\label{alg:guidance_with_weighted_diamond_maps}
\begin{algorithmic}[1]
		\STATE \textbf{Require:} Pre-trained flow map $X_{t,t'}(x_t)$ and flow model $u_t(x)$ (usually, flow map $X_{t,t'}(x)$ includes $u_t(x)$), $N$ simulation steps, $K$ Monte Carlo samples, differentiable reward $r$, guidance window $0\leq t_{\text{guid-min}}\leq t_{\text{guid-max}}\leq 1$. $\lambda$ gradient norm scale.
		\STATE \textbf{Init: }$x_0\sim \mathcal{N}(0,I_d)$
		\STATE \textbf{Set }$h\leftarrow 1/N, t\leftarrow 0$
		\FOR{$n = 0$ to $N-1$}
        \STATE $t'=g^{-1}\left(\lambda \frac{\sigma_{t}^2}{\alpha_{t}^2}\right)$ (increase signal-to-noise ratio by a factor of $\lambda$)
        \STATE $v_t\leftarrow u_t(x)$
        \IF{$t\in [t_{\text{guid-min}},t_{\text{guid-max}}]$}
        \STATE $v_{t',t}\leftarrow u_{t'}(x)$
        \STATE $s_{t}\leftarrow \text{ConversionVelocityToScore}(v_t,t)$
        \STATE $s_{t'}\leftarrow \text{ConversionVelocityToScore}(v_{t',t},t')$
		\FOR{$k = 1$ to $K$}
        \STATE $x_{t'}^k\leftarrow \frac{\alpha_{t'}}{\alpha_{t}}x_{t}+\sqrt{\sigma_{t'}^2-\frac{\alpha_{t'}^2}{\alpha_{t}^2}\sigma_{t}^2}\epsilon^k$\quad ($\epsilon^k\sim\mathcal{N}(0,I_d)$)
		\STATE $z^k\leftarrow X_{t',1}^\theta(x_{t'}^k)$
\STATE $r_{\text{recov}}^k=r(z^k)-\frac{\|x_{t}-\alpha_{t}z^k\|^2}{2\sigma_{t}^2}$
\STATE $\nabla_{x_t}r_{\text{recov}}^k\leftarrow r_{\text{recov}}^k.\text{backward}(x_t)$
\IF{UseTweedieApproximation}
\STATE $s_{t'}^k=\frac{\alpha_{t'}z^k-x_{t'}}{\sigma_{t'}^2}$
\ELSE
\STATE $v_{t'}^k\leftarrow u_{t'}(x_{t'}^k)$
\STATE  $s_{t'}^k\leftarrow \text{ConversionVelocityToScore}(v_{t'}^k,t')$
\ENDIF

\STATE $\delta_{\text{score}}^i=s_{t'}^k\frac{\alpha_{t'}}{\alpha_{t}}-s_{t}$
\STATE $\gamma_{k}\leftarrow \frac{1}{2}\left(s_{t'}^k+s_{t'}\right)^T(x_{t'}^k-x_{t})$
\STATE $v_k\leftarrow r(z^k)$ \ \ approximate with only reward weights 
	\ENDFOR
        \STATE $g_{\text{reward}} \leftarrow \sum_k \text{softmax}(\ldots)[k] \nabla_{x_t} r(z^k)$
        \STATE $g_{\text{likelihood}} \leftarrow \sum_k \text{softmax}(\ldots)[k] \nabla_{x_t} \left(-\frac{\|x_t - \alpha_t z^k\|^2}{2\sigma_t^2}\right)$
        \STATE $g_{\text{score}} \leftarrow \sum_k \text{softmax}(\ldots)[k] \, \delta^k_{\text{score}}$
        \IF{Normalize}
        \STATE $g_{\text{reward}}\leftarrow\frac{g_{\text{reward}}}{\|g_{\text{reward}}\|}$
        \STATE $g_{\text{likelihood}}\leftarrow\frac{g_{\text{likelihood}}}{\|g_{\text{likelihood}}\|}$
        \STATE $g_{\text{score}}\leftarrow\frac{g_{\text{score}}}{\|g_{\text{score}}\|}$
        \ENDIF
        \STATE $\nabla_{x_t} V_t(x_t) \leftarrow \lambda \left( g_{\text{reward}} + g_{\text{likelihood}} +  g_{\text{score}}\right)$
        \STATE $u_t^r\leftarrow v_t+b_t\nabla_{x_t}V_t(x_t)$
        \ELSE 
        \STATE $u_t^r=v_t$
        \ENDIF
		\STATE $X_{t+h}\leftarrow X_{t}+hu_t^r$
		\STATE $t\leftarrow t+h$
		\ENDFOR
		\STATE \textbf{Return: }$X_1$
	\end{algorithmic}
\end{algorithm}

\section{Extended Related Work}
\paragraph{Distilled Models for reward alignment.} The LATINO sampler \citep{spagnoletti2025latino} iteratively noises a samples $z$ and then denoises them with a one-step sampler.

\label{appendix:extended_related_work}
\paragraph{Inference-time reward alignment.} Guidance methods  \citep{chung2022diffusion,abdolmalekilearning,ye2024tfg,yu2023freedom,bansal2023universal,he2023manifold, graikos2022diffusion, song2023loss, song2023pseudoinverse, feng2025guidance} were used particularly for solving inverse problems such as Gaussian deblurring or inpainting. The inaccurate denoiser approximation (see \cref{eq:denoiser_approximation}) is a known limitation of guidance methods and many methods aim to find better - yet biased - approximations \citep{song2023pseudoinverse}. Here, we are taking a more ``radical'' step of learning the posterior directly. SMC~\citep{singhal2025general,skreta2025feynman,wu2023practical,mark2025feynman, he2025rne} and search~\citep{li2025dynamic, zhang2025inferencetimescalingdiffusionmodels} evolve a population of particles and are based on filtering out unpromising particles. Their main challenges are expensive or biased estimation of the value function $V_t(x_t)$ or a collapse of diversity of the population of particles. Many other approaches \citep{yeh2024training,wu2024principled,krishnamoorthy2023diffusion} exist and we refer to \citep{uehara2025inferencetimealignmentdiffusionmodels} for a review.

\textbf{Reward fine-tuning.} Reward fine-tuning methods fine-tune flow and diffusion models at training based on a variery of RL techniques such as GRPO \citep{xue2025dancegrpo,li2025mixgrpo,liu2505flow}, stochastic optimal control \citep{liu2505flow, domingo2024adjoint}, DPO \citep{wallace2024diffusion}. Many of these methods require expensive simulation and stochastic transitions for exploration during training \citep{liu2025flow,xue2025dancegrpo, li2025mixgrpo, domingo2024adjoint} that we learn in a one-step sampler.

\section{Experimental Details and Further Experiments}
Code for our experiments can be found under \url{https://github.com/PeterHolderrieth/diamond_maps}.

\subsection{Posterior Diamond Map}
\label{appendix:posterior_diamond_map_exps}

\subsubsection{CIFAR10/CelebA-64}\label{subsec:small_exp_setup}


\begin{table}[t]
\centering

\begin{minipage}[t]{0.48\textwidth}
\centering
\begin{tabular}{lcc}
\toprule
 & \textbf{CIFAR-10} & \textbf{CelebA-64} \\
\midrule
\textit{Dataset Properties} & & \\
Dimensionality & $3 \times 32 \times 32$ & $3 \times 64 \times 64$\\
Samples & 50k & 203k \\
\midrule
\textit{Network} & & \\
Architecture & EDM2 & EDM2 \\
Hidden/base channels & 128 & 128 \\
Channel multipliers & [2, 2, 2] & [1, 2, 3, 4] \\
Residual blocks & 4 per resolution & 3 per resolution \\
Attention resolutions & $16 \times 16$ & $16 \times 16, 8 \times 8$ \\
Dropout & 0.13 & 0.0 \\
\midrule
\textit{Hyperparameters} & & \\
Batch size & 512 & 256\\
Training steps & 400,000 & 800,000 \\
Optimizer & RAdam & RAdam \\
Learning rate & $10^{-2}$ & $10^{-2}$ \\
LR schedule & Sqrt decay at 35k & Sqrt decay at 35k \\
Gradient clipping  & 1.0 & 1.0 \\
Diagonal fraction & 0.75 & 0.75 \\
EMA decay & 0.9999 & 0.9999 \\
\midrule
\textit{Evaluation} & & \\
Metric  & FID & FID \\
Sample count & 50,000 & 50,000 \\
\midrule
\textit{Methods} & & \\
Loss  & LSD & LSD \\
\midrule
\textit{Compute} & & \\
GPU  & 8x L40S & 6x 40GB A100 \\
Precision & fp32 & bfloat16 \\
\bottomrule
\end{tabular}
\end{minipage}
\hfill
\begin{minipage}[t]{0.48\textwidth}
\centering
\begin{tabular}{lc}
\toprule
 & \textbf{ImageNet1k} \\
\midrule
\textit{Dataset Properties} & \\
Dimensionality & $3 \times 256 \times 256$ \\ 
Samples & $1281167$ \\
Latent Size & $4 \times 32 \times 32$ \\
Encoder & sd-vae-ft-ema \\
\midrule
\textit{Network} & \\
Architecture & SiT B/2 \\
CFG & 4 \\
\midrule
\textit{Hyperparameters} & \\
Batch size & 256\\
Training steps & 320,000 \\
Optimizer & RAdam \\
Learning rate & $10^{-4}$ \\
LR schedule & Constant \\
Gradient clipping  & 1.0 \\
Diagonal fraction & 0.75 \\
EMA decay & 0.9999 \\
\midrule
\textit{Evaluation} & \\
Metric  & FID \\
Sample count & 50,000 \\
\midrule
\textit{Methods} & \\
Loss  & LSD \\
Initialized From & MeanFlow B/2 w/ CFG 2 \\
Teacher Model & Flow SiT L/2 \\
\midrule
\textit{Compute} & \\
GPU  & 8x L40S \\
Precision & bfloat16 \\
\bottomrule
\end{tabular}
\end{minipage}

\caption{\textbf{Experimental setup}. Flow map (for CelebA64) and Posterior Diamond Map both trained using these configurations. Posterior Diamond Map contains additional embeddings and channels for extra time steps and conditioning variables as described in \cref{eq:lsd_objective}.}
\label{tab:experimental_setup}
\end{table}

\textbf{Architecture} There are 2 relevant networks for CIFAR10 and CelebA-64, both unconditional and based on the EDM2 architecture \cite{karrasetal2023}: a teacher network and the Posterior Diamond Map. The teacher network is a flow map using the EDM2 architecture with configuration from \cref{tab:experimental_setup} that has time parameters $t, t'$ and input $x_t$. The Posterior Diamond Map is a similar EDM2 architecture but with time parameters $t, t', s, s'$, input $\bar{x}_s$, and conditioning variable $x_t$.

\textbf{Experimental Setup} We fix $t' = 1$ for easier training but can still sample arbitrary $t'$ using Diamond DDPM Early stop sampling. Because of singularities for calculating the time reparameterization \cref{eq:time_reparameterization} and the GLASS velocity field \cref{eq:glass_posterior_flow}, we limit $t$ to $[0.01, 0.99]$ during training and sampling. We sample $t$ uniformly from $U([0.01, 0.99])$ and $s, s'$ uniformly from the upper triangle of $[0.01, 0.99]^2$. We embed $t, t' - t$ and $s, s' - s$ based on empirical benefits seen in \cite{boffi2025buildconsistencymodellearning}. We also similarly employ loss reweighting but calculate weights with all 4 diamond time steps instead of just 2 for a flow map. We use linear outer time schedulers $\alpha_t = t, \sigma_t = 1 - t$ and the corresponding inner time schedulers $\bar{\alpha}_s, \bar{\sigma}_s$ from \cite{holderrieth2025glass}. We compute FID for 4 different NFEs: $1,2,4,8$ (\cref{tab:benchmark_results}). The flow method uses the flow map but gets the velocity by purely using the diagonal $t = t'$. The flow map method uses the vanilla flow map sampling with $t \rightarrow t'$ where $t < t'$. GLASS follows the same sampling procedure as \cite{holderrieth2025glass} and uses the same velocity as the flow. Lastly, we sample from the $p^{\text{DDPM}}_{1|0}(x_1|x_0)$ transition kernel with inner steps equal to NFE for the Posterior Diamond Map.

For the posterior samples, we compare the unconditional EDM2 Posterior Diamond Map to the unconditional EDM2 flow map.

\begin{table}[h]
\centering
\begin{tabular}{llcccc}
\toprule
\multirow{2}{*}{Dataset} & \multirow{2}{*}{Method} & \multicolumn{4}{c}{Step Count} \\
\cmidrule(lr){3-6}
 & & 1 & 2 & 4 & 8 \\
\midrule
\multirow{4}{*}{\begin{tabular}{@{}c@{}}CIFAR-10\\(FID $\downarrow$)\end{tabular}} 
 & Flow & 402.049 & 145.275 & 41.231 & 14.187 \\
 & Flow Map & \textbf{10.604} & \textbf{4.596} & \textbf{4.180} & \textbf{4.880} \\
 & GLASS & 378.674 & 157.554 & 39.472 & 11.597 \\
 & Diamond Map & 12.551 & 5.799 & 5.803 & 6.732 \\
\midrule
\multirow{4}{*}{\begin{tabular}{@{}c@{}}CelebA-64\\(FID $\downarrow$)\end{tabular}} 
 & Flow & 199.933 & 89.582 & 45.461 & 24.762 \\
 & Flow Map & \textbf{9.534} & \textbf{4.052} & \textbf{3.083} & \textbf{3.146} \\
 & GLASS & 208.509 & 95.245 & 51.803 & 26.333 \\
 & Diamond Map & 16.084 & 9.160 & 6.736 & 5.776 \\
\midrule
\multirow{4}{*}{\begin{tabular}{@{}c@{}}ImageNet\\(FID $\downarrow$)\end{tabular}} 
 & Flow & 266.22 & 135.85 & 16.90 & 11.81 \\
 & Flow Map & \textbf{6.06} & \textbf{5.22} & \textbf{5.36} & \textbf{5.51} \\
 & Diamond Map & 10.74 & 9.67 & 10.10 & 10.44 \\
\bottomrule
\end{tabular}
\caption{\textbf{Pretraining results}. FID across various step counts for CIFAR-10, CelebA-64, and ImageNet datasets. Best method per dataset and step count shown in \textbf{bold}. As one can see, the flow map contained in a Posterior Diamond Map performs almost on par with the standard flow Map.}
\label{tab:benchmark_results}
\end{table}
\begin{table}[h]
\centering
\small
\begin{tabular}{llcccccccc}
\toprule
\multirow{3}{*}{Dataset} & \multirow{3}{*}{Time $t$} & \multicolumn{8}{c}{Steps $N$} \\
\cmidrule(lr){3-10}
& & \multicolumn{2}{c}{$N = 1$} & \multicolumn{2}{c}{$N = 2$} & \multicolumn{2}{c}{$N = 4$} & \multicolumn{2}{c}{$N = 8$} \\
\cmidrule(lr){3-4} \cmidrule(lr){5-6} \cmidrule(lr){7-8} \cmidrule(lr){9-10}
& & GLASS & Diamond & GLASS & Diamond & GLASS & Diamond & GLASS & Diamond \\
\midrule
\multirow{4}{*}{\begin{tabular}{@{}c@{}}CelebA-64\\(FID $\downarrow$)\end{tabular}}
& $t = 0.1$ & 283.278 & \textbf{11.731} & 97.932 & \textbf{5.624}  & 21.843 & \textbf{5.773} & 8.796 & \textbf{6.876} \\
& $t = 0.2$ & 118.422 & \textbf{9.719}  & 46.968 & \textbf{5.000}  & 15.371 & \textbf{5.420} & 8.574 & \textbf{6.360} \\
& $t = 0.3$ & 57.299  & \textbf{7.104}  & 29.438 & \textbf{3.942}  & 13.361 & \textbf{4.245} & 7.928 & \textbf{4.921} \\
& $t = 0.5$ & 23.727  & \textbf{4.359}  & 16.944 & \textbf{2.867}  & 10.674 & \textbf{2.825} & 6.690 & \textbf{3.117} \\
\midrule
\multirow{4}{*}{\begin{tabular}{@{}c@{}}ImageNet\\(FID $\downarrow$)\end{tabular}}
& $t = 0.1$ & 213.559 & \textbf{9.911} & 86.231 & \textbf{8.505} & 12.360 & \textbf{9.014} & 9.587 & \textbf{9.227} \\
& $t = 0.2$ & 130.868 & \textbf{8.941} & 48.107 & \textbf{6.230} & 10.225 & \textbf{6.427} & 7.334 & \textbf{6.492} \\
& $t = 0.3$ & 79.131  & \textbf{7.991} & 28.185 & \textbf{4.678} & 8.385 & \textbf{4.420} & 5.839 & \textbf{4.469} \\
& $t = 0.5$ & 18.660  & \textbf{4.619} & 9.161 & \textbf{2.797} & 4.807 & \textbf{2.320} & 3.380 & \textbf{2.257} \\
\bottomrule
\end{tabular}
\caption{\textbf{Posterior recovery results}. FID across step counts for GLASS and Posterior Diamond Map at different timesteps $t$. Bold indicates the better method at each (dataset, $t$, $N$) combination.}
\label{tab:posterior_results}
\end{table}
\begin{table}[h]
\centering
\begin{tabular}{llcccccc}
\toprule
\multirow{2}{*}{Dataset} & \multirow{2}{*}{Method} & \multicolumn{6}{c}{Step Count} \\
\cmidrule(lr){3-8}
 & & 1 & 2 & 3 & 5 & 10 & 20 \\
\midrule
\multirow{2}{*}{\begin{tabular}{@{}c@{}}CelebA-64\\(FID $\downarrow$)\end{tabular}}
 & Renoising & 16.316 & 11.107 & 10.239 & 10.593 & 12.728 & 17.071 \\
 & Early Stopping & \textbf{16.292} & \textbf{10.256} & \textbf{8.614} & \textbf{7.699} & \textbf{7.763} & \textbf{8.444} \\
\midrule
\multirow{2}{*}{\begin{tabular}{@{}c@{}}ImageNet\\(FID $\downarrow$)\end{tabular}}
 & Renoising & 10.783 & 10.186 & 10.793 & 11.799 & 13.212 & 14.385 \\
 & Early Stopping & \textbf{10.739} & \textbf{9.878} & \textbf{10.447} & \textbf{11.167} & \textbf{11.784} & \textbf{12.007} \\
\bottomrule
\end{tabular}
\caption{FID Comparison between renoising and early stopping for CelebA-64 and ImageNet datasets. Best method at each step count is shown in \textbf{bold}.}
\label{tab:renoise_results}
\end{table}
\begin{table}[t]
\centering
\small

\begin{tabular}{lccccccccc}
\toprule
\multicolumn{10}{c}{\textbf{LPIPS} $\downarrow$} \\
\midrule
Method & 12 & 36 & 48 & 72 & 144 & 288 & 504 & 768 & 1008 \\
\midrule
DPS              & $0.768$ & $0.658$ & $0.640$ & $0.625$ & $\mathbf{0.614}$ & $0.614$ & $0.622$ & $0.624$ & $0.629$ \\
FMAP             & $\mathbf{0.716}$ & $\mathbf{0.645}$ & $\mathbf{0.636}$ & $\mathbf{0.622}$ & $0.618$ & $0.620$ & $0.628$ & $0.637$ & $0.638$ \\
DIAMOND ($N_k = 1$) & $0.747$ & $0.648$ & $0.640$ & $0.625$ & $0.616$ & $0.616$ & $0.614$ & $0.616$ & $0.617$ \\
DIAMOND          & $0.773$ & $0.660$ & $0.645$ & $0.631$ & $0.618$ & $\mathbf{0.608}$ & $\mathbf{0.611}$ & $\mathbf{0.605}$ & $\mathbf{0.606}$ \\
\bottomrule
\end{tabular}

\vspace{1em}

\begin{tabular}{lccccc}
\toprule
\multicolumn{6}{c}{\textbf{KID} $\downarrow$} \\
\midrule
Method & 12 & 36 & 48 & 72 & 144 \\
\midrule
DPS              & $7.26\times10^{-2}$ & $\mathbf{-1.54\times10^{-3}}$ & $\mathbf{-1.34\times10^{-3}}$ & $\mathbf{-1.62\times10^{-3}}$ & $\mathbf{-1.46\times10^{-3}}$ \\
FMAP             & $3.41\times10^{-2}$ & $3.88\times10^{-4}$ & $-1.05\times10^{-3}$ & $-1.05\times10^{-3}$ & $-1.02\times10^{-3}$ \\
DIAMOND ($N_k = 1$) & $\mathbf{1.85\times10^{-2}}$ & $4.11\times10^{-4}$ & $9.92\times10^{-4}$ & $5.88\times10^{-4}$ & $8.72\times10^{-4}$ \\
DIAMOND          & $5.14\times10^{-2}$ & $9.45\times10^{-4}$ & $1.41\times10^{-3}$ & $5.55\times10^{-4}$ & $3.45\times10^{-4}$ \\
\bottomrule
\end{tabular}

\vspace{1em}

\begin{tabular}{lccccc}
\toprule
\multicolumn{5}{c}{\textbf{KID} $\downarrow$} \\
\midrule
Method & 288 & 504 & 768 & 1008 \\
\midrule
DPS              & $\mathbf{-1.51\times10^{-3}}$ & $8.11\times10^{-4}$ & $2.13\times10^{-3}$ & $3.11\times10^{-3}$ \\
FMAP             & $-1.12\times10^{-3}$ & $\mathbf{-7.59\times10^{-4}}$ & $3.61\times10^{-3}$ & $2.33\times10^{-3}$ \\
DIAMOND ($N_k = 1$) & $1.56\times10^{-3}$ & $1.33\times10^{-3}$ & $\mathbf{-1.48\times10^{-3}}$ & $\mathbf{-1.62\times10^{-3}}$ \\
DIAMOND          & $1.95\times10^{-4}$ & $-1.55\times10^{-4}$ & $4.81\times10^{-4}$ & $-5.47\times10^{-4}$ \\
\bottomrule
\end{tabular}

\caption{Super resolution 32x results across methods and NFE for LPIPS and KID on the ImageNet1k validation split. Best metric at each NFE is shown in bold. We use $2 \cdot K + \sum_k N_k M$ with $M = 2$ for DPS and $M = 1$ otherwise for the NFEs in this table. This is used to create a standard set of NFEs that all the results can map to exactly.}
\label{tab:inverse_problem_lpips_kid}
\end{table}

\begin{figure}[htbp]
  \centering
\includegraphics[width=0.8\columnwidth]{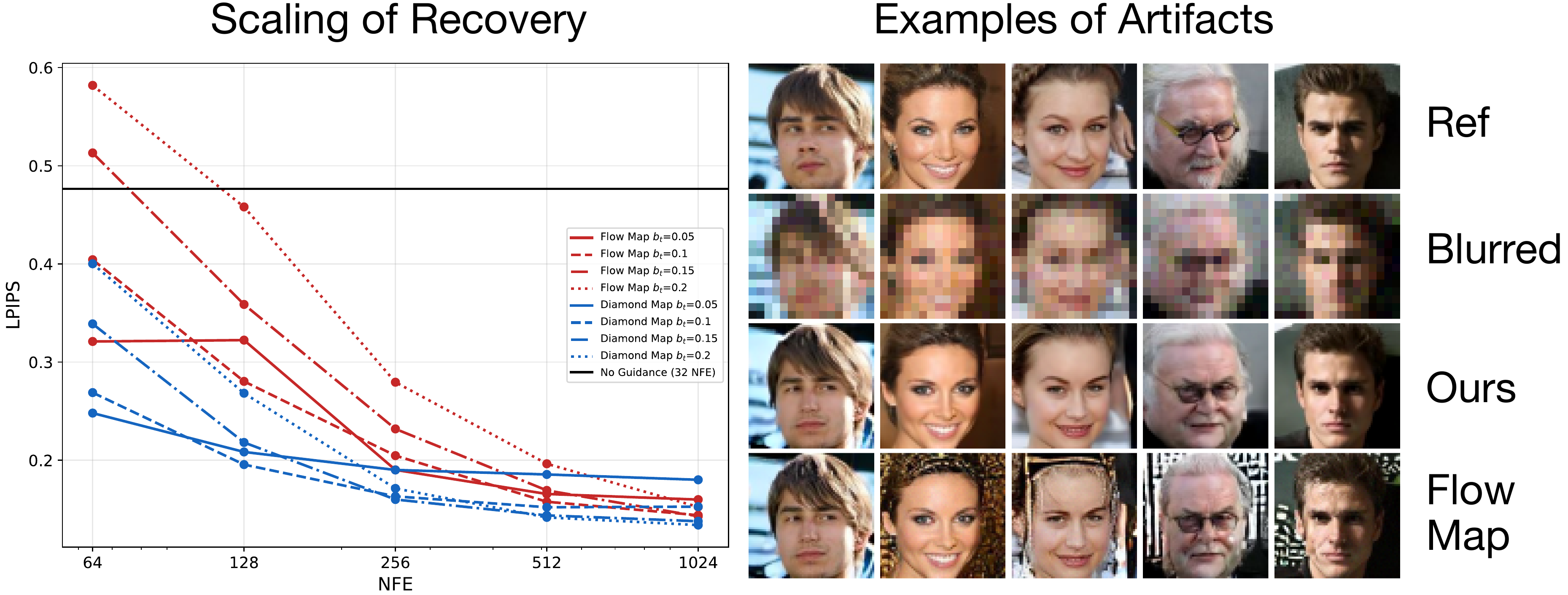}
\caption{\label{fig:gaussian_deblurring} Applying Posterior Diamond Maps (see \cref{subsec:training_posterior_diamond_maps}) to super resolution for various reward scales $b_t$. Posterior Diamond Maps has a better Pareto frontier of optimal results for optimal compute. The reason for that is that we find is that Posterior Diamond Maps shows significantly higher robustness towards high reward scales, likely caused by stochasticity, compared to guidance with a naive flow map approximation.}
\end{figure}

\begin{figure}[htbp]
  \centering
\includegraphics[width=0.7\columnwidth]{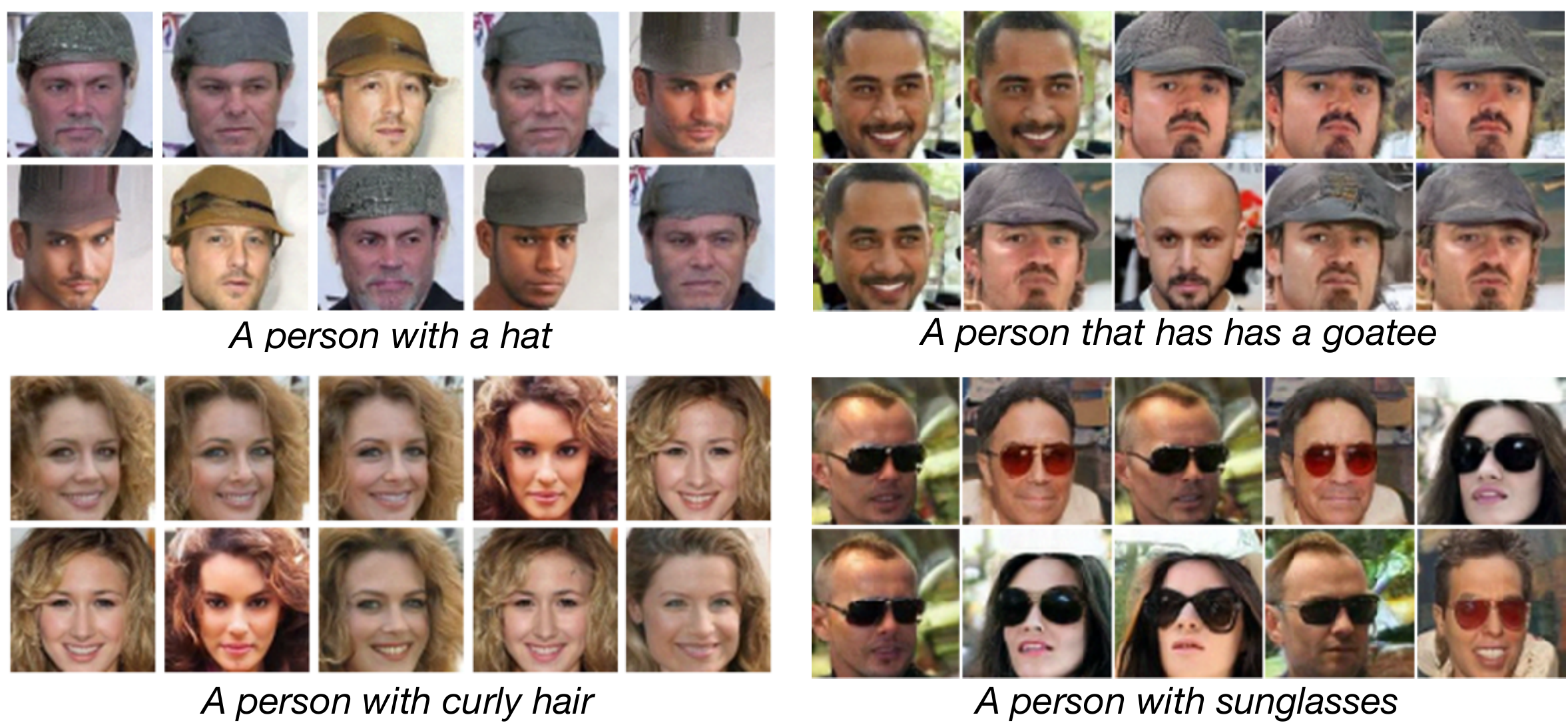}
\caption{\label{fig:smc_figure1} Qualitative examples of CLIPScore reward alignment with Sequential Monte Carlo using Posterior Diamond Maps for CelebA64.}
\end{figure}

\begin{figure}[htbp]
  \centering
\includegraphics[width=0.7\columnwidth]{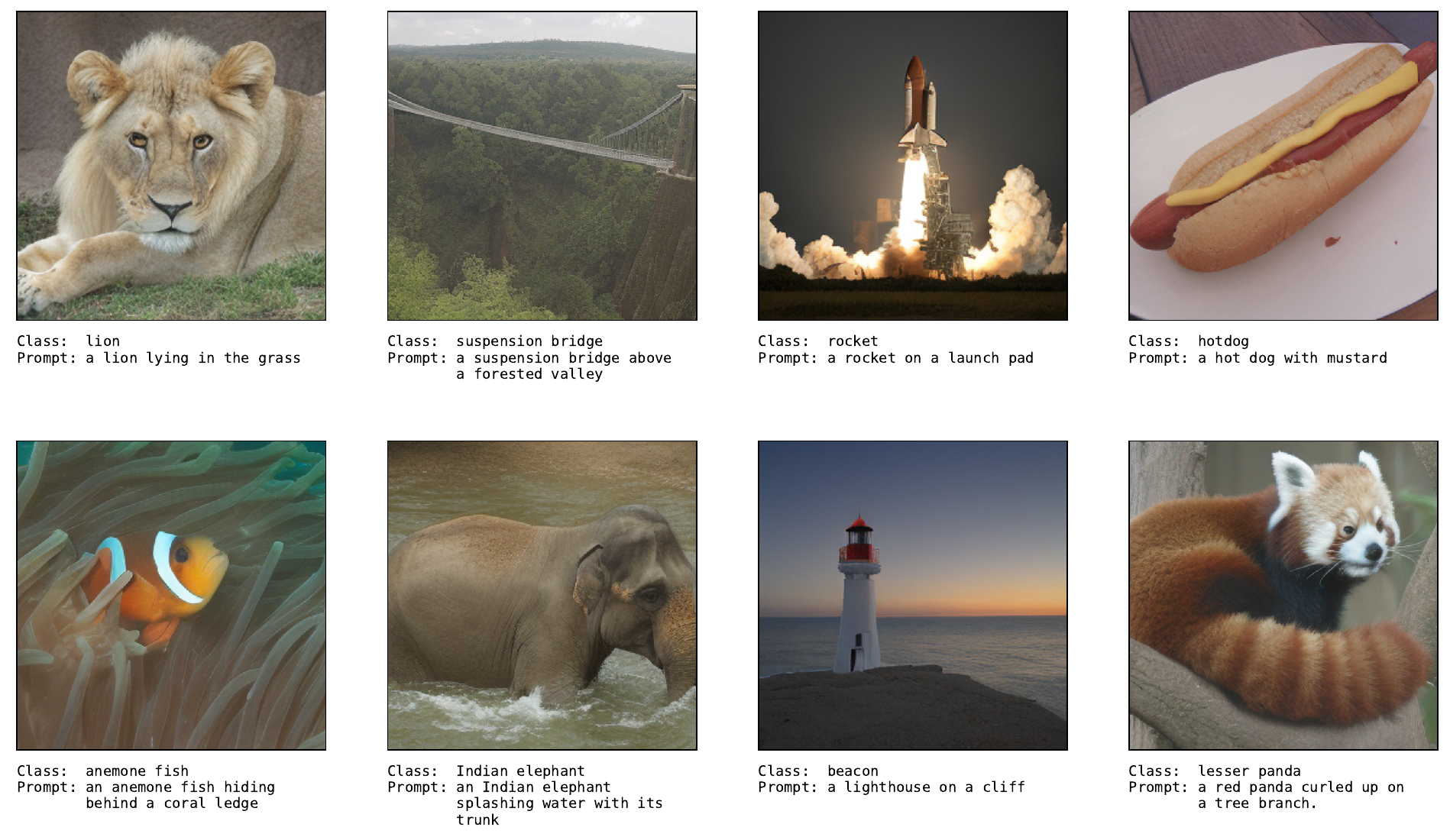}
\caption{\label{fig:smc_figure} Qualitative examples of ImageReward-reward alignment with Sequential Monte Carlo with Posterior Diamond Maps. Model trained on ImageNet1k. We extend class-conditioning to conditioning with simple prompts.}
\end{figure}

\begin{figure}[htbp]
\centering
\includegraphics[width=0.4\linewidth]{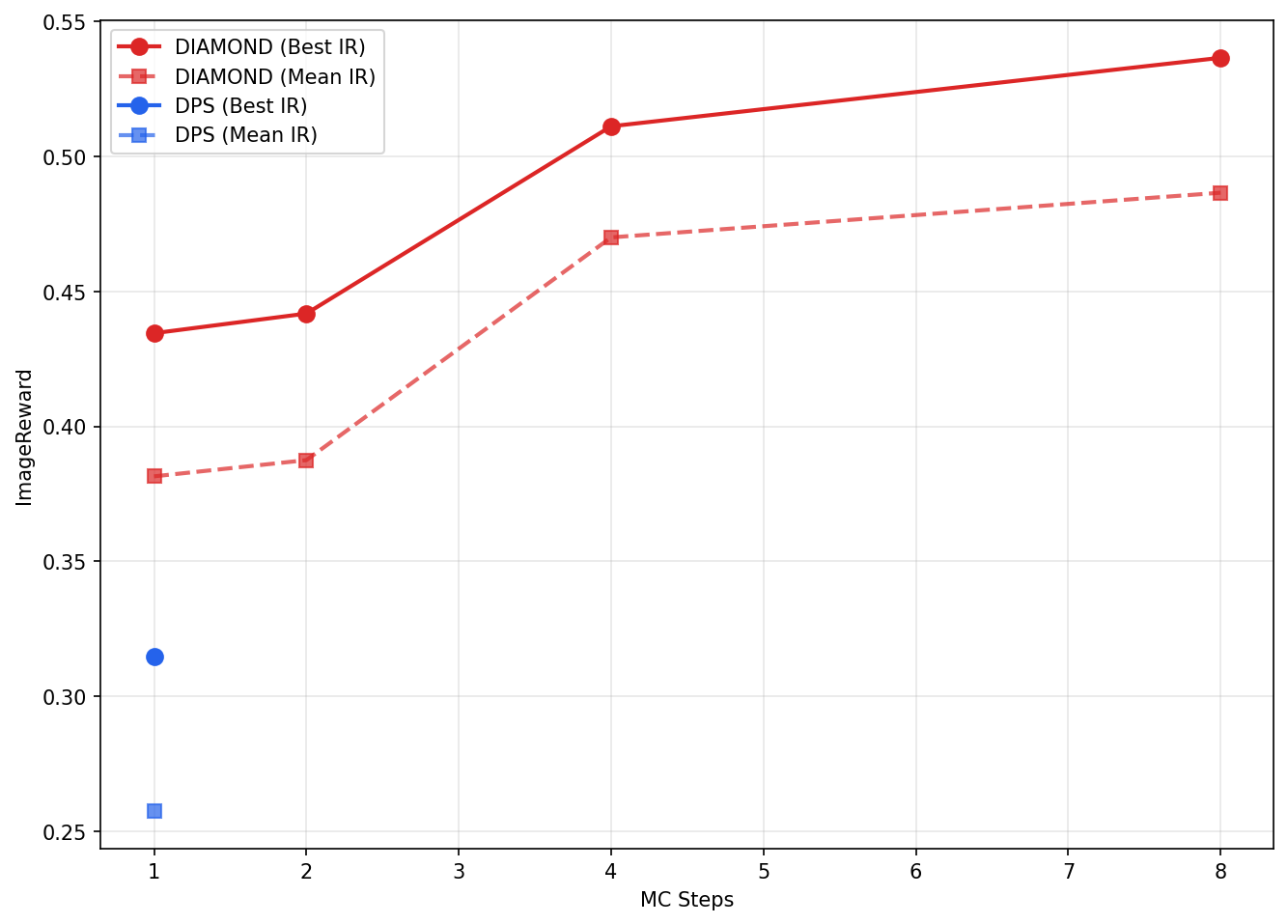}
\caption{\textbf{Posterior Diamond Map scaling with Monte Carlo (MC) samples}. We scale MC samples for Sequential Monte Carlo on ImageNet1k. We fix an approximate NFE budget of $1000$ and $6$ outer steps and ablate batch size $B$ and the number of MC samples accordingly. We see that increasing MC samples improves performance even at the cost of a lower batch size. This demonstrates MC samples as a viable axis of scaling outside of batch size. The exact parameters in the format of (MC, B, NFE) are as follows, DIAMOND: $(1, 10, 970)$, $(2, 9, 882)$, $(4, 8, 800)$, $(8, 7, 728)$; DPS: $(1, 10, 970)$.}
\label{fig:mc-vs-reward-o6}
\end{figure}

\begin{figure}
  \centering
\includegraphics[width=0.7\columnwidth]{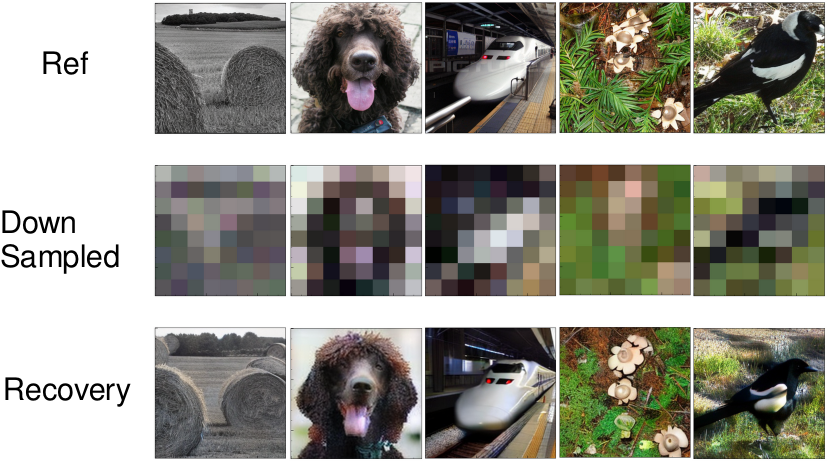}
\caption{Qualitative examples of super resolution 32x using the Posterior Diamond Map with guidance on the ImageNet1k validation set.}
\label{fig:imagenet_inverse_problem_visual}
\end{figure}

\subsubsection{ImageNet}

\textbf{Architecture} There are 3 relevant networks for ImageNet, all conditional networks based on SiT \cite{ma2024sit}: a teacher network, an initialization network, and the Posterior Diamond Map. The teacher network is a pretrained flow-matching network using the vanilla SiT architecture with time parameters $t$ and input $x_t$. It uses CFG at inference time, unlike the following models, and it's scale is decided by the target CFG scale for the Posterior Diamond Map. The initialization network is a pretrained MeanFlow flow map based on the SiT architecture with time parameters $t, t'$ and input $x_t$. The Posterior Diamond Map is a modified SiT architecture with time parameters $t, t', s, s'$, input $\bar{x}_s$, and conditioning variable $x_t$. Network parameters for all 3 networks can be found in \cref{tab:experimental_setup}.

\textbf{Experimental Setup} We use the same setup for time step sampling, schedules, and metric sampling as \cref{subsec:small_exp_setup}. For training the Posterior Diamond Map, we first initialize it off a pretrained flow map, then distill the GLASS velocity field \eqref{eq:glass_posterior_flow} from a teacher. We initialize a Posterior Diamond Map from a pretrained flow map by adding 2 additional time embeddings for $s, s'$ and an additional image embedding for $\bar{x}_s$. These embeddings are then mixed with the original time embeddings ($t, t'$) and image embeddings ($x_t$)  with separate residual MLPs. The training steps in \cref{tab:experimental_setup} refer to extra training steps and does not include the training steps associated with the initialization network. We use conditional networks for ImageNet, which allows us to use CFG \cite{cfg}. The Posterior Diamond Map formulation allows us to naturally distill from a GLASS velocity field with CFG, giving us 1-NFE CFG sampling. We use the sd-vae-ft-ema encoder for all ImageNet networks. We refer to \cite{ma2024sit} and \cite{geng2025mean} for training details of the teacher and initialization networks.

For the posterior samples, we compare the B/2 Posterior Diamond Map with CFG 4 to a flow SiT B/2 with CFG (at inference time) 4 and a flow map B/2 with CFG 2. We choose to use the B/2 model size because of limited compute resources and well trained checkpoints for the B/2 model size.

\subsubsection{Guidance} \label{subsec:guidance}
For CelebA-64, we use the diagonal of the flow map for our base drift. For ImageNet, we use a flow SiT XL/2 network with CFG 4 (at inference time).
There are 2 main parameters for guidance: the guidance scale $b_t$ and the number of  samples to use for estimation $K_t$ at each timestep. For CelebA-64, we ablate over a constant $b_t \in \{0.05, 0.1, 0.15, 0.2\}$. For ImageNet, we normalize the guidance term to the norm of the base drift up to a parameter $b_t$:
$$u^r_t = u_t(x) + b_t \cdot \frac{\lVert u_t(x) \rVert}{\lVert \nabla_{x_t} V_t(x_t) \rVert} \cdot \nabla_{x_t} V_t(x_t)$$
We find in practice that this stabilizes the guidance process. For $K_t$, we use a constant 1 Monte Carlo (MC) posterior samples for flow, flow map. For Diamond posterior sampling, we ablate over various MC sample schedules, such as a front loaded schedule with a majority of MC samples concentrated at the beginning of guidance. We keep MC samples greater than 0 at every time step for all ablations. For metric results, we take the best result over the ablated parameters.

\textbf{NFE} Given $K$ base drift steps, $N_k$ MC samples at each step, and $M$, the posterior sample cost, we have the following NFE for a single sample:
\[
\text{NFE}_{\text{guidance}}
= \underbrace{(1 + \mathds{1}_{\text{cfg}})K}_{\text{base cost}}
+ \underbrace{\sum_k 3M N_k}_{\text{MC cost}}
\]
For CelebA-64, we have a single network size, so we establish a forward pass of the EDM2 network as 1 NFE. For ImageNet, we use different network sizes, and we establish a forward pass of the XL network (largest) as 1 NFE. We establish the NFE values of other networks as a linear scaling with respect to the number of parameters using the XL/2 network as the baseline. For example, a B/2 network is treated as $1/5$ NFE. The base cost is a single forward pass over the flow network (2 forward passes if using CFG). The MC cost is multiplied by $3$ to account for a forward pass for the posterior sample and a backward pass to estimate the gradient (approximately 2 forward passes). $M = 2$ for DPS for inference time CFG and $M = 1$ otherwise.

\textbf{Inverse Problems} We evaluate guidance using Posterior Diamond Map on noisy linear inverse problems. We follow the same formula as \cite{chung2022diffusion}. For CelebA-64, we use super resolution 4x with mean downsampling. We calculate metrics over 1000 samples. Refer to \cref{fig:gaussian_deblurring} for the experimental results. For ImageNet, we use super resolution 32x with bicubic downsampling. Both settings use Gaussian noise with $0.05^2$ variance. We ablate over $b_t \in \{1, 2, 4, 8, 10\}$. We calculate metrics over 64 samples in the ImageNet1k validation set. Refer to \cref{fig:imagenet_lpips}, \cref{fig:imagenet_inverse_problem_visual}, and \cref{tab:inverse_problem_lpips_kid} for experimental results.

\textbf{Prompt Alignment} We evaluate guidance using Posterior Diamond Map on prompt alignment problems only for ImageNet. We use 2 rewards, ImageReward \cite{xu2023imagereward} and CLIPScore \cite{hessel2021clipscore}, optimizing one and evaluating the other as a held out reward. For visualizations, we also use a composite reward \cite{eyring2024reno}. We ablate over $b_t \in \{1, 1.5, 2, 3, 5, 8\}$. We calculate metrics over 30 prompts, generating 8 samples each for \cref{fig:imagenet_prompt_alignment}. For qualitative examples, refer to \cref{fig:qualitative_examples_imagenet_reward_guidance}.

\FloatBarrier
\begin{figure}[htbp]
  \centering
\includegraphics[width=0.8\columnwidth]{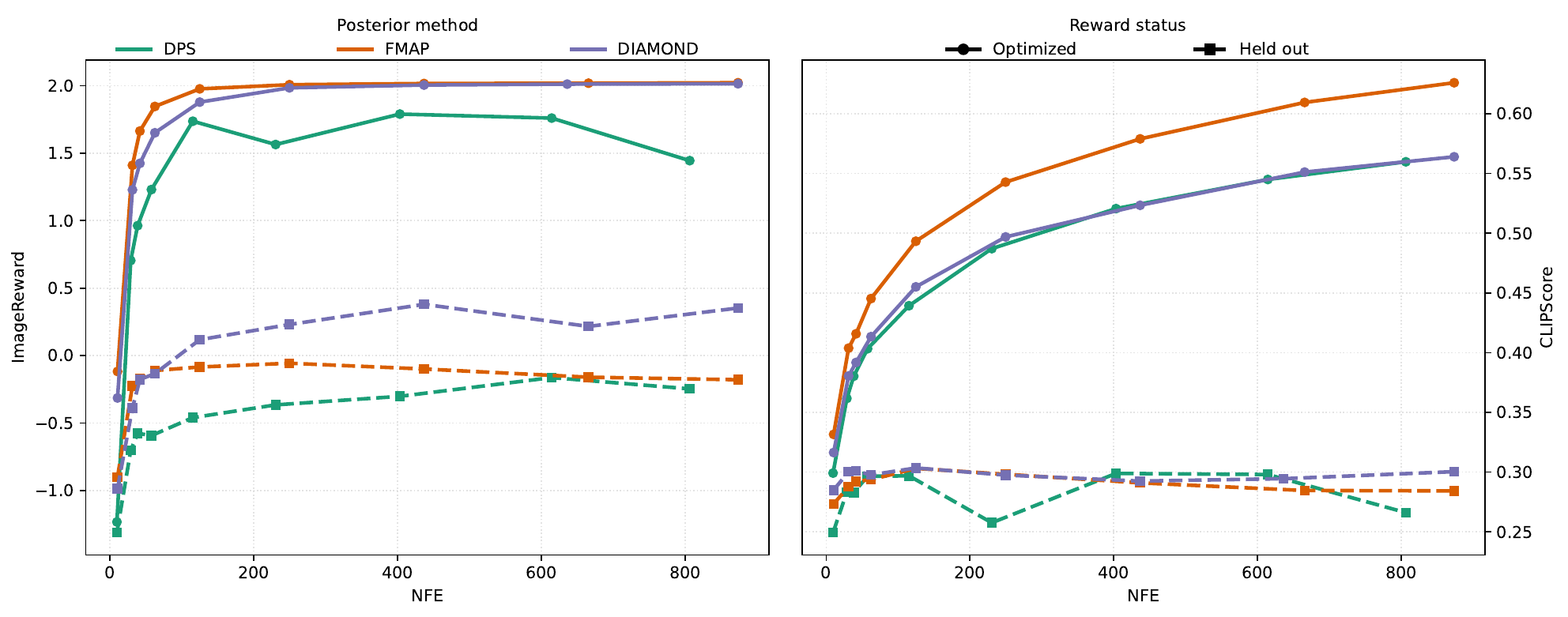}
\caption{\label{fig:imagenet_prompt_alignment} Applying Posterior Diamond Maps for prompt alignment (see \cref{subsec:guidance}). We find that the Posterior Diamond Map is less prone to reward hacking (measured via a held out reward), while exhibiting similar performance for the optimized reward. We may expect the gap to come from weaker reward optimization from the higher FID samples from the Posterior Diamond Map. However, DPS has worse FID compared to the Posterior Diamond Map yet performs worse on the held-out reward compared to the flow map. This suggests the improvement comes from better posterior samples rather than simply weaker optimization.}
\end{figure}

\subsubsection{SMC}
For CelebA-64, we use the Posterior Diamond Map as our base drift. For ImageNet, we use GLASS on top of the same flow SiT XL/2 network with CFG 4 (at inference time) as in \cref{subsec:guidance} as our base drift. Using GLASS for ImageNet allows us to isolate the effect of posterior samples without noise from the base transitions. 

SMC has three main parameters: the number of Monte Carlo (MC) samples $K$ used to estimate the reward-weighted posterior, the softmax temperature $\tau$, and (for ImageNet) the maximum guidance time $t_{\max}$ after which guidance is disabled. We resample at every outer step (including the final step) with ESS threshold $1.0$, and share noise across particles. We choose to use a constant number of MC samples across time for SMC unlike guidance, but using a variable number is possible. We ablate at a fixed total NFE budget. NFE is computed as
$$\text{NFE}_{\text{SMC}} = B \cdot \big(\underbrace{(1 + \mathds{1}_{\text{cfg}}) T S}_{\text{base cost}} + \underbrace{K (T - 1) M}_{\text{MC cost}} \big)$$
where $B$ is the number of particles used (batch size), $T$ is the number of outer denoising steps, $S$ is the number of inner refinement steps per outer step, and $K$ is the constant number of MC samples. The $TS$ term accounts for the forward passes we need for the base transitions (2 if using CFG). $K(T-1)M$ accounts for posterior samples at each non-terminal outer step, we use $M = 1/5$ for ImageNet1k similar to \cref{subsec:guidance} and $M = 1$ for CelebA-64. For CelebA-64, we a single inner step ($S = 1$) for the Posterior Diamond Map as the base drift. The batch size $B$ is chosen so that the total NFE satisfies the target budget.

\textbf{Prompt Alignment.} We evaluate SMC on prompt alignment for both datasets. For CelebA-64, we provide qualitative results optimizing for CLIPScore \cite{hessel2021clipscore} in \cref{fig:smc_figure}. For ImageNet, we optimize ImageReward \cite{xu2023imagereward}, and ablate over $K \in {1, 2, 4, 8}$, with $B$ chosen so total NFE matches the budget, comparing against a DPS baseline. We evaluated over 50 prompts, and evalute among all particles, measuring both the mean ImageReward value and best ImageReward value among the terminal particles \cref{fig:mc-vs-reward-o6}.



\begin{figure}[H]
  \centering
\includegraphics[width=0.49\columnwidth]{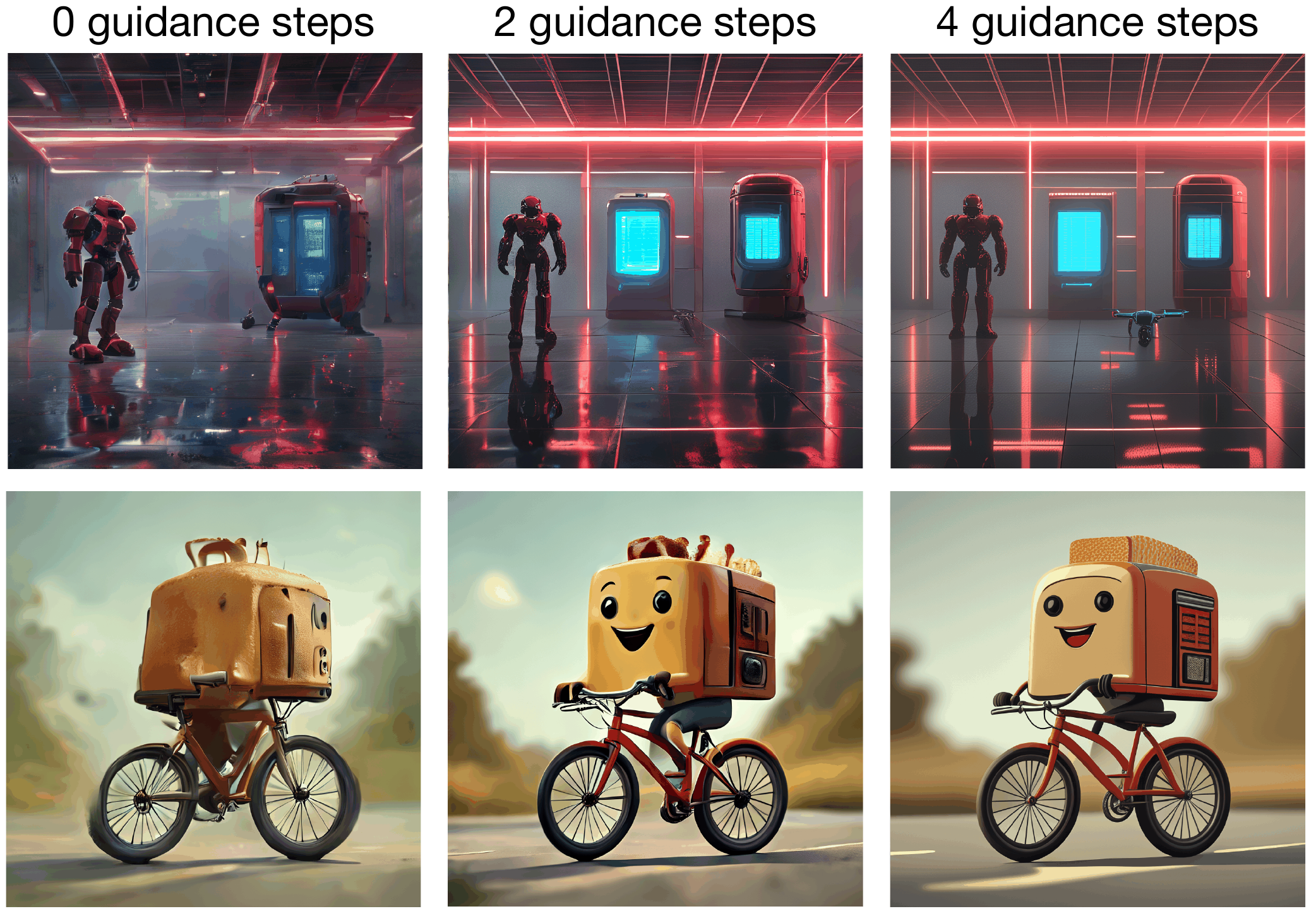}
  \caption{\label{fig:guidance_weighted_diamond_map}Illustration of guidance with Weighted Diamond Maps on SANA-Sprint. Trajectories of guidance (see \cref{alg:guidance_with_weighted_diamond_maps}) are plotted after $N$ steps ($x_1$-prediction). Top: Prompt is ``a matte red mech suit under dramatic key lights, a sleek electric blue drone hovering nearby, and a cylindrical maintenance pod with glowing panels''. Note: The drone only appears after 4 guidance steps. Middle: "A toaster riding a bike". Guidance removed artifacts and increases adherence to prompt.}
\end{figure}

\subsection{Weighted Diamond Maps}
\label{app:sana-sprint-details}

\paragraph{Hyperparameters.} Weighted Diamond Maps has four main hyperparameters: the number of particles~$K$, the number of guidance steps~$N$, the renoising time~$t' < t$, and the time interval over which guidance is applied. We set~$b_t$ according to $b_t = \sigma_t^2 \frac{\dot{\alpha}_t}{\alpha_t} - \dot{\sigma}_t \sigma_t$.

For selecting the renoising time~$t' < t$, we propose tying it to the signal-to-noise ratio. Letting $g(t) = \sigma_t^2 / \alpha_t^2$, we set $t' = g^{-1}\!\left(\lambda \cdot g(t)\right)$, so that the SNR at~$t'$ is a factor of~$\lambda$ larger than at~$t$. This reduces the renoising schedule to a single interpretable hyperparameter~$\lambda$.
 
\subsubsection{Normalization of Guidance Terms}
In principle, \cref{prop:weighted_diamond_flow_estimator} provides analytically derived weights for combining the score, likelihood, and reward gradient terms. However, in practice, these terms exhibit vastly different magnitudes: the score and likelihood gradients scale with the data dimensionality~$d$, making them orders of magnitude larger than the reward gradient. Directly applying the analytical weights therefore leads to the reward signal being overwhelmed, resulting in negligible guidance effect.

To address this, we drop the analytical weights entirely and instead normalize each gradient term independently to unit norm before combining them. Concretely, let $g_{\text{score}}$, $g_{\text{likelihood}}$, and $g_{\text{reward}}$ denote the respective gradient contributions. We replace each with its unit-normalized counterpart:
\begin{equation}
    \hat{g}_i = \frac{g_i}{\|g_i\|}, \quad i \in \{\text{score}, \text{likelihood}, \text{reward}\}.
\end{equation}
This places all terms on equal footing regardless of their original magnitudes. The combined guidance signal is then scaled by a single shared hyperparameter $\lambda$:
\begin{equation}
    \nabla_{x_t} V_t(x_t) \approx \lambda \sum_{i} \hat{g}_i.
\end{equation}
We fix $\lambda = 20$ for all text-to-image experiments on SANA-Sprint and the Flux Flow Map. We observe that increasing $\lambda$ significantly beyond this value can cause the guidance term to dominate the base velocity, leading to visual artifacts in the generated images.

\subsubsection{Flux Flow Map}
\paragraph{Model and inference.} We use the publicly available flow-map distillation of FLUX.1-dev (\texttt{gabeguofanclub/flux-1-dev-flowmap-lsd}). The base trajectory uses $n = 25$ ODE steps with step size $h = 1/n$. The model is parameterized as a flow map $X_{t,t'}^\theta(x_t)$ that accepts a source time~$t$ and a target time~$t'$. The base ODE step is performed by setting $t' = t + h$ using $X_{t,\,t+h}^\theta(x_t)$ to jump to the next step. To obtain the clean-endpoint prediction needed for reward evaluation, we set $t' = 1$, giving $\hat{x}_1 = X_{t,1}^\theta(x_t)$. This means the FLUX Flow Map needs two inference steps per Monte Carlo sample and guidance step.

\paragraph{Hyperparameters.} We ablate the main hyperparameters for the FLUX Flow Map in \cref{fig:flux_ablation}. Generally, Diamond Map scales well with more guidance steps, and a moderate SNR $\in [1.1, 1.5]$ seems to give the most robust result. We find the best compute-tradeoff for GenEval to be with SNR=$1.5$ and $g=10,n=25$.

\subsubsection{SANA-Sprint Ablations}

\paragraph{SANA-Sprint as a Flow Map.} SANA-Sprint is a consistency-distilled model designed for few-step generation that directly predicts the clean endpoint~$\hat{x}_1$ from a noisy input~$x_t$. Weighted Diamond Maps require access to both the endpoint prediction~$\hat{x}_1(x_t, t)$ and a velocity estimate~$v_t(x_t)$ at each inference step. Since SANA-Sprint already provides~$\hat{x}_1$, we recover the velocity by applying Tweedie's formula to the flow map prediction at~$t=1$:
\begin{equation}
    v_t(x_t) = \frac{\hat{x}_1(x_t, t) - x_t}{1 - t}.
\end{equation}
This requires no retraining or architectural changes---we simply reinterpret the existing flow map output to extract both quantities needed by our framework. With the endpoint estimate and derived velocity in hand, we then apply Weighted Diamond Maps. Note, that this means for SANA-Sprint we only require $n + g * mc$ NFE, where $g$ is the guidance steps and $mc$ the amount of Monte Carlo samples.

\begin{table}[H]
\centering
\caption{\textbf{Ablation of SNR and Particle Scaling}. Mean GenEval scores using 5 guidance steps ($0.05 \to 0.25$). Total $\text{NFE} = 35$. The baseline configuration is highlighted.}
\label{tab:snr_particle_ablation}
\setlength{\tabcolsep}{8pt}
\begin{tabular}{@{}lccccc@{}}
\toprule
\multirow{2}{*}{\textbf{SNR ($\lambda$)}} & \multicolumn{5}{c}{\textbf{Particles ($P$)}} \\
\cmidrule(lr){2-6}
& \textbf{1} & \textbf{2} & \textbf{4 (Default)} & \textbf{16} & \textbf{32} \\
\midrule
2.0  & 0.7775 & 0.8177 & 0.8202 & 0.8182 & 0.8207 \\
5.0  & 0.7821 & 0.8118 & 0.8136 & 0.8089 & 0.8111 \\
10.0 & 0.7896 & 0.8155 & 0.8067 & 0.8210 & 0.8223 \\
\rowcolor[HTML]{F5FFFA}
\textbf{20.0} & 0.8008 & 0.8095 & \textbf{0.8272} & 0.8239 & 0.8285 \\
\bottomrule
\end{tabular}
\end{table}

\paragraph{SNR and Number of Particles.} We first ablate the SNR together with the number of particles while keeping guidance steps fixed to $5$ in time-frame $[0.05, 0.25]$. As seen in \cref{tab:snr_particle_ablation}, we find that while using a few particles $K>2$ gives significant improvements, scaling the number of particles up, does not result in additional improvements.

\begin{table}[H]
\centering
\caption{\textbf{Guidance Strategy Ablation}. Comparison of step counts and time horizons at SNR 20.0 and 4 Particles.}
\label{tab:guidance_strategy}
\begin{tabular}{@{}lccc@{}}
\toprule
\textbf{Guidance Range} & \textbf{Steps ($N$)} & \textbf{NFE} & \textbf{Mean $\uparrow$} \\
\midrule
\rowcolor[HTML]{F5FFFA}
\textbf{0.05 $\to$ 0.25 (Default)} & \textbf{5} & \textbf{30} & \textbf{0.8272} \\
0.05 $\to$ 0.45 & 5  & 30  & 0.8163 \\
0.05 $\to$ 0.25 & 10 & 60  & 0.8229 \\
0.05 $\to$ 0.45 & 10 & 60  & 0.8284 \\
\bottomrule
\end{tabular}
\end{table}

\paragraph{Number of guidance steps.} Next we ablate the number of guidance steps. As visualized in \cref{fig:pareto_frontier}, we do see significant improvements when scaling up the number of guidance steps with diminishing returns after scaling to $N>10$. Thus, we recommend keeping particles fixed to a range of [4, 8] while altering the number of guidance steps depending on available compute. For the time-frame, we find that $[0.05, 0.25]$ provides the better performance compared to longer time-frames with the same amount of steps, e.g. $[0.05, 0.45]$(\cref{tab:guidance_strategy}).

 \begin{figure}[H]
  \centering
\includegraphics[width=\columnwidth]{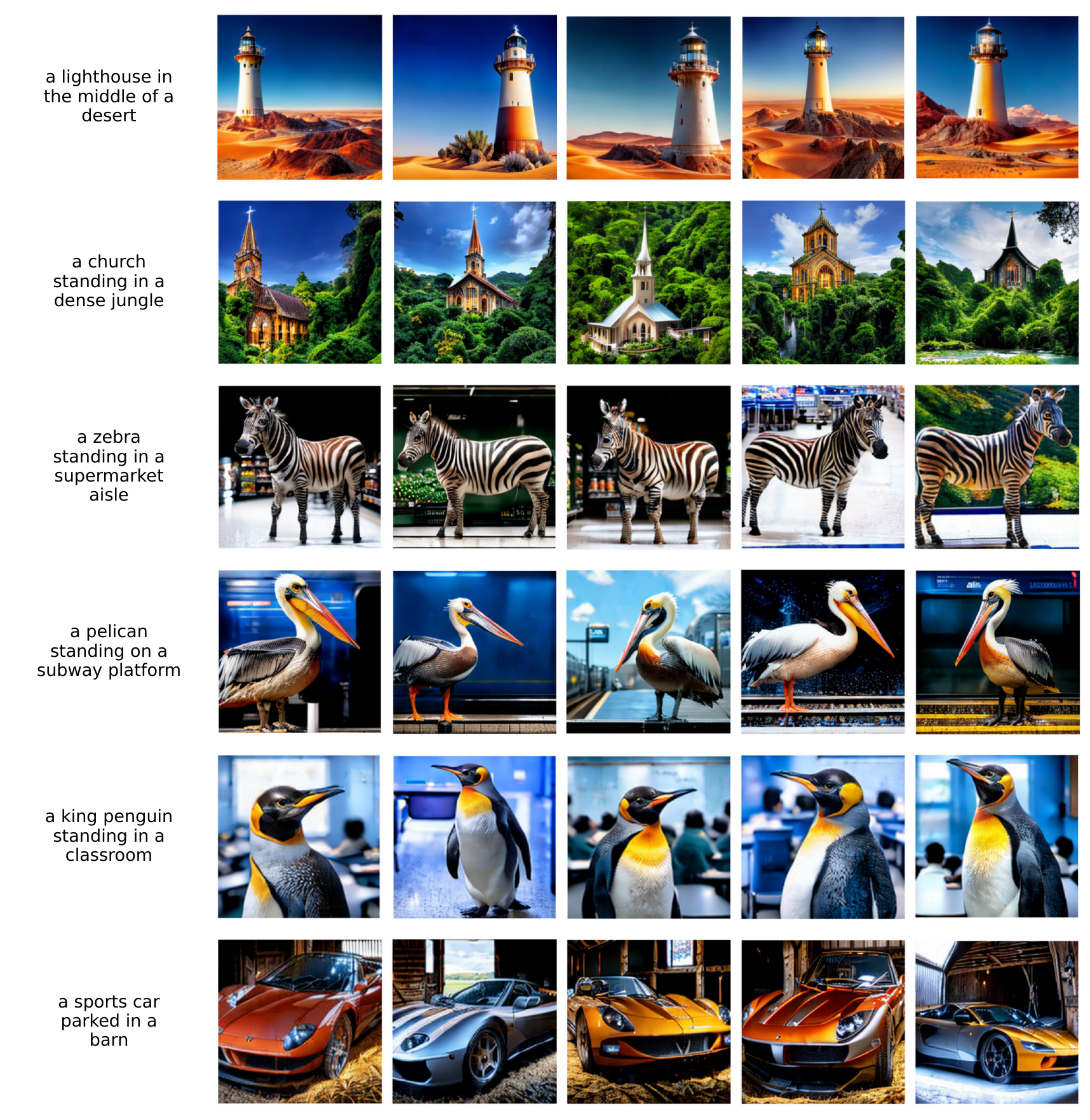}
\caption{\label{fig:qualitative_examples_imagenet_reward_guidance} Making a class-conditional ImageNet model text-conditioned. Qualitative examples of ImageReward-reward alignment with guidance with Posterior Diamond Maps. Model trained on ImageNet1k (class-conditional). We extend class-conditioning to conditioning with simple prompts.}
\end{figure}

\end{document}